\documentclass[11pt, a4paper, logo, copyright]{googledeepmind}

\usepackage[authoryear, sort&compress, round]{natbib}
\usepackage[]{mdframed}
\usepackage{anyfontsize}
\usepackage{dblfloatfix}

\usepackage{amsmath,amsfonts,bm}
\usepackage{multirow}
\usepackage{subcaption}
\usepackage{wrapfig}

\usepackage{bbm}

\def\eqref#1{equation~\ref{#1}}

\def\1{\bm{1}}

\def\vu{{\bm{u}}}

\def\vx{{\bm{x}}}
\def\vy{{\bm{y}}}

\def\mI{{\bm{I}}}

\DeclareMathAlphabet{\mathsfit}{\encodingdefault}{\sfdefault}{m}{sl}
\SetMathAlphabet{\mathsfit}{bold}{\encodingdefault}{\sfdefault}{bx}{n}

\newcommand{\E}{\mathbb{E}}
\newcommand{\V}{\mathbb{V}}

\newcommand{\R}{\mathbb{R}}

\DeclareMathOperator*{\argmax}{arg\,max}

\newtheorem{thm}{Theorem}%

\newtheorem{assumption}[thm]{Assumption}

\def\N{{\mathcal{N}}}
\def\GP{\mathcal{GP}}

\def\d{{\,\rm d}}
\newcommand{\T}{^{\textrm T}} %

\usepackage{listings}
\usepackage{hyperref}
\usepackage{url}
\usepackage{graphicx}
\usepackage{amsmath} %
\usepackage{mathrsfs} %
\usepackage{etoolbox}
\usepackage{cleveref}
\usepackage{tcolorbox}
\usepackage{tabularx}
\usepackage{colortbl}
\usepackage{booktabs}       %
\usepackage{amsfonts}       %
\usepackage{nicefrac}       %
\usepackage{microtype}      %
\usepackage{subcaption}
\usepackage{algorithm}
\usepackage{algorithmic}
\usepackage{multirow}
\usepackage{subcaption}
\usepackage{ragged2e}
\usepackage{enumitem}%
\setlist[itemize]{noitemsep, topsep=0pt}
\usepackage{enumitem,kantlipsum}
\usepackage{tabularx}
\usepackage{ragged2e} %
\usepackage{booktabs} %
\usepackage{xr} 
\usepackage{xcolor}
\newlength\savewidth

\definecolor{baselinecolor}{HTML}{d6eaf8}

\definecolor{mygray}{gray}{0.4}

\definecolor{darkgreen}{rgb}{0, 0.5, 0}

\AtBeginEnvironment{tcolorbox}{\tiny}

\newcount\Comments  %
\usepackage{color}
\definecolor{darkred}{rgb}{0.9,0,0}
\definecolor{darkgreen}{rgb}{0,0.5,0}
\definecolor{darkblue}{rgb}{0,0,0.7}
\definecolor{purple}{rgb}{.6, 0,.6}
\definecolor{orange}{rgb}{1.0,0.64,0}
\newcommand{\kibitz}[2]{\ifnum\Comments=1\textcolor{#1}{#2}\fi}

\title{ProEval: Proactive Failure Discovery and Efficient Performance Estimation for Generative AI Evaluation}
\correspondingauthor{Yizheng Huang (yizhengh@google.com) and Zi Wang (wangzi@google.com). A shorter version of this
work has been published in International Conference on Machine Learning (ICML), 2026.}

\reportnumber{001} %

\renewcommand{\today}{2026-05}

\author[*,1]{Yizheng Huang}
\author[1]{Wenjun Zeng}
\author[1]{Aditi Kumaresan}
\author[*,1]{Zi Wang}

\affil[1]{Google DeepMind}
\affil[*]{Co-leads}

\newif\ifonecolumn
\onecolumntrue %
\begin{abstract}
\vspace{-1em}

Evaluating generative AI models is increasingly resource-intensive due to slow inference, expensive raters, and a rapidly growing landscape of models and benchmarks. We propose \textsc{ProEval}, a proactive evaluation framework that leverages transfer learning to efficiently estimate performance and identify failure cases. \textsc{ProEval} employs pre-trained Gaussian Processes (GPs) as surrogates for the performance score function, mapping model inputs to metrics such as the severity of errors or safety violations. By framing performance estimation as Bayesian quadrature (BQ) and failure discovery as superlevel set sampling, we develop uncertainty-aware decision strategies that actively select or synthesize highly informative inputs for testing. Theoretically, we prove that our pre-trained GP-based BQ estimator is unbiased and bounded. Empirically, extensive experiments on reasoning, safety alignment, and classification benchmarks demonstrate that \textsc{ProEval} is significantly more efficient than competitive baselines. It requires 8–65x fewer samples to achieve estimates within $\pm1\%$ of the ground truth, while simultaneously revealing more diverse failure cases under a stricter evaluation budget. Our open-sourced code and data can be found at \url{https://github.com/google-deepmind/proeval}.

\end{abstract}

\crefalias{thm}{theorem}
\begin{document}
\maketitle

\section{Introduction}

\ifonecolumn
    \begin{wrapfigure}{r}{0.6\textwidth}
\vspace{-2em}
\includegraphics[width=\linewidth]{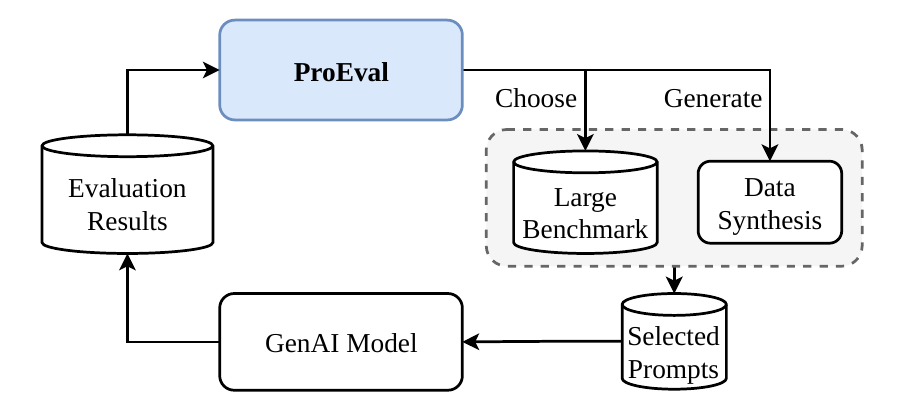}
    \caption{Overview of the \textsc{ProEval} framework. It acts as an active selector that continuously updates itself based on past evaluation results. To test the GenAI model efficiently, it dynamically builds a set of selected prompts by either choosing test examples from a large benchmark or performing data synthesis to discover specific model failures.}
    \label{fig:ae_overview}
\end{wrapfigure}

\else
    \begin{figure}[ht]
    \centering
    \includegraphics[width=\columnwidth]{figures/ae_overview.pdf}
    \caption{Overview of the \textsc{ProEval} framework. It acts as an active selector that continuously updates itself based on past evaluation results. To test the GenAI model efficiently, it dynamically builds a set of selected prompts by either choosing test examples from a large benchmark or performing data synthesis to discover specific model failures.}
    \label{fig:ae_overview}
\end{figure}

\fi
\vspace{-2pt}

Evaluation of modern generative AI models has become a critical backbone for model selection~\citep{chen2023frugalgpt}, assurance of safety and fairness~\citep{aroyo2023dices, zhang2024safetybench}, discovery of capabilities~\citep{srivastava2023beyond}, measurement of intelligence~\citep{phan2025humanity}, and more. As these models are increasingly deployed in high-stakes environments, the demand for rigorous assessment has driven a proliferation of new benchmarks~\citep{neurips_2025_blog} and evaluation strategies~\citep{zheng2023judging, zhang2024comprehensive, guan2025evaluating}.

However, the current paradigm of evaluation is becoming unsustainable. Unlike traditional machine learning, where inference is rapid and ground-truth labels are static, evaluating generative models is resource-intensive. Generating a single output can take seconds, and reliable assessment often requires costly human raters or expensive LLM-based ``judges''. This computational burden is exacerbated by the sheer scale of modern benchmarks. Running comprehensive evaluations across multiple models and datasets can cost thousands of dollars and require days of compute time. Moreover, during GenAI model development and quality iteration, development teams must frequently run numerous evaluation tasks against the same benchmarks, even for the most minor changes at different scales. Consequently, researchers and practitioners often resort to downsampling test data~\citep{kossen2021active, kipnis2024metabench}, which may yield less accurate estimates and fail to uncover rare but critical failure cases.

To address these inefficiencies, we introduce \textsc{ProEval}, a sample-efficient framework combining transfer learning and Bayesian modeling to accurately estimate performance and proactively identify failure cases. \Cref{fig:ae_overview} illustrates an overview. At its core, \textsc{ProEval} treats the performance score of a target model as an unknown function $f$ that maps an input $x$ to a metric, such as the severity of an error or unsafe response. Instead of learning this function from scratch, we employ transfer learning to construct a highly informed Gaussian process (GP) prior, and we design active selection and synthesis strategies to strategically target which inputs to evaluate.

To construct the GP prior, \textsc{ProEval} exploits the structural correlations in model performance. For standard benchmarks, we derive the GP's covariance directly from historical evaluation results on other models, capturing how performance on one input is predictive of another. For new domains or modalities, we construct an encoder based on off-the-shelf pre-trained  embeddings models (e.g., for text or image) to generate input features for the GP mean and kernel functions. We further pre-train the encoder parameters and the GP hyperparameters to fully capture the underlying performance correlations. This allows the GP surrogate to generalize based on semantic or visual similarity, and infer that a new input is likely to fail because it resembles known failure cases without performing an additional evaluation.

Building on this informed prior, \textsc{ProEval} constructs active data acquisition strategies and addresses two synergistic objectives within a single probabilistic framework: \textit{performance estimation} to obtain an aggregate assessment, and \textit{failure discovery} to zoom in on the specific input regions responsible for performance degradation.

First, \textsc{ProEval} formulates performance estimation as Bayesian quadrature (BQ). Instead of simple Monte Carlo averaging, BQ treats the integral $\int f(x) p(x) \d x$ as a random variable derived from the GP posterior. This allows us to analytically compute the variance of the performance estimate. By actively selecting data points that minimize this posterior variance, \textsc{ProEval} achieves high sample efficiency, reaching the true performance metric with far fewer samples than competitive baselines.

Simultaneously, \textsc{ProEval} tackles failure case discovery via superlevel set sampling, aiming to characterize the region $\{x \mid f(x) \geq \lambda\}$ where the model likely fails. To do this efficiently, we design an acquisition function that balances exploitation (targeting inputs where the predicted failure probability is high) with exploration (targeting inputs with high epistemic uncertainty). This ensures we identify failures that are both severe and diverse.

We further enhance failure discovery by moving beyond static datasets to active synthesis of new data. We employ an LLM-based generative approach that uses identified ``hard'' examples as in-context anchors to generate new, more challenging inputs. To prevent the generator from collapsing into a single failure mode, we introduce a bi-level sampling strategy. This approach first samples high-level semantic topics before generating specific test cases, forcing the system to uncover diverse failure patterns across the model's capabilities.

Theoretically, we prove that our BQ estimator based on pre-trained GPs is unbiased and bounded under mild assumptions. Empirically, \textsc{ProEval} demonstrates compelling efficiency in performance estimation, often reaches within 1\% estimation error with only 1 to 27 evaluated inputs, significantly surpassing the performance of competitive baselines. For failure discovery, \textsc{ProEval} achieves about 2-5x higher failure detection rates and better diversity in the semantic space, compared to other LLM-based generation methods. Our extensive ablation studies show the critical roles played by our transfer learning approach, active sampling algorithms and LLM-enhanced failure discovery.

Our contributions are as follows: 
\begin{enumerate}
    \item A unified evaluation framework that grounds performance estimation in Bayesian quadrature and failure discovery in superlevel set sampling.
    \item A flexible transfer learning mechanism for evaluation that constructs informed priors from either direct historical statistics (within-benchmark) or semantic embeddings (cross-benchmark).
    \item Active sampling strategies tailored for the two objectives: an acquisition function minimizing posterior variance for sample-efficient performance estimation, and an exploration-exploitation balanced acquisition function for identifying failure regions.
    \item A hierarchical, topic-aware synthesis algorithm that leverages LLM-based in-context generation anchored on identified hard examples to actively synthesize and discover diverse failure modes.
    \item The first theoretical proof that BQ with pre-trained GPs is unbiased and bounded without assuming a known GP prior.
    \item Comprehensive empirical results demonstrating the remarkable sample efficiency and effectiveness of \textsc{ProEval}.
\end{enumerate}

\textsc{ProEval} is a critical step towards efficient, effective, and economical evaluation of modern generative AI models. It accelerates the iteration cycle of GenAI development and provides deeper insights into model failures. 
\vspace{-1em}
\paragraph{Related work.} 
Detailed literature review can be found in~\S\ref{app:related}. 
Existing efficient evaluation methods typically focus on benchmark pruning to identify subsets \citep{polo2024tinybenchmarks, kipnis2024metabench, vivek-etal-2024-anchor} or active testing via surrogates \citep{kossen2021active, kossen2022active, berrada2025scalingactivetestinglarge}, but these often lack model-specific adaptability or suffer from high-variance sampling. Similarly, current failure discovery and red teaming approaches are frequently limited by a reliance on manual specifications, fine-tuning, or sample inefficiency \citep{perez2022red, chao2023pair, mehrotra2023tree, samvelyan2024rainbow}. \textsc{ProEval} departs from these disjointed methods by establishing a Bayesian framework with transfer learning that integrates performance estimation and failure discovery. By using a shared GP for both Bayesian quadrature and probabilistic superlevel set sampling, \textsc{ProEval} enables a unified, proactive, and sample-efficient approach to model evaluation.

\section{Our Framework: Proactive Evaluation}
We introduce \textsc{ProEval}, which frames \textit{performance estimation} and \textit{failure discovery} as dual Bayesian objectives (\S\ref{ssec:formulation}). The framework leverages transfer learning to construct strong GP priors (\S\ref{ssec:transfer}), enabling active sampling strategies %
for both estimation (\S\ref{ssec:active_bq}) and discovery (\S\ref{ssec:failure_discovery}).

\subsection{Bayesian Formulation of Evaluation}
\label{ssec:formulation}

We formalize model evaluation as a dual-objective problem over the input space $\mathcal X$. Let $f: \mathcal X\mapsto \R$ be the underlying performance score function (e.g., error severity), and $p(x)$ be the test distribution. We aim to use minimal evaluations of $f$ to estimate the global expected score $S$, and identify the superlevel set $\mathcal X_\lambda$ containing inputs where the model fails (score exceeds threshold $\lambda$). That is,
\begin{align} \label{eq:core_objectives}
    S = \int_\mathcal X f(x) p(x) \d x \quad \text{and} \quad \mathcal X_\lambda = \{x\mid f(x) \geq \lambda\}.
\end{align}

\vspace{-1em}
\paragraph{Gaussian Process (GP) Surrogate.} 
To address these objectives sample-efficiently, we place a GP prior $f\sim GP(\mu, k)$. Given $t$ noisy observations $D_{t}=\{(x_{\tau}, y_{\tau})\}_{\tau=1}^{t}$, where $y_{\tau}\sim \N(f(x_\tau), \sigma^2)$, the posterior $f \mid D_t \sim \GP(\mu_t, k_t)$, is given by 
{\small
\begin{align}
    &\mu_{t}(x) = \mu(x) + k(x, \vx_{t})K_{t}^{-1}(\vy_{t} - \mu(\vx_{t})), \nonumber \\
    &k_{t}(x, x') = k(x, x') - k(x, \vx_{t})K_{t}^{-1}k(\vx_{t}, x'),
    \label{eq:gp_posterior}
\end{align}
}%
where vector $\vx_{t} = [x_{\tau}]_{\tau=1}^{t}$, vector $\vy_{t} = [y_{\tau}]_{\tau=1}^{t}$, $\mu(\vx_t) = [\mu(x_\tau)]_{\tau=1}^t$, $k(x, \vx_{t}) = [k(x, x_{\tau})]_{\tau=1}^t$ and matrix $K_{t} = [[k(x_{\tau}, x_{\tau'})]_{\tau=1}^t]_{\tau'=1}^{t} + I\sigma^2$. The mean function $\mu$, kernel $k$ and noise variance $\sigma^2$ are unknown and will be learned via transfer learning.
\vspace{-.5em}
\paragraph{Linear kernel on embeddings.} %
As a special case, the GP can use a linear kernel defined over an encoder $\phi:\mathcal X \mapsto \R^d$, such that $k(x, x') = \phi(x)\T\phi(x')$.
Defining the embedding matrix as $Z = [\phi(x_\tau)]_{\tau=1}^t \in \R^{d \times t}$, the posterior evaluates to:
{\small\begin{align}
    &\mu_{t}(x) = \mu(x) + \sigma^{-2}\phi(x)\T \widetilde{K_t} Z (\vy_{t} - \mu(\vx_{t})), \nonumber \\
    &k_{t}(x, x') = \phi(x)\T\widetilde{K_t}\phi(x'), \label{eq:blr_posterior}
\end{align}
}%
where  $\widetilde{K_t} = (Z Z\T \sigma^{-2} +\mI)^{-1} \in \R^{d\times d}$. Notably, computing this inverse requires $O(d^3)$ operations, reducing the computational complexity from the $O(t^3)$ scaling seen in \Cref{eq:gp_posterior}. Furthermore, as the number of observations $t$ increases, $\widetilde{K_t}$ can be efficiently updated in $O(d^2)$ time using the Sherman-Morrison formula:
\begin{align}
\widetilde{K_{t+1}} = \widetilde{K_t} - \frac{\widetilde{K_t} \phi(x_{t+1}) \phi(x_{t+1})\T \widetilde{K_t}\sigma^{-2}}{1+ \phi(x_{t+1})\T \widetilde{K_t} \phi(x_{t+1}) \sigma^{-2}}.
\label{eq:sherman}
\end{align}
As we will see, this formulation allows for highly efficient computation of the variance-reduction acquisition function for active BQ (\Cref{ssec:active_bq}) when incorporating new observations.

\vspace{-1em}
\paragraph{Posteriors for evaluation objectives.} 
Once the GP posterior is established, we can use it to derive distributions over our evaluation objectives in \Cref{eq:core_objectives}.

\emph{Bayesian Quadrature (BQ):} By approximating the integral $S$ as a sum over finite test samples $\{x_j\}_{j=1}^M$ drawn \textit{i.i.d.} from $p(x)$, we derive the following posterior mean and variance:
{\small\begin{align}
\vspace{-.5em}
\E[S\mid D_{t}] \approx \frac{1}{M}\sum_{j=1}^M \mu_t(x_j),\;
\V[S \mid D_{t}] \approx \frac{1}{M^2}\sum_{j,j'=1}^M k_t(x_j, x_{j'}). \label{eq:integral_variance}
\vspace{-.5em}
\end{align}
}%
\vspace{-1em}

\emph{Probabilistic Superlevel Sets:} The probability that an input $x$ belongs to the failure region is $p(x\in \mathcal X_\lambda) = p(f(x) \geq \lambda \mid D_t)$. We approximate the region as 
\begin{align}
    \mathcal X_{\lambda}^\beta = \{x \mid \mu_t(x) + \beta\sigma_t(x) \geq \lambda\}, \label{eq:pss}
\end{align}
where $\sigma_t(x)=\sqrt{k_t(x,x)}$ is the posterior standard deviation and  $\beta$ controls the confidence level.

\subsection{Transfer Learning for GP Priors}
\label{ssec:transfer}
The efficacy of \textsc{ProEval} hinges on the quality of the GP prior. We leverage the abundance of relevant historical evaluation data to construct this prior, formalized as follows:
\begin{assumption}
\label{assump:data}
    Let $\mathcal D = \{D_i\}_{i=1}^N$ be a set of historical datasets, where $D_i = \{(x_{ij}, y_{ij})\}_{j=1}^{M_i}$. We assume the observations are perturbed with noise, $y_{ij} \sim \N(f_i(x_{ij}), \sigma^2)$, where the score functions $f_i$ are drawn from a shared GP prior $f_i\sim\GP(\mu, k)$ with unknown mean $\mu$, kernel $k$ and noise variance $\sigma^2$.
\end{assumption}

This assumption posits that performance scores exhibit structural correlations across inputs. For example, questions testing similar reasoning skills often yield positive covariance (models tend to succeed or fail together), while distinct failure cases may result in negative covariance. Capturing these correlations is critical, as it allows \textsc{ProEval} to infer the performance on unobserved inputs based on the evaluation of correlated examples. 
\Cref{fig:question_covariance} empirically validates this assumption, revealing strong covariance structures in standard benchmarks. 
\ifonecolumn
\begin{figure}[t]
    \centering
    \begin{subfigure}[b]{0.24\linewidth}
        \includegraphics[width=\linewidth]{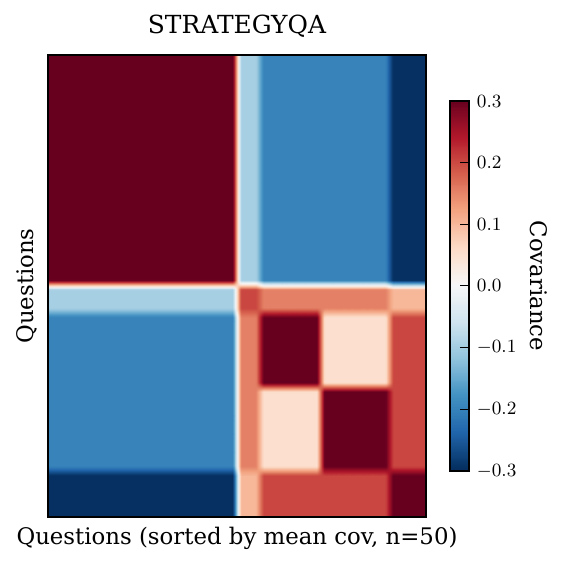}
        \label{fig:question_cov_strategyqa}
    \end{subfigure}
    \hfill
    \begin{subfigure}[b]{0.24\linewidth}
        \includegraphics[width=\linewidth]{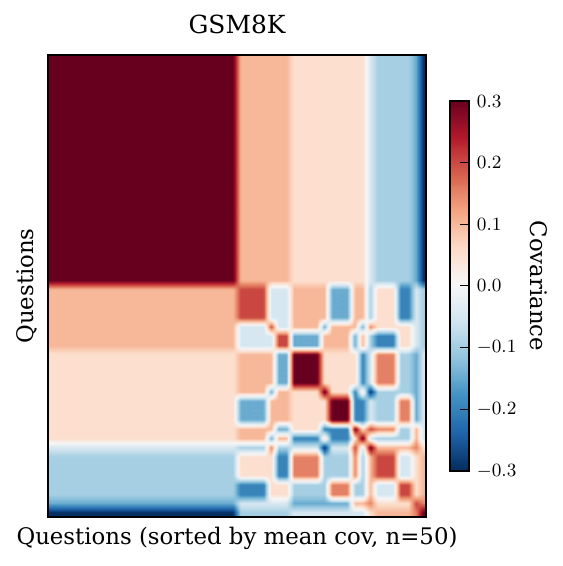}
        \label{fig:question_cov_gsm8k}
    \end{subfigure}
    \hfill
    \begin{subfigure}[b]{0.24\linewidth}
        \includegraphics[width=\linewidth]{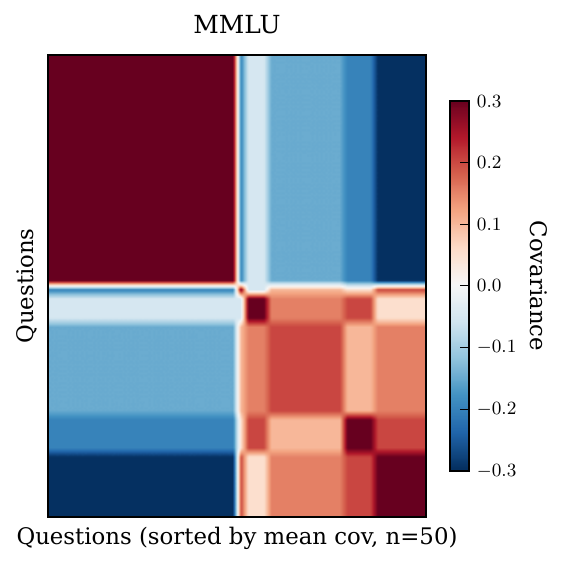}
        \label{fig:question_cov_mmlu}
    \end{subfigure}
    \hfill
    \begin{subfigure}[b]{0.24\linewidth}
        \includegraphics[width=\linewidth]{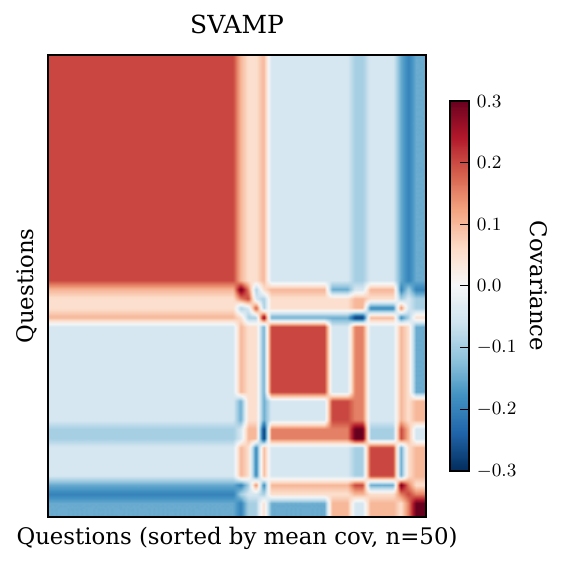}
        \label{fig:question_cov_svamp}
    \end{subfigure}
    \caption{\textbf{Empirical validation of performance correlations in \Cref{eq:sample_mean_covariance}.} Sample covariance matrices of performance on questions from StrategyQA \citep{geva2021did}, GSM8K \citep{cobbe2021training}, MMLU \citep{hendryckstest2021} (Professional Law subsets), and SVAMP \citep{patel-etal-2021-nlp}, computed across $N=5$ language models (GPT-4o, Gemini 2.5 Flash, Claude 4.5 Sonnet, Qwen 3 32B, and GPT-5). Questions are filtered to those with non-zero variance and ordered by mean covariance (top/bottom 25 shown). {\color{red} Red} indicates positive covariance (models succeed/fail together); {\color{blue} Blue} indicates negative covariance. The distinct block structure reveals strong correlations between models' performance on different questions, supporting the validity of Assumption~\ref{assump:data}.}
    \label{fig:question_covariance}
\end{figure}

\else
\begin{figure}[t]
    \centering
    \begin{subfigure}[b]{0.49\linewidth}
        \includegraphics[width=\linewidth]{figures/question_covariance_strategyqa.pdf}
        \label{fig:question_cov_strategyqa}
    \end{subfigure}
    \hfill
    \begin{subfigure}[b]{0.49\linewidth}
        \includegraphics[width=\linewidth]{figures/question_covariance_svamp.pdf}
        \label{fig:question_cov_svamp}
    \end{subfigure}
    \caption{\textbf{Empirical validation of performance correlations.} Sample covariance matrices of question performance on (left) StrategyQA and (right) SVAMP, computed across $N=5$ language models (GPT-4o, Gemini 1.5 Pro, Gemma-7b/27b, and Qwen-72b). Questions are filtered to those with non-zero variance and ordered by mean covariance (top/bottom 25 shown). {\color{red} Red} indicates positive covariance (models succeed/fail together); {\color{blue} Blue} indicates negative covariance. The distinct block structure reveals strong correlations between questions, supporting the validity of Assumption~\ref{assump:data}.}
    \label{fig:question_covariance}
\end{figure}

\fi

Our goal here is to learn the GP %
from historical datasets to accurately model a new evaluation task with score function $f$. Note that the inputs to function $f$ are the same as inputs to GenAI models,  often consisting of texts and images, and traditionally GPs do not use these modalities as inputs. A straightforward way to make GPs compatible with those inputs is to project the inputs into a finite-dimensional feature space. Depending on the available data, we introduce two distinct strategies to construct these features and learn the GP.

\subsubsection{Score features via empirical statistics} \label{sssec:score_feature}

When evaluating a target model on a fixed target benchmark, we often have access to evaluation results from other models on the exact same set of questions. Under Assumption 2, this shared structure allows us to extract features directly from the historical score matrix.%

\begin{assumption}[Standard Benchmarking]
    \label{assump:aligned_inputs}
     All historical datasets evaluate the identical set of inputs $\{x_j\}_{j=1}^M$, such that $D_i = \{(x_j, y_{ij})\}_{j=1}^M$ for all $i \in \{1, \dots, N\}$.
\end{assumption}

Let $\vy_i = [y_{ij}]_{j=1}^M$ be the vector of scores for model $i$ and matrix $Y=[\vy_i]_{i=1}^N$. Under Assumption~\ref{assump:data}, the sample mean and covariance estimates are computed across the $N$ historical models:
{\small
\begin{align}
\label{eq:sample_mean_covariance}
\vspace{-.5em}
   & \hat \vu = \frac{1}{N}Y \times 1_N, \quad \hat \Sigma = \frac{1}{N-1}(Y - \hat \vu)(Y - \hat \vu)\T.
\vspace{-.5em}
\end{align}}
Notably, these empirical statistics are equivalent to a GP prior with mean $\hat\mu(x_j) = \hat \vu_j$ and linear kernel $\hat k(x, x') = \phi(x)\T \phi(x')$. By defining the normalized score feature as $\phi(x_j) =\frac{1}{\sqrt{N-1}}[y_{ij} - \hat \vu_j]_{i=1}^N$, the kernel exactly reconstructs $\hat \Sigma$.

Now we consider the GP posterior. We provide theoretical guarantees showing the posterior BQ estimator $\hat S_t = \frac{1}{M}\sum_{j=1}^M \hat\mu_t(x_j)$ based on this learned prior is unbiased and its deviation from the ground truth estimate $S_t = \frac{1}{M}\sum_{j=1}^M \mu_t(x_j)$ is bounded. 

\begin{thm}[Performance Estimation]\label{thm:posterior_bound}
Assume $N \gg t$ and $\kappa \geq k(x, x)$. Given Assumption~\ref{assump:data}, ~\ref{assump:aligned_inputs} and $t$ observed scores on the target model, the performance estimator $\hat S_t$ satisfies 
\begin{align}\label{eq:bound}
    \E[ \hat S_t]  = \E[S \mid D_t]\; \text{and} \;|\hat S_t - S_t| \leq a' \sqrt{\kappa + \sigma^2},
\end{align}
where $a' = (\frac{4 M\left(t+1+2\sqrt{t\log{\frac{4 M}{\delta}}} + 2\log{\frac{4 M}{\delta}}  - 2/N \right)}{(N-t-2)\delta})^{\frac12}$, and the bound holds with probability $1-\delta$.
\end{thm}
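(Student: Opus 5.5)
This is modeled on the pre-trained GP analysis of HyperBO (Wang et al.), specialized to a finite input set $\{x_j\}_{j=1}^M$. The plan is to write both estimators as linear functionals of the observed vector $\vy_t$ and the prior statistics. By \Cref{eq:gp_posterior} restricted to the $M$ benchmark points,
\begin{align*}
S_t = \frac{1}{M}\1\T\bigl(\vu + \Sigma_{\cdot t}(\Sigma_{tt}+\sigma^2\mI)^{-1}(\vy_t-\vu_t)\bigr),
\end{align*}
where $\vu$ and $\Sigma$ are the true mean vector and kernel matrix on the $M$ points, and the subscript $t$ denotes restriction to the observed indices. The estimator $\hat S_t$ is the same expression with $(\hat\vu,\hat\Sigma)$ from \Cref{eq:sample_mean_covariance} plugged in. Note that $\hat\Sigma$ already estimates $\Sigma+\sigma^2\mI$, since the historical $y_{ij}$ are noisy; this explains the $\kappa+\sigma^2$ factor. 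So I would work with $\Sigma' = \Sigma + \sigma^2 \mI$ throughout and treat the noise as absorbed.

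\textbf{Unbiasedness.} Condition on the target observations $D_t$; the historical $Y$ is independent of them. The columns of $Y$ are i.i.d.\ $\N(\vu,\Sigma')$, so $\hat\vu$ and $\hat\Sigma$ are independent, and $(N-1)\hat\Sigma \sim \mathrm{Wishart}(\Sigma', N-1)$. For the term $\hat\Sigma_{\cdot t}\hat\Sigma_{tt}^{-1}$, I would use the standard Wishart block fact: conditional on $\hat\Sigma_{tt}$, the regression coefficient $\hat\Sigma_{\cdot t}\hat\Sigma_{tt}^{-1}$ is Gaussian with mean $\Sigma'_{\cdot t}\Sigma_{tt}'^{-1}$. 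Hence it is unbiased for the true regression matrix. Combining this with $\E\hat\vu=\vu$, independence, and the fact that the estimator is linear in $\hat\vu$, we get $\E[\hat S_t]=S_t$, which equals $\E[S\mid D_t]$ by \Cref{eq:integral_variance}. The requirement $N\gg t$ (specifically $N>t+1$) is what makes $\hat\Sigma_{tt}$ invertible.

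\textbf{Deviation bound.} Decompose
\begin{align*}
\hat S_t - S_t = \tfrac1M\1\T(\hat\vu-\vu) - \tfrac1M\1\T(A\hat\vu_t - B\vu_t) + \tfrac1M\1\T(A-B)\vy_t,
\end{align*}
with $A=\hat\Sigma_{\cdot t}\hat\Sigma_{tt}^{-1}$ and $B$ its population counterpart. Regroup this as $\hat S_t - S_t = \frac1M\1\T[(\hat\vu-\vu) - A(\hat\vu_t-\vu_t)] + \frac1M\1\T(A-B)(\vy_t-\vu_t)$ and bound the two terms separately, each with probability $1-\delta/2$, via Chebyshev/Markov on second moments.

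The variance of $A-B$ given $\hat\Sigma_{tt}$ is governed by the Schur complement times $\hat\Sigma_{tt}^{-1}$. Its expectation involves $\E[\hat\Sigma_{tt}^{-1}] = \frac{N-1}{N-t-2}\Sigma_{tt}'^{-1}$, the inverse-Wishart mean; this is the source of the $(N-t-2)$ denominator. The quadratic form $(\vy_t-\vu_t)\T\Sigma_{tt}'^{-1}(\vy_t-\vu_t)$ is $\chi^2_t$, and the Laurent–Massart bound gives $t+2\sqrt{t\log(1/\delta')}+2\log(1/\delta')$. The diagonal entries are bounded by $\kappa+\sigma^2$. A union bound over the $M$ coordinates, or Cauchy–Schwarz summing $M$ terms, produces the factor $4M/\delta$ inside the logarithm and the prefactor $4M/\delta$. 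The extra $+1$ and $-2/N$ come from the $\hat\vu$ fluctuation term, whose variance is $\frac{1}{N}(\kappa+\sigma^2)$ per coordinate, combined with the regression correction.

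The main obstacle is getting the constants exactly into the stated $a'$. This requires careful conditional-variance bookkeeping for the matrix-normal regression coefficient and combining the $\chi^2$ tail with Markov's inequality without losing extra factors. I would first compute $\E[(\hat S_t-S_t)^2\mid \vy_t]$ exactly, and only then apply Markov together with the $\chi^2$ tail bound.
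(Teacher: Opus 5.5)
Your outline is sound, but it takes a genuinely different and much heavier route than the paper. The paper proves nothing from scratch. It quotes Theorem 5 of \citet{wang2024pre} for the pointwise facts $\E[\hat\mu_t(x)]=\mu_t(x)$ and $|\hat\mu_t(x)-\mu_t(x)|^2<a^2$ times the noise-inflated posterior variance (which is at most $\kappa+\sigma^2$). It then gets $\E[\hat S_t]=S_t$ by linearity, and gets the deviation bound from $|\hat S_t-S_t|\le\frac1M\sum_j|\hat\mu_t(x_j)-\mu_t(x_j)|$ after applying the pointwise bound at confidence $\delta/M$ at each of the $M$ points and taking a union bound.

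So the ``main obstacle'' you flag disappears. $a'$ is literally $a$ with $\delta$ replaced by $\delta/M$, and that substitution is the sole source of the $4M/\delta$ prefactor and of $\log(4M/\delta)$. Cauchy--Schwarz over the $M$ terms could not put $M$ inside the logarithm.

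What you propose amounts to reproving the cited theorem. Your sketch of it is right, provided you follow your last sentence rather than bounding the two terms separately. Set $\xi=\hat\vu-\vu$, $r=\vy_t-\vu_t-\xi_t$ and $q=(\vy_t-\vu_t)\T(\Sigma'_{tt})^{-1}(\vy_t-\vu_t)\sim\chi^2_t$. The error at an unobserved index is $(\xi_j-B_j\xi_t)+(A_j-B_j)r$, and its conditional second moment given $\vy_t$ is $\Sigma'_{jj\cdot t}\bigl(\frac1N+\frac{q+t/N}{N-t-2}\bigr)=\Sigma'_{jj\cdot t}\,\frac{q+1-2/N}{N-t-2}$. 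This exact combination is where $t+1-2/N$ comes from, and bounding the terms separately would lose it. Also note that finite moments here need $N>t+2$, not just $N>t+1$.

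The payoff of your route is real. If you compute the second moment of the average directly, the cross-covariances assemble into $\frac{1}{M^2}\1\T P\1\le\kappa+\sigma^2$, where $P$ is the Schur complement of $\Sigma'$ on the unobserved indices. No union bound is then needed, and you get the bound with $a$ in place of $a'$. That implies the theorem, since $a\le a'$, and it removes the requirement that $N$ outgrow $M$, which the paper needs for non-vacuity.

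One caveat applies equally to your argument and to the paper's. $\E[\hat\Sigma_{\cdot t}\hat\Sigma_{tt}^{-1}]=\Sigma'_{\cdot t}(\Sigma'_{tt})^{-1}$ agrees with the matrix $\Sigma_{\cdot t}(\Sigma_{tt}+\sigma^2\mI)^{-1}$ appearing in $S_t$ only on unobserved rows. On the $t$ observed rows, the plug-in coefficient is a unit vector, so $\hat\mu_t(x_j)=y_j$, which differs from $\mu_t(x_j)$ whenever $\sigma^2>0$. You must therefore either treat those rows separately or read $\mu_t$ there as the noise-absorbed posterior, which is what the cited pointwise result implicitly does.
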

The proof can be found in \S\ref{app:proof}. Our bound provides a worst-case guarantee and characterizes how error scales with the scale of the kernel ($\kappa$) and the number of models ($N$), offering the insight that increasing $N$ can lead to better estimation. The bound is non-vacuous if $N$ (the number of historical models) grows at a faster rate than $M$ (the number of test samples)\footnote{During model development with many evaluations on a validation set, it is likely that this $N > M$ condition is satisfied, because of the sweep over learning rate, batch size, architecture variants, regularizations, data scales, etc. Each configuration of those parameters corresponds to multiple models since there are many model checkpoints during training.}. Our empirical results in \S\ref{ssec:performance_estimation_exp} demonstrate that even when the theoretical bound is loose, the estimation accuracy remains high.

\subsubsection{Prompt features via learned embeddings}
\label{sssec:prompt_feature}

To transfer knowledge across different benchmarks or when historical results are missing for specific inputs, Assumption~\ref{assump:aligned_inputs} may not hold and we do not have direct access to score features.  Instead, we construct a GP prior based on semantic similarity, using text (or any other input modality) embeddings to map inputs into a shared latent space.

There are many possible GP setups for how to use the embeddings. To imitate the use of score features, one option is to approximate the GP mean and kernel as $\hat\mu(x) =\frac{1}{d} \psi_\theta(x) \times 1_d$ and 
 $\hat k(x, x') = \frac{1}{d - 1} (\psi_\theta(x) - \hat\mu(x))\T (\psi_\theta(x') -\hat \mu(x'))$, where  $\psi_\theta: \mathcal X \mapsto \R^d$ is an embedding function (e.g., a transformer) with parameters $\theta \in \R^{d_\theta}$.  %
Analogous to score features, this is equivalent to using a linear kernel with a centered encoder $\phi_\theta$: 
{\small
\begin{align} \label{eq:phi_theta}
\phi_\theta(x) = \frac{1}{\sqrt{d-1}} (\psi_\theta(x) - \frac{1}{d} \psi_\theta(x) \times 1_d) \in \R^d
\end{align}
}
\Cref{fig:bq_gp} illustrates this GP architecture. The linear formulation in \Cref{eq:phi_theta} allows us to inspect $\psi_\theta$ to ensure embedding magnitudes align with score ranges. However, the framework is flexible. Other choices include stationary kernels over embeddings; e.g., Mat\'ern kernel $\hat k(x, x') = k_{\text{Mat\'ern}}(\phi_\theta(x), \phi_{\theta}(x'))$, which we used in our experiments.

We optimize encoder parameters $\theta$ (including other GP hyperparameters) by maximizing the log-likelihood of the historical data across all $N$ datasets:
$\hat \theta = \argmax_\theta\; \sum_{i=1}^N \log p(\vy_i \mid \theta).$
This optimization enables zero-shot generalization. Even for inputs never previously evaluated, the GP predicts difficulty based on semantic similarity to historical cases in the learned embedding space.
\ifonecolumn
    \begin{figure}
    \centering
    \includegraphics[width=.90\textwidth]{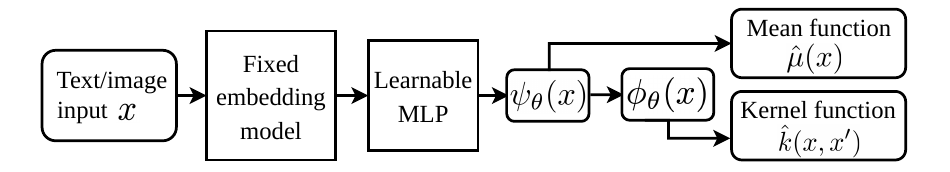}
    \caption{\textbf{Transfer learning via embeddings.} We use a pre-trained embedding model adapted via learnable parameters $\theta$ to define the GP. This enables transfer learning via input similarity even when historical data for the specific inputs is unavailable. To balance stability and flexibility, we use a mean function bounded by the scale of $\psi$ and an expressive Matérn kernel (see \Cref{sssec:prompt_feature} for details).}
    \label{fig:bq_gp}
\end{figure}

\else
    
\begin{figure}
    \centering
    \includegraphics[width=\linewidth]{figures/gp2.drawio.pdf}
    \caption{\textbf{Transfer learning via embeddings.} We use a pre-trained embedding model adapted via learnable parameters $\theta$ to define the GP and leverage input similarity even when historical data for the specific inputs is unavailable. To balance stability and flexibility, we use a mean function bounded by the scale of $\psi$ and an expressive Matérn kernel (see \Cref{sssec:prompt_feature} for details).}
    \label{fig:bq_gp}
\end{figure}
 
\fi
\subsubsection{Selecting source data}
\label{ssec:historical_data_selection}
Regardless of whether the GP prior is constructed using empirical score features (\Cref{sssec:score_feature}) or learned prompt features (\Cref{sssec:prompt_feature}), its effectiveness relies on the relevance of the source data. Specifically, the validity of Assumption~\ref{assump:data} depends on selecting historical datasets that align with the target evaluation. Using every available model to construct the prior assumes all models' score functions are samples from the same underlying distribution. This assumption fails when evaluating an out-of-distribution target model, leading to negative transfer.

To automate and optimize this selection while preventing negative transfer, \textsc{ProEval} employs Gaussian Mixture Model (GMM) Clustering. We project different models' scores ($[y_{ij}]_{j=1}^M$) on a reference benchmark onto a lower-dimensional space using PCA and fit a GMM to identify models with similar behavioral profiles. By default, we use all available benchmarks except the target benchmark as this reference. The prior is then constructed exclusively from historical models within the target model's cluster. To maintain reliability, the framework uses an abstention rule: it abstains from estimation if the target's cluster contains fewer than three models, as sparse clusters indicate a lack of sufficient source data to form an informative prior.

Alternative selection heuristics, such as Spearman rank correlation or distance-based constraints (e.g., Mahalanobis distance), can be used when clustering is difficult. We provide a full empirical comparison and ablation of these design choices in~\Cref{app:source_data_selection}.

\subsection{Active Performance Estimation}
\label{ssec:active_bq}

To efficiently estimate the performance integral $S$, we aim to minimize the estimator variance $\mathbb{V}[S \mid D_t]$ defined in ~\Cref{eq:integral_variance}. We adopt a greedy acquisition strategy, selecting the next input $x_{t+1}$ that maximizes the reduction in posterior variance:
\begin{align} \label{eq:active_bq}
x_{t+1} & = \argmax_{x \in \mathcal X} \left( \V[S \mid D_t] - \V[S \mid D_t \cup {x}] \right) 
\end{align}
Because this acquisition function is independent of the actual observations of the function $f$, it is possible to pre-compute batches of test inputs for parallel evaluation. For clarity, we outline the sequential formulation in Algorithm~\ref{alg:perf}.

When employing a linear kernel, the acquisition function in Eq.~\ref{eq:active_bq} can be computed more efficiently via the Sherman-Morrison formula in \Cref{eq:sherman}. Specifically, the optimization becomes:
\begin{align} \label{eq:active_bq_linear_kernel}
   x_{t+1} & = \argmax_{x \in \mathcal X} \E_{x', x''}[\phi(x')\T \frac{\widetilde{K_t} \phi(x) \phi(x)\T \widetilde{K_t}}{\sigma^2+ \phi(x)\T \widetilde{K_t} \phi(x) } \phi(x'')],
\end{align}
This formulation, expressed in \Cref{eq:active_bq_linear_kernel}, circumvents the need to compute the inverse of the $(t+1)$-dimensional Gram matrix for every candidate input $x$ evaluated during the acquisition step.

\begin{algorithm}[H]
{%
\begin{algorithmic}
   \STATE {\bfseries Input:} Historical data $\mathcal D$ in Assumption~\ref{assump:data}, $D_0 = \emptyset$
   \STATE Learn encoder $\phi$ according to~\Cref{ssec:transfer} and set $\mu=\hat\mu$, $k=\hat k$
   \FOR{$t = 0,\cdots, T-1 $}
   \STATE Compute $x_{t+1}$ from~\Cref{eq:active_bq}
   \STATE $y_{t+1}\gets$ \textsc{Evaluate}$\left(f, x_{t+1}\right)$ 
   \STATE $ D_{t+1} \gets D_t\cup \{(x_{t+1}, y_{t+1})\}$
    \STATE Update performance estimate $\E[S \mid D_{t+1}]$ via~\Cref{eq:integral_variance}
   \ENDFOR
\end{algorithmic}
}%
   \caption{\textsc{ProEval} for Performance Estimation}\label{alg:perf}
\end{algorithm}
\vspace{-2em}
\subsection{Proactive Failure Case Discovery}
\label{ssec:failure_discovery}

Besides performance estimation, we aim to proactively identify failure cases %
by efficiently sampling from the failure superlevel set $\mathcal X_\lambda = \{x \mid f(x) \geq \lambda\}$ to discover diverse, high-severity inputs (e.g., safety violations or reasoning errors). We propose three strategies that build upon one another, progressing from efficient retrieval in static datasets to diversity-driven data synthesis. %

\vspace{-.5em}
\paragraph{Strategy 1: Superlevel set sampling (SS).} %
This strategy aims to retrieve failure cases within a static pool of unlabeled inputs $D_{\text{pool}}$. We design an acquisition function that, when maximized, targets inputs within the probable failure region, \Cref{eq:pss}, while ensuring high information gain:
\begin{equation}
    \alpha_{\text{SS}}(x \mid D_t) = \mathbbm{1}(\mu_t(x)+\beta\sigma_t(x) \geq \lambda) \times k_t(x,x).
    \label{eq:ss}
\end{equation}
The indicator term restricts the search to the probable failure set, where $\beta$ controls the confidence threshold (e.g., $\beta=0$ targets inputs with $\ge 50\%$ failure probability, while lower values enforce a stricter high-probability requirement). Meanwhile, the variance term $k_t(x,x)$ drives selection toward unexplored areas within this region.

\vspace{-1em}
\paragraph{Strategy 2: Generative synthesis (SS-Gen).} 
SS may overlook failure cases outside of $D_{\text{pool}}$. We extend SS to support generative failure discovery. This strategy %
selects $m$ ``anchor'' inputs from $D_{\text{pool}}$ with the highest $\alpha_{\text{SS}}$ values and use them as in-context examples for an LLM generator: 
\textit{``These test cases likely cause the target model to fail. Analyze their common features and generate a new, more challenging test case.''}
\vspace{-1em}
\paragraph{Strategy 3: Topic-aware exploration (TSS).}
A limitation of SS-Gen is that the generated inputs often semantically mimic the anchors (e.g., if anchors are math problems about "counting apple", the LLM generates more "counting apple" problems). To force diversity, TSS decouples the semantic topic from the failure pattern.
We partition the input space into topics $S = \{s_i\}_{i=1}^{N_{\text{topics}}}$ using BERTopic~\citep{grootendorst2022bertopic} or a pre-defined set, where each topic is represented as a set of keywords (e.g., \{\textit{child, rock climbing}\}, \{\textit{age, old man, counting}\}).

We treat topics as arms in a multi-armed bandit problem and select a target topic $s_t$ using UCB1~\citep{auer2002finite} to balance the reward of finding a failure case and exploration. Crucially, this selection is independent of the anchors. We take the likely-to-fail anchors (which may belong to any topic) and instruct the LLM to transpose their failure patterns into the new target topic, by adding 
\textit{``Ensure the new test case belongs to this topic...''} to the SS-Gen instruction. \Cref{alg:hss} presents the pseudocode for TSS. The detailed LLM instructions for SS-Gen and TSS are in \S\ref{app:ssgen_prompt} and \S\ref{app:tssgenprompt}.

\begin{algorithm}[H]
{%
   \caption{\textsc{ProEval} for Failure Case Discovery (TSS)}\label{alg:hss}
\begin{algorithmic}
   \STATE {\bfseries Input:} Unlabeled $D_{\text{pool}}$, historical data $\mathcal D$ in Assumption~\ref{assump:data}, initial observations $D_0$ on the target model%
   \STATE {\bfseries Initialize:} Cluster $D_{\text{pool}}$ into topics $S$; train GP prior on $\mathcal D$; get GP posterior conditioned on $D_0$. %
   \FOR{$t = 0, \ldots, T-1$}
       \STATE Select topic $s_t \in S$ via UCB1 %
       \STATE Select anchors $\{x_a\} \subset D_{\text{pool}}$ maximizing $\alpha_{\text{SS}}$ (Eq.~\ref{eq:ss}) %
       \STATE Generate $x_{t+1} \gets LLM(\text{anchors}=\{x_a\}, \text{topic}=s_t)$
       \STATE $y_{t+1} \gets \textsc{Evaluate}(f, x_{t+1})$
       \STATE $D_{t+1} \gets D_{t} \cup \{(x_{t+1}, y_{t+1})\}$
       \STATE Update GP posterior and topic statistics
   \ENDFOR
\end{algorithmic}
}%
\end{algorithm}

\section{Empirical Analyses and Results}
\label{sec:exp}
We present experimental setups (\S\ref{ssec:exp_setup}) followed by results on performance estimation (\S\ref{ssec:performance_estimation_exp}) and failure discovery (\S\ref{ssec:failure_mode_discovery_exp}).
\subsection{Experiment Setup}
\label{ssec:exp_setup}
We introduce the benchmarks, models, and specific metrics used to assess ProEval. Additional details are provided in \S\ref{app:exp_setup_details}.
We evaluate ProEval across three scenarios:
\begin{itemize}
    \item \textit{Default}: Predicting the performance of a known model on a new target benchmark by leveraging its historical data on other benchmarks alongside the performance of other models on that target benchmark.
    \item \textit{New Model (NM)}: Evaluating a novel model with no prior performance data.
    \item \textit{New Bench (NB)}: Evaluating a novel benchmark for which no prior model results are available.
\end{itemize}
We implement three types of features for the GP framework:
\begin{itemize}
    \item \textit{Score Features (SF)}: Applicable to the Default and New Model scenarios. In the Default setting, we employ the GMM selection approach described in \S\ref{ssec:historical_data_selection} to curate source data. For the New Model setting, all available historical data are used.
    \item \textit{Raw Prompt Features (RPF)}: This setup bypasses the learnable MLP shown in \Cref{fig:bq_gp} and defines $\psi_\theta(x)$ directly as the embedding of $x$ from the fixed embedding model. In BQ estimation, the GP prior mean is derived from a GMM-selected subset for the Default scenario, all auxiliary models for New Model, and a constant 0.5 for New Bench.
    \item \textit{Tuned Prompt Features (TPF)}: This setup optimizes the MLP in \Cref{fig:bq_gp} using the objective function detailed in \S\ref{sssec:prompt_feature}. The mean function is $\hat\mu(x) =\frac{1}{d} \psi_\theta(x) \times 1_d$.
\end{itemize}
For both RPF and TPF, we apply the kernel function $\hat k(x, x') = k_{\text{Mat\'ern}}(\phi_\theta(x), \phi_{\theta}(x'))$, with $\phi_\theta$ defined in \Cref{eq:phi_theta}. 

\subsubsection{Datasets and models}

\textbf{Datasets.} We evaluate \textsc{ProEval} across three core domains: reasoning, general world knowledge, and safety alignment with both text and image modalities, including GSM8K \citep{cobbe2021training} and SVAMP \citep{patel-etal-2021-nlp} for math, StrategyQA \citep{geva2021did} for implicit reasoning, and GQA \citep{hudson2019gqa} for visual reasoning; MMLU \citep{hendryckstest2021} (Professional Law subsets); ToxicChat \citep{lin2023toxicchat}, Google Civil Comments (Jigsaw) \citep{borkan2019nuanced}, DICES-350 \citep{aroyo2023dices} and text-to-image DIVE \citep{rastogi2025viewsafetydeepdive}, where models act as reward models, and we measure alignment with human safety labels. More details in \S\ref{app:exp_datasets}. 

\textbf{Models.} We evaluate 16 LLMs and VLMs, assigning symbols for brevity. \textit{Google (G):}  Gemma-3-12B (G0) \citep{team2025gemma}, Gemma-3-27B (G1) \citep{team2025gemma}, Gemini 2.5 Flash (G2) and Pro (G3) \citep{geminiteam2024gemini15unlockingmultimodal, comanici2025gemini}, and Gemini 3 Flash (G4) and Pro (G5) \citep{google2025gemini3}. \textit{OpenAI (O):} GPT-3.5 Turbo (O1) \citep{openai2025gpt35}, GPT-4o (O2) \citep{hurst2024gpt}, and GPT-5 (O3), 5.1 (O4), 5.2 (O5) \citep{openai2025gpt5, openai2025gpt51, openai2025gpt52}. \textit{Anthropic (C):} Claude 3.5 Haiku (C1) \citep{anthropic2024claude35haiku}, Claude 3.7 Sonnet (C2) \citep{anthropic2025claude37}, and Claude 4.5 Sonnet (C3) and Opus (C4) \citep{anthropic2025claude45sonnet, anthropic2025claude45opus}. \textit{Qwen (Q):} Qwen3-32B (Q1) \citep{yang2025qwen3}. For multi-modal benchmarks, we exclude architectures lacking visual support. 

The performance score $y_{ij}$ for these models is 1 for failure (incorrect answer or misalignment) and 0 for success.
We use the \texttt{text-embedding-3-large} embedding model from Open AI as the fixed embedding in the GP encoder (\Cref{fig:bq_gp}) and for computing embedding diversity (\S\ref{ssec:exp_metrics}).

\subsubsection{Evaluation Metrics}
\label{ssec:exp_metrics}

\noindent\textbf{Performance Estimation.} To evaluate the quality and efficiency of our performance estimation method, we rely on two primary metrics. For quality, we measure the \textit{Mean Absolute Error (MAE)}, defined as the absolute difference between the estimated average error $\hat{S}_t = \frac{1}{M}\sum_{j=1}^M \hat{\mu}_t(x_j)$ and the ground truth estimate $S^* = \frac{1}{M}\sum_{j=1}^M f(x_j)$, computed as $\text{MAE} = |\hat{S}_t - S^*|$. For efficiency, we measure the \textit{Number of Samples (@1\% MAE)}, which tracks the minimum number of samples required to achieve an MAE of $\leq 1\%$.

\vspace{1mm}
\noindent\textbf{Failure Discovery.} To evaluate \textsc{ProEval}'s ability to uncover model vulnerabilities, we categorize our metrics into two dimensions:

\begin{itemize}
    \item \textbf{Quality and Efficiency Metrics:} We track \textit{Cumulative Failures} (the total number of failures discovered) alongside the \textit{Failure Rate (FR)} (the percentage of evaluated inputs that successfully trigger a failure). Efficiency is captured by the \textit{Samples to First Failure (SFF)}, denoting the number of queries required to identify the initial failure case.
    \item \textbf{Diversity Metrics:} We quantify the variety of discovered failures across both feature and semantic spaces. \textit{Embedding Diversity} uses the normalized log-determinant of the embedding Gram matrix \citep{kulesza2012determinantal}: $D_{emb} = \frac{1}{n} \log \det(K + \epsilon I)$, where $K_{ij} = e_i^\top e_j$ for L2-normalized embeddings and $\epsilon = 10^{-6}$. This measures the volume spanned in the feature space. We fix $n=100$ samples to ensure a fair comparison and normalize to $[0, 1]$. \textit{Topic Entropy} is measured using the Shannon entropy of the topic distribution, normalized by the maximum possible entropy: $H_{norm} = \frac{-\sum_{t\in T} p(t) \log_2 p(t)}{\log_2 |T|}$, where $p(t)$ is the proportion of samples in topic $t$, and $|T|$ is the number of unique topics. Higher entropy (up to 100\%) indicates more balanced topic coverage. Finally, \textit{Overall Diversity} summarizes both semantic and topical variety via a composite score: $\text{Diversity} = w_1 \cdot H_{norm} + w_2 \cdot \min\left(\frac{D_{emb}}{2}, 1\right)$, where $w_1 = w_2 = 0.5$ by default. 
\end{itemize}
Intuitively, this composite diversity score penalizes near-duplicates and mode collapse. For example, an overall score around 0.5 indicates that, while discovered failures may span several topics, they remain semantically similar, reflecting minor variations of the same failure pattern rather than truly distinct vulnerabilities. Conversely, a high overall diversity score (e.g., $\geq$ 0.90) indicates the discovery of highly distinct, semantically unique vulnerabilities spread evenly across multiple topics.

\subsection{Results on Performance Estimation}
\label{ssec:performance_estimation_exp}

We compare \textsc{ProEval} against \textit{random sampling} and four \textit{active testing} strategies~\citep{kossen2021active}. The active baselines employ surrogate models, Logistic Regression (LR) or Random Forest (RF), to guide sampling, corrected by either standard Importance Sampling (IS) or LURE weighting (details can be found in Section \ref{app:baselines}). %
We compare two estimation variants: \textsc{BQ}, our standard approach using transfer learning and the posterior mean sum, and \textsc{BQ Rounded}, which exploits the binary nature of the scores by rounding the posterior estimates to $\{0, 1\}$ before summation\footnote{While this leverages the discrete nature of the task, it discards the nuanced uncertainty information provided by the GP posterior. As a result, BQ Rounded can introduce noise and exhibits less stability than the standard BQ variant, especially when the posterior mean is near 0.5.}. We also study the effectiveness of the active selection approach in \S\ref{ssec:active_bq} by comparing it to random selection.
\Cref{tab:mae_1pct_new_pair} shows the MAE at a budget of 1\% of the benchmark size across these settings. Comparing \textsc{Baselines} and \textsc{Active Selection + BQ} in \Cref{tab:mae_1pct_new_pair},  \textsc{ProEval} variants consistently outperform all baselines. Comparing \textsc{Active Selection + BQ} and \textsc{Random Selection + BQ}, our active selection approach for BQ-SF and BQ-TPF is almost always better than random selection, highlighting the usefulness of considering the variance reduction in BQ.

\Cref{fig:mae_per_iteration_new_pair} further illustrates the convergence
behavior by plotting the per-step MAE over 20 acquisition steps on four
representative benchmarks. BQ-TPF and BQ-RPF start with higher initial error
but steadily improve as more samples are acquired, with BQ-TPF generally
converging faster due to the tuned prompt encoder capturing finer-grained
semantic similarities. BQ-SF, benefiting from the transferred prior, achieves
low error almost immediately; \Cref{fig:sample_to_mae_threshold} quantifies
this, showing that it often requires only 1--2 evaluations to reach 1\%
estimation error. This efficiency stems from three factors: (1) the strong
prior from transfer learning, which skips the cold-start phase; (2) BQ's
ability to produce accurate aggregate estimates even with pointwise
uncertainty; and (3) the active selection of inputs that maximize information
gain for the integral estimate.

\begin{table*}[t]
    \centering
    \setlength{\tabcolsep}{3pt}
    \resizebox{\textwidth}{!}{
    \begin{sc}
    \begin{tabular}{lccccccccc}
        \toprule
                \textbf{Method} & \textbf{DICES} & \textbf{DIVE} & \textbf{GQA} & \textbf{GSM8K} & \textbf{JigSaw} & \textbf{MMLU} & \textbf{StrategyQA} & \textbf{SVAMP} & \textbf{ToxicChat} \\
        \midrule
        \multicolumn{10}{l}{\textit{Baselines}} \\
        \midrule
        Random Sampling
            & 0.100$\pm$0.042 & 0.108$\pm$0.078 & 0.052$\pm$0.042 & 0.040$\pm$0.005 & 0.053$\pm$0.049 & 0.063$\pm$0.050 & 0.062$\pm$0.004 & 0.065$\pm$0.029 & 0.064$\pm$0.044 \\
        RF+IS
            & 0.079$\pm$0.081 & 0.033$\pm$0.022 & 0.088$\pm$0.058 & 0.059$\pm$0.032 & 0.088$\pm$0.076 & 0.122$\pm$0.080 & 0.062$\pm$0.043 & 0.041$\pm$0.000 & 0.066$\pm$0.045 \\
        LR+IS
            & 0.086$\pm$0.050 & 0.082$\pm$0.042 & 0.052$\pm$0.052 & 0.060$\pm$0.034 & 0.094$\pm$0.080 & 0.127$\pm$0.089 & 0.059$\pm$0.047 & 0.094$\pm$0.079 & 0.034$\pm$0.027 \\
        RF+LURE
            & 0.080$\pm$0.085 & 0.129$\pm$0.073 & 0.108$\pm$0.050 & 0.034$\pm$0.007 & 0.103$\pm$0.046 & 0.092$\pm$0.078 & 0.028$\pm$0.029 & 0.082$\pm$0.081 & 0.066$\pm$0.023 \\
        LR+LURE
            & 0.149$\pm$0.051 & 0.083$\pm$0.047 & 0.089$\pm$0.067 & 0.054$\pm$0.023 & 0.120$\pm$0.049 & 0.072$\pm$0.053 & 0.092$\pm$0.066 & 0.053$\pm$0.024 & 0.021$\pm$0.017 \\
        \midrule
        \multicolumn{10}{l}{\textit{Random Selection + BQ}} \\
        \midrule
        BQ-RPF Rand
            & 0.105$\pm$0.073 & 0.078$\pm$0.051 & 0.091$\pm$0.078 & 0.081$\pm$0.039 & 0.069$\pm$0.058 & 0.073$\pm$0.053 & 0.139$\pm$0.087 & 0.110$\pm$0.035 & 0.082$\pm$0.035 \\
        BQ-TPF Rand
            & 0.078$\pm$0.057 & 0.091$\pm$0.081 & 0.041$\pm$0.014 & 0.048$\pm$0.030 & 0.063$\pm$0.039 & 0.057$\pm$0.020 & 0.064$\pm$0.055 & 0.020$\pm$0.000 & 0.038$\pm$0.039 \\
        BQ-SF Rand
            & 0.064$\pm$0.029 & 0.020$\pm$0.030 & 0.023$\pm$0.027 & 0.023$\pm$0.016 & 0.033$\pm$0.022 & 0.048$\pm$0.019 & 0.015$\pm$0.004 & 0.015$\pm$0.003 & \textbf{\textcolor{darkgreen}{0.001$\pm$0.001}} \\
        \midrule
        \multicolumn{10}{l}{\textit{Active Selection + BQ}} \\
        \midrule
        BQ-RPF
            & 0.178 & 0.117 & 0.177 & 0.018 & 0.048 & 0.080 & 0.128 & 0.083 & 0.125 \\
        BQ-RPF Rounded
            & 0.233 & 0.273 & 0.280 & 0.042 & 0.127 & 0.081 & 0.067 & 0.041 & 0.033 \\
        BQ-TPF
            & 0.253 & 0.034 & 0.079 & 0.044 & 0.033 & 0.073 & 0.022 & 0.017 & 0.099 \\
        BQ-TPF Rounded
            & 0.189 & 0.129 & 0.100 & 0.042 & 0.029 & 0.225 & 0.046 & 0.041 & 0.054 \\
        BQ-SF
            & 0.067 & \textbf{\textcolor{darkgreen}{0.002}} & 0.016 & \textbf{\textcolor{darkgreen}{0.013}} & 0.016 & \textbf{\textcolor{darkgreen}{0.006}} & \textbf{\textcolor{darkgreen}{0.011}} & 0.005 & 0.002 \\
        BQ-SF Rounded
            & 0.064 & 0.065 & \textbf{\textcolor{darkgreen}{0.013}} & 0.015 & \textbf{\textcolor{darkgreen}{0.001}} & 0.083 & 0.012 & 0.009 & \textbf{\textcolor{darkgreen}{0.001}} \\
        \midrule
        \multicolumn{10}{l}{\textit{Special Scenarios}} \\
        \midrule
        BQ-RPF (NB, NM)
            & 0.260 & 0.031 & 0.016 & 0.017 & 0.039 & 0.053 & 0.134 & 0.083 & 0.003 \\
        BQ-RPF Rounded (NB, NM)
            & 0.267 & 0.050 & 0.084 & 0.042 & 0.118 & 0.205 & 0.182 & 0.041 & 0.054 \\
        BQ-TPF (NB)
            & 0.252 & 0.037 & 0.080 & 0.045 & 0.032 & 0.073 & 0.022 & 0.014 & 0.099 \\
        BQ-TPF Rounded (NB)
            & 0.164 & 0.162 & 0.109 & 0.042 & 0.030 & 0.225 & 0.045 & 0.041 & 0.054 \\
        BQ-TPF (NM)
            & 0.105 & 0.040 & 0.171 & 0.021 & 0.006 & 0.142 & 0.016 & 0.008 & 0.037 \\
        BQ-TPF Rounded (NM)
            & 0.121 & 0.287 & 0.302 & 0.042 & 0.079 & 0.225 & 0.090 & 0.041 & 0.054 \\
        BQ-SF (NM)
            & \textbf{\textcolor{darkgreen}{0.038}} & 0.048 & 0.016 & 0.017 & 0.028 & 0.007 & \textbf{\textcolor{darkgreen}{0.011}} & \textbf{\textcolor{darkgreen}{0.004}} & 0.003 \\
        BQ-SF Rounded (NM)
            & 0.056 & 0.055 & \textbf{\textcolor{darkgreen}{0.013}} & 0.015 & 0.031 & 0.083 & 0.022 & 0.009 & 0.003 \\
        \bottomrule
    \end{tabular}
    \end{sc}}
    \vspace{0.5em}
    \raggedright
    \caption{Mean Absolute Error (MAE, $\downarrow$) for \textbf{Gemini 2.5 Flash} at a 1\% labeling budget. Methods uses either Raw Prompt Features (RPF), Tuned Prompt Features (TPF), or Score Features (SF) as defined in \S\ref{ssec:exp_setup}. The bottom block evaluates stricter generalization under the \textbf{New Bench. (NB)} and \textbf{New Model. (NM)} scenarios. Best result per benchmark is \textbf{bolded and colored \textcolor{darkgreen}{green}}; values with $\pm$ report standard deviation over 5 runs. ProEval variants, especially \textsc{BQ-SF} and \textsc{BQ SF Rounded} performed especially well in the Default and NM scenarios, highlighting the effectiveness of both transfer learning and active selection.}
    \label{tab:mae_1pct_new_pair}
\end{table*}

\ifonecolumn
    \begin{figure}[t]
    \centering
    \includegraphics[width=0.98\textwidth]{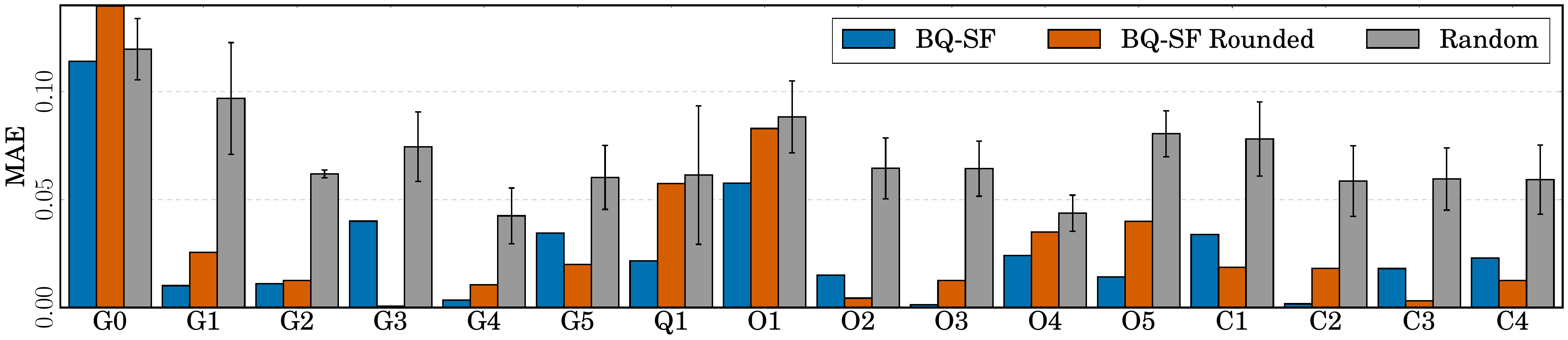}
    \caption{Performance estimation MAE on StrategyQA using a 1\% sampling budget across 16 target models. Bayesian Quadrature (BQ and BQ Rounded) generally achieves lower estimation error compared to Random Sampling. Model symbols (e.g., G1, O1) correspond to the definitions in \S\ref{sec:exp}.}
    \label{fig:mae_across_models}
\end{figure}

\else
    \begin{figure}[t]
    \centering
    \includegraphics[width=\columnwidth]{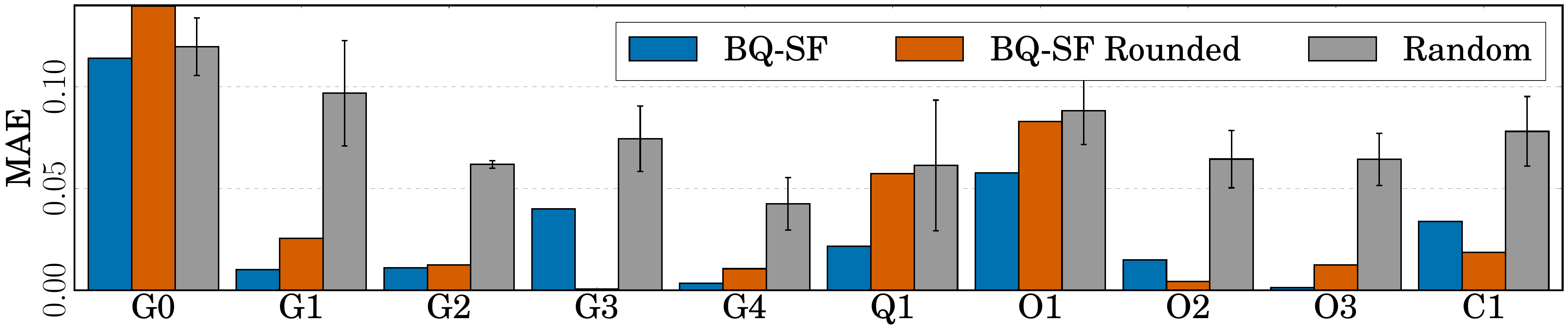}
    \caption{Performance estimation MAE on StrategyQA using a 1\% sampling budget across 16 target models. Bayesian Quadrature (BQ and BQ Rounded) generally achieves lower estimation error compared to Random Sampling. Model symbols (e.g., G1, O1, C1) correspond to the definitions in \S\ref{sec:exp}. G0 failures due to weak partnership with other models (check \Cref{fig:negative_transfer} for details).}
    \label{fig:mae_across_models}
\end{figure}
 
\fi

In the \textit{New Bench (NB)} scenario, which falls under the prompt feature transfer framework (\S\ref{sssec:prompt_feature}), success depends entirely on learned semantic similarities between prompts. As shown in \Cref{tab:mae_1pct_new_pair}, although performance drops relative to the Default setting, BQ-RPF still outperforms \textit{Random Sampling} on 6 out of 9 datasets (falling behind only on DICES, StrategyQA, and SVAMP), demonstrating strong generalization in this challenging zero-shot transfer task.

We expand this analysis to all target models in \Cref{fig:mae_across_models} with StrategyQA as the target benchmark. Note that if we perform GMM clustering on SVAMP (Figure \ref{fig:model_projections_all_pca}) to select the source data, G0, O1 and Q1 would be flagged as outliers, because they are not in the same GMM cluster as the other models. When these outlier models serve as targets, their behavior deviates significantly from the source data, making it difficult to learn an informative prior. Despite this, \textsc{ProEval}'s performance remains competitive. More results on other target models can be found in \S\ref{app:different_target_models_results}.

\noindent\textbf{Case study on negative transfer.} To prevent negative transfer from unrelated historical data, our source data selection approach (\S\ref{ssec:historical_data_selection}) filters models by analyzing their performance clusters on a hold-out benchmark. As detailed in \Cref{app:pretrain-ablation}, we project model profiles via PCA to identify suitable source models that share similar failure modes (visualized in \Cref{fig:model_projections_all_pca}). For example, when selecting pre-training data for evaluating Gemini 2.5 Flash (G2) on GSM8K, models like G0 and O1 fall outside the cluster and are excluded. \Cref{fig:negative_transfer} (corresponding to \Cref{fig:pretrain-heatmaps} in the appendix) confirms that failing to filter these models and blindly selecting pre-training pairs can increase the estimation MAE by up to $100\times$, whereas our selection strategy successfully identifies the optimal, low-error pairs.

\noindent\textbf{Ablation on thinking trace embeddings.\quad}We investigate whether incorporating the target model's Chain-of-Thought (CoT) \citep{wei2022chain} trace improves BQ estimation. We compare a \textit{Question-Only (Q-Only)} baseline against two integration strategies: \textit{Input Concatenation (Concat)}, which appends trace text to the query, and \textit{Latent Fusion (Fusion)}, which concatenates separate query and trace embeddings. No pre-training is done in this ablation. Table~\ref{tab:thinking_trace_mae} shows that while reasoning traces generally reduce error compared to \textit{Q-Only}, the effective integration method varies by task.

\ifonecolumn
    \begin{figure}[t]
    \centering
    \includegraphics[width=1.0\textwidth]{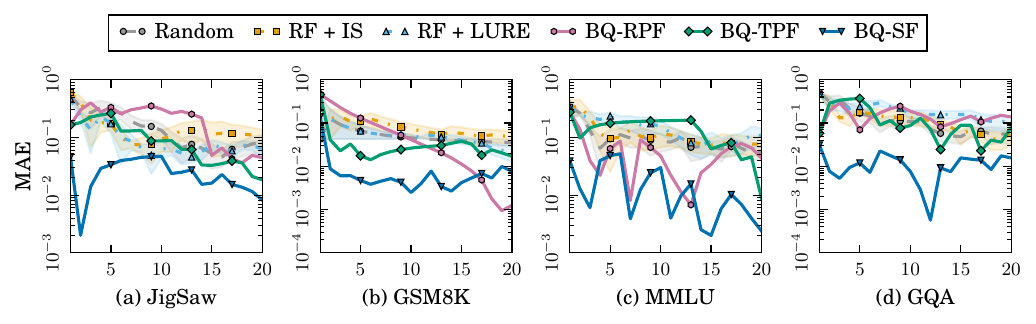}
    \caption{Per-step MAE across sampling iterations for estimating the error rate of \textbf{Gemini 2.5 Flash} with a budget of 20 samples. Results are shown for (a) JigSaw, (b) GSM8K, (c) MMLU, and (d) GQA. Each plot displays the instantaneous MAE at each acquisition step, averaged over 5 independent runs with standard error of the mean (SEM) shading.}
    \label{fig:mae_per_iteration_new_pair}
\end{figure}

\else
    \begin{figure*}[t]
    \centering
    \includegraphics[width=.9\textwidth]{arxiv_figures/1_col/mae_line_4_datasets_combined.pdf}
    \caption{Per-step MAE across sampling iterations for estimating the error rate of \textbf{Gemini 2.5 Flash} with a budget of 20 samples. Results are shown for (a) JigSaw, (b) GSM8K, (c) MMLU, and (d) GQA. Each plot displays the instantaneous MAE at each acquisition step, averaged over 5 independent runs with standard error of the mean (SEM) shading.}
    \label{fig:mae_per_iteration_new_pair}
\end{figure*}
 
\fi

\ifonecolumn
    \begin{figure}
    \centering
    \includegraphics[width=1.0\textwidth]{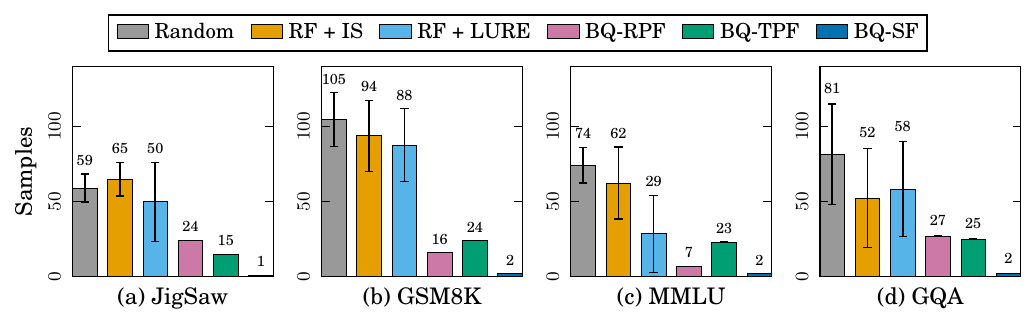}
    \caption{Evaluation of sampling efficiency for \textbf{Gemini 2.5 Flash}. We report number of samples required to reach 1\% MAE threshold across four benchmarks: (a) JigSaw  (b) GSM8K (c) MMLU and (d) GQA. BQ achieves 1\% MAE in a few samples while baselines (RF+LURE, RF+IS, Random) require 8–65× more samples.}
    \label{fig:sample_to_mae_threshold}
\end{figure}

\else
    \begin{figure*}
    \centering
    \includegraphics[width=.85\textwidth]{arxiv_figures/1_col/mae_threshold_bars.pdf}
    \vspace{-.5em}
    \caption{Evaluation of sampling efficiency for \textbf{Gemini 2.5 Flash}. We report the number of samples required to reach 1\% MAE threshold across four benchmarks: (a) JigSaw  (b) GSM8K (c) MMLU and (d) GQA. BQ achieves 1\% MAE in a few samples while baselines (RF+LURE, RF+IS, Random) require 8–65× more samples.}
    \label{fig:sample_to_mae_threshold}
\end{figure*}
 
\fi

\ifonecolumn
    \begin{figure}
    \centering
    \includegraphics[width=1.0\textwidth]{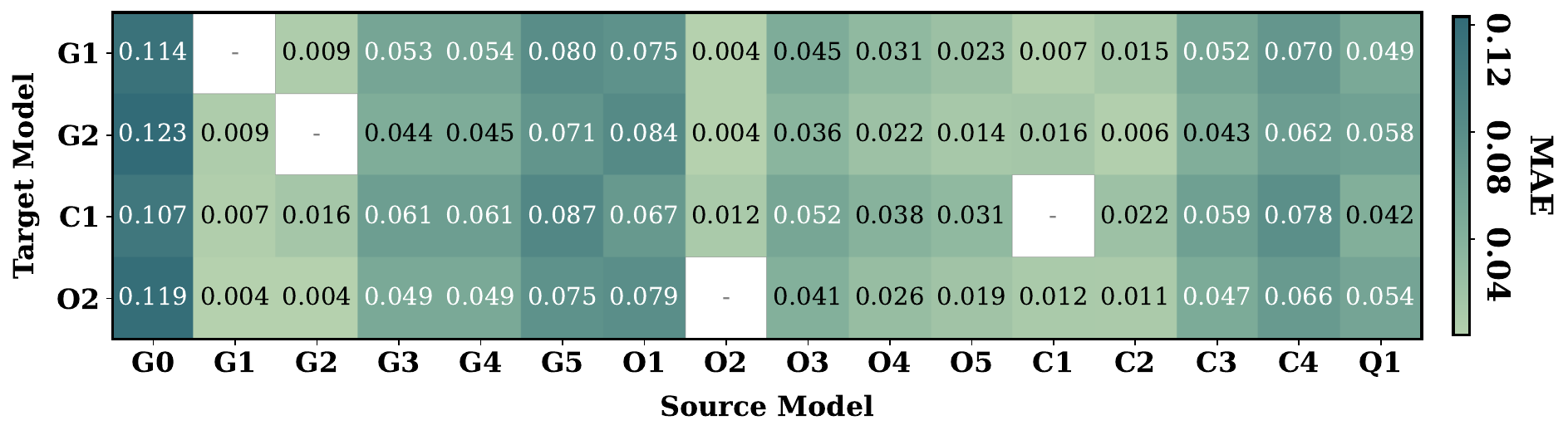}
    \caption{\textbf{Impact of negative transfer.} We analyze the MAE of the BQ estimator with 4 target models (G1, G2, C1, O2 on the Y-axis) when using a prior constructed by pairing the target with one of the 16 source models (X-axis). The consistently higher error rates observed when pairing with dissimilar source models (e.g., G0 with MAE $>$0.10, and G5, O1 with MAE $>$0.07) confirm that blindly including misaligned historical data harms estimation, while closely related models (e.g., G1$\leftrightarrow$G2 with MAE $<$0.01) enable effective transfer, validating the necessity of our Gaussianity-based filtering approach.} 
    \label{fig:negative_transfer}
\end{figure}

\else
    \begin{figure}[t]
    \centering
    \includegraphics[width=.9\columnwidth]{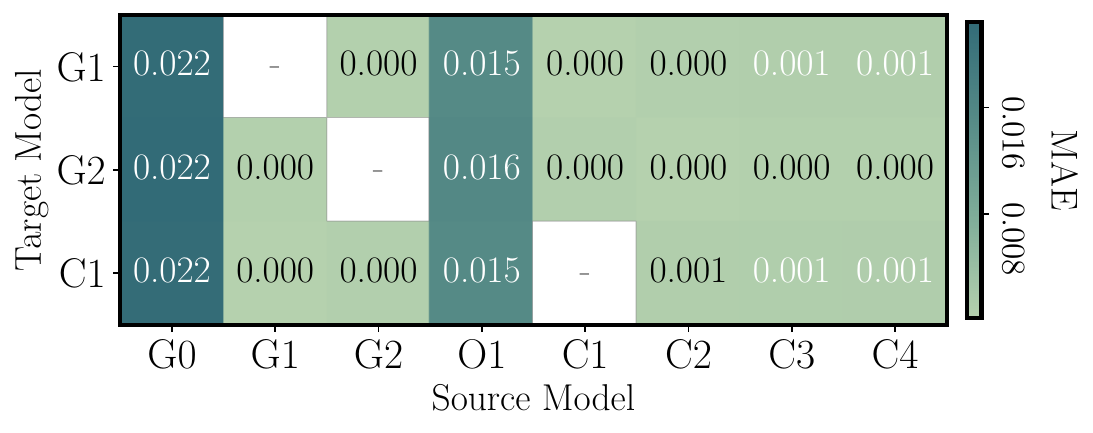}
    \caption{{Impact of negative transfer.} We analyze the estimation error (MAE) when the GP prior is constructed using ``unrelated'' source models (G0 and O1) that were rejected by our selection strategy. The observed degradation in performance on target models (e.g., G2, O2) confirms that blindly including misaligned historical data harms estimation, validating the necessity of our Gaussianity-based filtering approach.} 
    \label{fig:negative_transfer}
\end{figure}
 
\fi

\ifonecolumn
    \begin{table}[h]
    \centering
    \small
    \resizebox{0.9\textwidth}{!}{
    \begin{tabular}{lccccccc}
    \toprule
    \textbf{Method} & \textbf{StrategyQA} & \textbf{GSM8K} & \textbf{MMLU} & \textbf{SVAMP} & \textbf{ToxicChat} & \textbf{JigSaw} & \textbf{DICES} \\
    \midrule
    Q-Only & 0.128 & 0.018 & \textbf{0.080} & 0.083 & 0.125 & 0.048 & 0.178 \\
    Concat & 0.099 & 0.022 & 0.272 & 0.213 & 0.064 & 0.081 & \textbf{0.084} \\
    Fusion & \textbf{0.071} & \textbf{0.009} & 0.104 & \textbf{0.063} & \textbf{0.001} & \textbf{0.029} & 0.177 \\
    \bottomrule
    \end{tabular}
    }
    \vspace{0.2cm}
    \caption{Ablation on reasoning trace strategies for BQ-RPF performance estimation of \textbf{Gemini 2.5 Flash}. \textit{Q-Only} embeds only the question text, \textit{Concat} concatenates the question with the model's reasoning trace, and \textit{Fusion} computes a weighted average of question and reasoning embeddings ($\alpha{=}0.7$). All methods use a Mat\'{e}rn kernel with PCA-reduced embeddings (16D) and a neutral prior. We report MAE at a 1\% labeling budget (lower is better).}
    \label{tab:thinking_trace_mae}
\end{table}
\else
    \begin{table}[htbp]
    \centering
    \small
    \resizebox{\columnwidth}{!}{
    \begin{tabular}{lcccc}
    \toprule
    \textbf{Method} & \textbf{StrategyQA} & \textbf{GSM8K} & \textbf{SVAMP} & \textbf{JigSaw} \\
    \midrule
    Q-Only & 0.128 & 0.018 & 0.083 & 0.048 \\
    Concat & 0.099 & 0.022 & 0.213 & 0.081 \\
    Fusion & \textbf{0.071} & \textbf{0.009} & \textbf{0.063} & \textbf{0.029} \\
    \bottomrule
    \end{tabular}
    }
    \vspace{0.2cm}
    \caption{Ablation on reasoning trace strategies for BQ-RPF performance estimation of \textbf{Gemini 2.5 Flash}. \textit{Q-Only} embeds only the question text, \textit{Concat} concatenates the question with the model's reasoning trace, and \textit{Fusion} computes a weighted average of question and reasoning embeddings ($\alpha{=}0.7$). All methods use a Mat\'{e}rn kernel with PCA-reduced embeddings (16D) and a neutral prior. We report MAE at a 1\% labeling budget (lower is better).}
    \label{tab:thinking_trace_mae}
\end{table}
 
\fi

\noindent\textbf{Modality transfer.\quad} We performed the following preliminary experiments for BQ to investigate transferring across modalities. We used DICES (text) dataset to perform the knowledge transfer, and ran \textsc{ProEval} on DIVE (image) data with 15 samples. The results show that, compared with the no knowledge transfer, the cross-modality knowledge contributes significantly, reducing the MAE from 0.111 to 0.055.

\noindent\textbf{Binary data in the experiments and the Gaussian observation model.\quad} The choice of the observation model is a deliberate trade-off to ensure computational tractability and analytical updates in BQ with pre-trained GPs. While a Bernoulli/probit link is standard for binary data, it precludes a closed-form posterior for the integral and the pre-training objective. This would require approximate inference (e.g., variational inference, MCMC), which introduces approximation errors that may defeat the purpose of accurately modeling binary data.

We experimented with a standard GP classifier (GPC) using Laplace approximation~\citep{rasmussen2006gaussian} and the approximated log marginal likelihood for the pre-training. In the settings of Table 1 at a 1\% budget, GPC often underperforms, yielding an MAE of 0.1653 on StrategyQA and 0.0516 on SVAMP. More investigations on transfer learning for GPC is required to make concrete claims about how to best use GPC for our tasks. The "BQ Rounded" variant’s results, while varied, provide a baseline for future work in non-Gaussian BQ for GenAI evaluation.

\subsection{Results on Failure Case Discovery}
\label{ssec:failure_mode_discovery_exp}

\ifonecolumn
    \begin{figure}[t]
    \centering
    \includegraphics[width=1.0\textwidth]{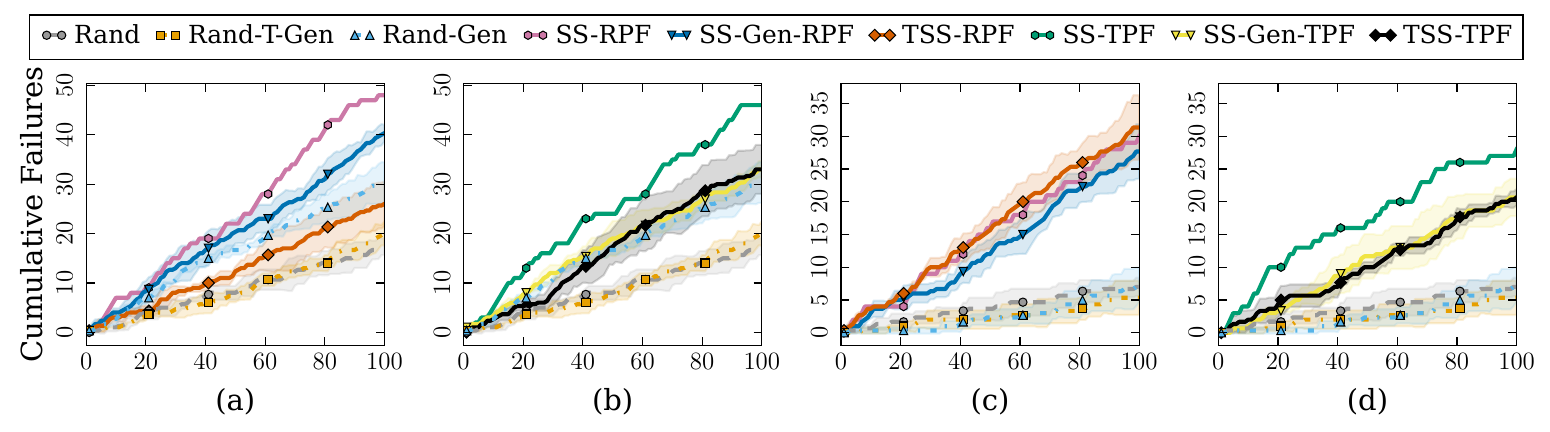}
    \caption{Comparison of failure discovery rates for \textbf{Gemini 2.5 Flash} (target model) using \textbf{Gemini 3 Pro} as the query generator over 100 iterations per run, averaged across 10 independent runs. Cumulative failure count for synthesized queries in: (a) Implicit Reasoning (StrategyQA) using Raw Prompt Features (RPF), (b) Implicit Reasoning (StrategyQA) using Tuned Prompt Features (TPF), (c) GSM8K-like Math problems using Raw Prompt Features (RPF), and (d) GSM8K-like Math problems using Tuned Prompt Features (TPF). Shaded regions denote standard deviation across runs.}
    \label{fig:fr_generation}
\end{figure}

\else
    \begin{figure*}[ht]
    \centering
    \includegraphics[width=.85\linewidth]{arxiv_figures/1_col/combined_generation_4panel_final_22.pdf} %
    \caption{Comparison of failure discovery rates for \textbf{Gemini 2.5 Flash} (target model) using \textbf{Gemini 3 Pro} as the query generator over 100 iterations per run, averaged across 10 independent runs. Cumulative failure count for synthesized queries in: (a) Implicit Reasoning (StrategyQA) using Raw Prompt Features (RPF), (b) Implicit Reasoning (StrategyQA) using Tuned Prompt Features (TPF), (c) GSM8K-like Math problems using Raw Prompt Features (RPF), and (d) GSM8K-like Math problems using Tuned Prompt Features (TPF). Shaded regions denote standard deviation across runs.}
    \label{fig:fr_generation}
\end{figure*}
 
\fi

We evaluate failure case discovery on two distinct reasoning tasks: implicit reasoning (mimicking StrategyQA) and mathematical reasoning (mimicking GSM8K). We employ Gemini 2.5 Flash as the target model (deterministic decoding, temperature $\tau=0$) and Gemini 3 Pro as the query generator (temperature $\tau=0.7$). All experiments are conducted over 10 runs with a budget of $T=100$ iterations.

We compare \textsc{ProEval} against baselines in two settings. For sampling from the static $D_{\text{pool}}$, we use random sampling (Rand). For synthesis baselines, we use \textit{Rand-Gen}, which synthesizes inputs without guidance; \textit{Rand-T-Gen}, which injects random topic constraints but lacks anchor problems; and \textit{Rand-Anchor-Gen}, which uses the same prompt structure as SS-Gen but replaces BQ-selected anchors with uniformly random ones, isolating the effect of anchor selection quality. Full prompt details are provided in \S\ref{appendix:prompts}.

Figure~\ref{fig:fr_generation} shows that our generative strategies (SS-Gen, TSS) maintain near linear growth in cumulative failures, consistently outperforming the random baselines which exhibit flatter, slower growth. This gap is most distinct in the math domain (Figure~\ref{fig:fr_generation}c), where active methods discover significantly more failures than random generation. Additionally, the Samples to First Failure (SFF) in Table~\ref{tab:generation_results_combined} show that active strategies identify the first failure significantly faster—requiring less than 8 samples for math problems, compared to 11–27 samples for random methods.

\begin{table*}[ht]
\centering
\resizebox{\linewidth}{!}{
{\small
    \begin{tabular}{l|ccccc|ccccc}
    \toprule
    \textbf{Method} & \multicolumn{5}{c}{\textbf{StrategyQA}} & \multicolumn{5}{c}{\textbf{GSM8K}} \\
    \cmidrule(lr){2-6} \cmidrule(lr){7-11}
                    & Topic & Emb. & Overall & Failure & SFF & Topic & Emb. & Overall & Failure & SFF \\
                    & Entropy & Diversity & Diversity & Rate & & Entropy & Diversity & Diversity & Rate & \\
    \midrule
    SS-RPF         &    71.9\% &  0.71 & 0.54 & \textbf{48.0}\% &   3.0 &    75.8\% &  0.70 & 0.55 &  30.0\% &   4.0 \\
    SS-TPF         &    65.0\% &  0.75 & 0.51 &  46.0\% &   4.0 &    78.7\% &  0.73 & 0.58 &  28.0\% &   4.0 \\
    Rand           &    68.4\% & \textbf{1.00} & 0.59 &  17.3\% &   9.3 &    67.1\% & \textbf{1.00} & 0.59 &   7.0\% &  11.7 \\
    \midrule
    SS-Gen-RPF     &    95.2\% &  0.95 & 0.71 &  40.3\% &   3.0 &    96.2\% &  0.99 & 0.73 &  27.7\% &   4.3 \\
    SS-Gen-TPF     &    96.8\% &  0.97 & 0.73 &  33.0\% & \textbf{2.0} &    96.8\% &  0.96 & 0.72 &  20.7\% &   7.7 \\
    TSS-RPF        &    99.0\% & \textbf{1.00} & \textbf{0.74} &  26.0\% &   4.3 &    98.8\% &  0.95 & 0.73 & \textbf{31.3}\% & \textbf{3.3} \\
    TSS-TPF        &    96.5\% &  0.98 & 0.73 &  33.0\% &   4.7 &    98.1\% &  0.94 & 0.73 &  20.7\% &   6.7 \\
    \midrule
    Rand-Anchor-Gen &   99.5\% &  0.98 & 0.74 &  22.3\% &   4.3 &    99.0\% & \textbf{1.00} & \textbf{0.75} &   6.3\% &  27.3 \\
    Rand-T-Gen     &    99.2\% &  0.87 & 0.71 &  20.0\% &   4.3 &    98.6\% &  0.95 & 0.73 &   5.3\% &  17.3 \\
    Rand-Gen       & \textbf{99.3}\% &  0.57 & 0.64 &  30.3\% &   3.7 & \textbf{99.6}\% &  0.78 & 0.69 &   7.3\% &  27.0 \\
    \bottomrule
    \end{tabular}
}}
\caption{Failure discovery performance on StrategyQA and GSM8K. We evaluate Gemini 2.5 Flash as the target model with Gemini 3 Pro as the generator, over 100 iterations per run. \textbf{Bold} values indicate the best performance in each column per dataset.}
\label{tab:generation_results_combined}
\end{table*}

Regarding the quality of these generated problems, Table~\ref{tab:generation_results_combined} reports that TSS-RPF achieves highly competitive overall diversity scores ($0.74$ on StrategyQA and $0.73$ on GSM8K), matching or nearing the unconstrained Rand-Anchor-Gen baseline while outperforming all other structured alternatives. This demonstrates its ability to generate diverse failures rather than collapsing on a single mode. \Cref{app:examples_failure_gen} includes representative generated examples. We further evaluate generalization across target models in \Cref{tab:fd_other_targets}. Our methods consistently outperform Rand-Gen across all five targets, with the advantage being most pronounced on stronger models that are harder to break: on GPT~5, TSS-TPF discovers 18.9\% failures on StrategyQA versus 5.1\% for Rand-Gen, a 3.7$\times$ improvement. As expected, weaker target models yield higher overall failure rates (e.g., Gemma3 at 67.8\% vs.\ GPT~5 at 18.9\% for TSS-TPF on StrategyQA), but the relative benefit of guided generation remains substantial across all difficulty levels.

\ifonecolumn
    \begin{table}[t]
        \centering
        \resizebox{\textwidth}{!}{%
        \begin{tabular}{l|cccc|cccc}
            \toprule
            & \multicolumn{4}{c|}{\textbf{StrategyQA}} & \multicolumn{4}{c}{\textbf{GSM8K}} \\
            \cmidrule(lr){2-5} \cmidrule(lr){6-9}
            \textbf{Target Model} & \textbf{TSS-TPF} & \textbf{TSS-RPF} & \textbf{SS-Gen-RPF} & \textbf{Rand-Gen} & \textbf{TSS-TPF} & \textbf{TSS-RPF} & \textbf{SS-Gen-RPF} & \textbf{Rand-Gen} \\
            \midrule
            \textit{Gemma3 (27b)} & \textbf{67.8\%} & 65.4\% & 60.2\% & 20.5\% & 57.4\% & 55.1\% & \textbf{70.3\%} & 15.4\% \\
            \textit{Qwen3 (32b)} & 44.2\% & 42.1\% & \textbf{45.3\%} & 15.2\% & 42.5\% & 40.2\% & \textbf{50.6\%} & 12.1\% \\
            \textit{Gemini 2.5 Flash} & 33.0\% & 26.0\% & \textbf{40.3\%} & 30.3\% & 20.7\% & \textbf{31.3\%} & 27.7\% & 7.3\% \\
            \textit{Claude 3.7 Sonnet} & \textbf{24.6\%} & 22.5\% & 20.1\% & 8.3\% & 20.4\% & 18.7\% & \textbf{25.2\%} & 5.4\% \\
            \textit{GPT 5} & \textbf{18.9\%} & 17.2\% & 15.4\% & 5.1\% & 16.8\% & 15.3\% & \textbf{20.5\%} & 3.2\% \\
            \bottomrule
        \end{tabular}%
        }
        \caption{Failure discovery rates for different target models using TSS-TPF, TSS-RPF, SS-Gen-RPF, and Rand-Gen on generating StrategyQA and GSM8K problems. We evaluate 5 different target models with Gemini 3 Pro as the generator, using 100 iterations. TSS-TPF and SS-Gen-RPF consistently outperform the Rand-Gen baseline.}
        \label{tab:fd_other_targets}
    \end{table}
\else
\fi

\paragraph{A note on query generator validity.} The reported failure rates could be influenced by mistakes made by the query generator when synthesizing ``harder'' problems. Hence we chose a strong model, Gemini 3 Pro, as the generator, and it is aligned with the widely used LLM-as-a-judge setup. All methods in our synthesis experiments (Rand-Gen, SS-Gen, TSS) use the same query generator under identical temperature settings. Therefore, the relative performance delta remains a valid indicator of \textsc{ProEval}’s ability to surface model-specific vulnerabilities. 

\ifonecolumn
    \begin{table}[ht]
        \centering
        \resizebox{\textwidth}{!}{%
        \begin{tabular}{l|cccc|cccc}
            \toprule
            & \multicolumn{4}{c|}{\textbf{StrategyQA}} & \multicolumn{4}{c}{\textbf{GSM8K}} \\
            \cmidrule(lr){2-5} \cmidrule(lr){6-9}
            \textbf{Generator Model} & \textbf{TSS-TPF} & \textbf{TSS-RPF} & \textbf{SS-Gen-RPF} & \textbf{Rand-Gen} & \textbf{TSS-TPF} & \textbf{TSS-RPF} & \textbf{SS-Gen-RPF} & \textbf{Rand-Gen} \\
            \midrule
            \textit{Gemini 3 Pro} & 33.0\% & 26.0\% & \textbf{40.3\%} & 30.3\% & 20.7\% & \textbf{31.3\%} & 27.7\% & 7.3\% \\
            \textit{GPT 5} & \textbf{43.2\%} & 41.5\% & 39.8\% & 15.1\% & 28.3\% & 26.1\% & \textbf{35.2\%} & 8.4\% \\
            \textit{GPT 4o} & \textbf{38.7\%} & 36.1\% & 35.2\% & 12.4\% & 24.5\% & 22.8\% & \textbf{31.6\%} & 5.8\% \\
            \textit{Gemini 3 Flash} & 33.5\% & \textbf{35.8\%} & 31.2\% & 10.7\% & 19.3\% & 18.1\% & \textbf{27.4\%} & 4.5\% \\
            \textit{Qwen3 (32b)} & \textbf{30.4\%} & 28.7\% & 27.1\% & 8.3\% & 16.2\% & 14.8\% & \textbf{22.5\%} & 3.4\% \\
            \textit{Gemma3 (27b)} & 24.6\% & \textbf{26.3\%} & 23.8\% & 6.5\% & 12.7\% & 11.4\% & \textbf{18.9\%} & 2.6\% \\
            \bottomrule
        \end{tabular}%
        }
        \caption{Comparison of failure discovery rates for TSS-TPF, TSS-RPF, SS-Gen-RPF, and Rand-Gen across six query generator models using Gemini 2.5 Flash as the target model. Results are reported as the percentage of failures discovered, evaluated over 100 iterations.}
        \label{tab:generator_ablation_gemini}
    \end{table}
\else
\fi
To quantify the accuracy of the query generator's answers for its generated queries, we conducted a small human verification study on 80 randomly sampled questions from the failure discovery task (generated by Rand-Gen, SS-Gen and TSS), covering grade school math (GSM8K-style) and implicit reasoning (StrategyQA-style)\footnote{The four co-authors solved the questions independently to validate the answers generated by the models.}. 

We found that Gemini 3 Pro answered 90\% of these questions correctly. For the 8 questions it answered incorrectly, we inspected the answers given by the target model Gemini 2.5 Flash. On 5 of those questions, the target model gave the same incorrect answer as Gemini 3 Pro; on 2 questions, it gave a different but still incorrect answer; and on 1 question (where the answer should be yes or no), it gave the correct answer but used flawed reasoning. Overall, this human study demonstrates that our estimated failure rate is a lower bound of the true failure rate. Intuitively, this makes sense because a “weaker” model will very likely fail on problems that a “stronger” model failed at.

The LLM generator's quality impacts the effectiveness of the methods SS-Gen and TSS. For example, if an LLM keeps generating the same token, methods like SS-Gen and TSS will certainly not work. So it is important to choose a capable LLM as the generator. As shown in Table~\ref{tab:generator_ablation_gemini}, stronger generators consistently yield higher failure discovery rates. For instance, TSS-RPF discovers 41.5\% failures on StrategyQA with GPT~5, compared to 28.7\% with Qwen3~(32b) and 26.3\% with Gemma3~(27b). Note that our approach reduces the number of queries necessary for the LLM generator to discover a new failure. So even under a limited budget, one may select a more expensive model as LLM generator to be used with our method.

\section{Discussion and Conclusion}
We introduced \textsc{ProEval}, a proactive evaluation framework that uses Bayesian ideas and transfer learning to improve the sample efficiency and effectiveness of both performance estimation and failure case discovery. This is especially important for expensive-to-query and expensive-to-rate modern generative AI models. Our theoretical and empirical studies show strong promise of our proposed approach. In particular, ProEval achieves a 8-65x reduction on sample sizes for evaluation, and discovers 2-5x more failure cases than competitive baselines. 
\ifonecolumn
\paragraph{Strong GP priors.} The ability to leverage strong GP priors is a core advantage of our approach rather than a weakness. In the absence of such priors, a surrogate model would be forced to learn from scratch through direct observations of the function 
, which is undesirable given that querying 
 is highly expensive. Additionally, as noted regarding G0 and O1, ProEval incorporates a mechanism to evaluate the quality of available priors; by verifying the sufficiency of source data, the system can strategically abstain from making predictions when a reliable prior is missing.

 \paragraph{Quality of embedding models.} The embedding of prompt features is also a factor that influences the quality of \textsc{ProEval}. We conducted an ablation study on BQ-RPF with different embedding models (see details in \ref{app:ablation_embedding_model}). And the results show that using stronger embedding models tend to have a more accurate performance estimation.

The success of transfer via learned embeddings suggests that models share underlying performance patterns that can be captured in a latent space, even when evaluating entirely new datasets. Our ablation study on thinking traces further highlights that incorporating CoT reasoning into these embeddings can reduce estimation error.
\else\fi
Future work includes reducing the reliance on high-quality embeddings, developing better acquisition functions, surrogate models, variance reduction techniques, and strategies to take into account the different rater costs and costs for generating model responses based on different inputs.

\section*{Impact Statement}
\textsc{ProEval} is a framework for efficient evaluation and failure detection of resource-intensive generative AI models. By significantly reducing evaluation overhead, our approach accelerates GenAI research and rapid quality iteration. Furthermore, systematically uncovering and forecasting model failures deepens our fundamental understanding of these systems, laying the groundwork for more trustworthy AI. Ultimately, we anticipate several positive societal outcomes: enhanced transparency regarding model limitations, reduced energy consumption through optimized testing, and the development of safer, more equitable models by prioritizing diverse and challenging test cases.

\section*{Acknowledgments}
We would like to thank Zoubin Ghahramani, Virginia Aglietti, Mani Malek, Neha Kalibhat, Been Kim, Noah Fiedel, Jason Baldridge, Aditya Mone, Manoj Middepogu, Oyvind Tafjord and others for discussions on Bayesian quadrature, active testing, automated red teaming, GenAI evaluation and/or contributions to early versions of this work. We would also like to thank David Madras, Tamara Broderick's group and anonymous reviewers for detailed and insightful feedback.
\bibliography{refs}

\newpage
\appendix
\onecolumn
\section{Literature Review}
\label{app:related}
\paragraph{Efficient evaluation.}
The challenge of evaluating generative models under minimal budgets has led to two dominant research directions: static benchmark pruning, which seeks to identify a fixed, representative subset of queries, and standard active testing, which dynamically selects queries during evaluation. ProEval fundamentally differs from both in its underlying assumptions and statistical implications.

\begin{itemize}
    \item \textbf{Static benchmark pruning.} Methods such as \textit{TinyBenchmarks} \citep{polo2024tinybenchmarks} and \textit{MetaBench} \citep{kipnis2024metabench} aim to identify a fixed, representative subset of examples that approximate global performance. Similarly, \citet{perlitz2024efficient} analyze the reliability-compute tradeoff in benchmarks like HELM, proposing strategies that allocate fewer samples to lower-tier models. \textit{Anchor Points} \citep{vivek-etal-2024-anchor} and DISCO \citep{rubinstein2025discodiversifyingsamplecondensation} advance this by using "source models" to identify dataset medoids or maximize inter-model disagreement. While effective for standardizing leaderboards, these approaches are inherently target-agnostic: they assume the failure modes of the target model perfectly align with the source models used to select the subset. 
    
    If the target model exhibits idiosyncratic failures (e.g., due to a different architecture or safety tuning), static subsets fail to capture them. Recent approaches like \textit{Fluid Benchmarking} \citep{hofmann2025fluid} attempt to overcome this by combining IRT priors with computerized adaptive testing to dynamically select items based on a model's latent ability. While effective at reducing evaluation variance, this approach is restricted to static item pools—meaning it cannot synthesize novel test cases—and reduces a model's complex, multi-dimensional capabilities to a single scalar ability score. \textsc{ProEval} avoids this rigidity. 
    While we use source models to initialize a prior, our Bayesian acquisition function allows the evaluation to deviate from the prior, synthesize new questions and explore target-specific weaknesses.
   \item \textbf{Dynamic active testing.} Active testing seeks to optimize sample efficiency by iteratively selecting test points. Early work by \citet{nguyen2018active} studied a special case of active testing focused on human vetting of noisy labels.  Subsequent work adapted this framework to efficient model evaluation. \citet{kossen2021active, kossen2022active} introduced unbiased importance sampling estimators (LURE) guided by surrogates trained on the target model's training data. In the pairwise setting, \textit{DiffUse} \citep{ashurytahan2024labelefficient} adopts a similar active approach, clustering output difference vectors to select a diverse set of preference queries. 
   
   However, these approaches remain limited: white-box methods like \citet{huang2025actracer} may fail to detect errors where the model is confidently wrong, while proxy-model based methods might be susceptible to containing shared blindspots between the proxy and surrogate models. Also, since they have no knowledge of historical failure patterns across models, they must spend their initial budget identifying difficulty regions that are often already known to the community. 
   
   To address this, \citet{li2025active} proposed transferring inter-prompt patterns from historical models using neural processes. However, they frame evaluation as an \textit{imputation} problem, using Reinforcement Learning to predict every missing score. 
   
   \citet{park2025adaptive} adaptively weight human and LLM-judge labels to improve estimation, but evaluate all inputs rather than selecting which to test. \cite{wang2025cer} select test points via variance-reducing partitions, but cold-start each evaluation without leveraging cross-model structure. In contrast, \textsc{ProEval} frames evaluation as \textit{Bayesian quadrature} (integration) and uses transfer learning. We select samples to directly estimate the final aggregate metric, minimizing the error without needing to reconstruct individual data points. Furthermore, ProEval moves beyond simple estimation to perform \textit{failure case discovery} and \textit{query synthesis}, capabilities absent in their framework.

\end{itemize}

\paragraph{Active learning.}
While Active Learning (AL) shares the goal of efficient data selection, its objective is fundamentally different from ours. Standard AL methods \citep{atlas1989connectionist, settles2009active, houlsby2011bayesian, sener2017active} focus on selecting samples to \textit{train} or \textit{fine-tune} a model to minimize generalization error. In contrast,  the model is fixed in our problem setting, and we aim to select samples to \textit{estimate} a test metric (integration) and/or discover failure regions (level set estimation). Similarly, while early NLP methods use Bayesian Optimization to curate training data \citep{ruder2017learning}, the model is fixed in our problem setting. For active sampling, the most relevant work to ours is \cite{wang2018active}, and they also used Gaussian processes for superlevel set sampling, but the goal is to enable better actions in planning tasks. We aim to select samples dynamically to \textit{estimate} a test metric and discover failure regions. Thus, our acquisition functions prioritize variance reduction of the integral estimator and/or superlevel set locations rather than information gain for model parameters.

\paragraph{Simulated users and Agentic evaluation.}
To capture capabilities beyond static QA, researchers increasingly rely on ``simulated users'' \citep{philipov2024simulating} and interactive agent frameworks \citep{li2024iqa, zhao-etal-2025-auto, liu2023agentbench, chang2024agentboard}. Some approaches also employ diverse personas \citep{chen2025multi, lu2025uxagent} to model real-world usage variance. However, the heavy computational cost of these simulations makes them unsuitable for iterative development. By treating the simulation as a costly oracle $f(x)$, \textsc{ProEval} uses Bayesian Active Learning to intelligently sample only the most critical test cases, allowing researchers to leverage high-fidelity evaluation at a fraction of the cost.

\paragraph{Failure discovery and Red teaming.} 
Identifying blindspots in model capabilities is traditionally the domain of adversarial attacks and automated red teaming. Early approaches fine-tuned an ``attacker'' \citep{perez2022red} or use iterative refinement strategies like \textit{PAIR} \citep{chao2023pair} and \textit{TAP} \citep{mehrotra2023tree} which often rely on human-specified harmful behaviors. Prompt-optimization methods \citep{zhou2023largelanguagemodels} can be prone to a lack of diversity, leaving vast regions of the failure landscape unexplored.

Existing solutions trade efficiency for diversity. \textit{Rainbow Teaming} \citep{samvelyan2024rainbow} addresses diversity issues via evolutionary Quality-Diversity but is highly sample-inefficient. \cite{kalibhat2026interpreting}, on the other hand, uses concept edits to effectively explore the prompt space and the size of prompts may grow exponentially. Conversely, \textit{Bayesian Red Teaming (BRT)} \citep{lee2023query} improves efficiency using Gaussian Processes but remains an optimization method, relying on heuristic penalties such as Self-BLEU to artificially force diversity. 

Complementary to input-space searches, \textit{white-box} methods leverage internal model states to detect failure. Recent work predicts errors by probing residual streams \citep{sun-2025-probing} or training confidence networks on intermediate features \citep{corbiere2019addressing}, while others actively correct hallucinations by shifting activations along ``truth directions'' \citep{li2023inference}.

\textsc{ProEval} fundamentally differs from these methods by treating evaluation as a \textit{black-box} Bayesian Level Set Estimation (LSE) problem \citep{gotovos2013} over the input space. Rather than optimizing for a single failure mode, \textsc{ProEval} seeks to efficiently map the entire \textit{superlevel set} of inputs where the model fails, providing a global view of performance holes with minimal sample cost.

\paragraph{Bayesian quadrature and Gaussian processes.}
Most active testing strategies rely on Importance Sampling (IS) \citep{kossen2021active, huang2025actracer}, which often suffers from high variance in high-dimensional embedding spaces. We instead formulate evaluation as Bayesian Quadrature (BQ) \citep{ohagan1991bayes, ghahramani2002bayesian}.

\citet{osborne2012active} established the framework for \textit{Active} BQ, with subsequent work optimizing for speed \citep{gunter2014sampling} and parallelization \citep{wagstaff2018batch}. Crucially, this body of work focuses on scientific computing tasks, such as estimating marginal likelihoods. We differentiate ourselves by adapting BQ to the distinct regime of \textit{generative AI evaluation}, where the goal is to estimate performance metrics (e.g., accuracy, safety) over a high-dimensional semantic input space. In this setting, standard BQ suffers from a ``cold start'' problem, as generic priors struggle to model the complex failure landscape of LLMs. \textsc{ProEval} overcomes this by drawing on the existing knowledge of pre-trained Gaussian processes~\citep{wang2024pre} and score-space transfer learning~\citep{wang2018regret,kim2019learning} to construct informed priors from historical data.

\section{Proof for \texorpdfstring{\Cref{thm:posterior_bound}}{Theorem \ref{thm:posterior_bound}}}
\label{app:proof}
The proof of \Cref{thm:posterior_bound} relies on the fact that the sample mean $\hat{\mu_t}(x)$ is unbiased and bounded by the posterior variance. The detailed proof is as follows.
\begin{proof}
    By Theorem 5 of~\cite{wang2024pre}, we have 
    \begin{align}
      &  \E[\hat \mu_{t}(x)] = \mu_{t}(x) \label{eq:unbiased_mean}\\
&   |\hat \mu_{t}(x) -\mu_{t}(x)|^2 < a^2 k_{t-1}\circ\sigma^2(x) \label{eq:mean_bound} 
    \end{align}
where \Cref{eq:mean_bound} holds with probability $1-\delta$,  $a^2 = \frac{4\left(t+1+2\sqrt{t\log{\frac{4}{\delta}}} + 2\log{\frac{4}{\delta}}  - 2/N \right)}{(N-t-2)\delta}.$

Our integral estimator based on estimated GP mean and kernel is 
\begin{align}
   \E[ \hat S_t] &= \frac{1}{M} \E[ \sum_{j=1}^M \hat\mu_t(x_j)] = \frac{1}{M} \sum_{j=1}^M \E[\mu_{t}(x_j)]  = \E[S \mid D_t],\nonumber
\end{align}
which means that $\hat S_t$ is an unbiased estimator for the ground truth expectation of the integral. With probability $1-\delta$, the estimate is bounded as 

$$|\hat{S}_t - S_t| = \left| \frac{1}{M} \sum_{j=1}^M (\hat{\mu}_t(x_j) - \mu_t(x_j)) \right| \leq \frac{1}{M} \sum_{j=1}^M |\hat{\mu}_t(x_j) - \mu_t(x_j)| < a' \sqrt{\kappa + \sigma^2}.$$

Here, $a' = (\frac{4 M\left(t+1+2\sqrt{t\log{\frac{4 M}{\delta}}} + 2\log{\frac{4 M}{\delta}}  - 2/N \right)}{(N-t-2)\delta})^{\frac12}$ and $\kappa$ is a constant that bounds the value of the kernel function. 
\end{proof}

\clearpage
\section{Details on Experiments and Additional Results}
\label{app:exp_setup_details}
In this section, we describe the details on the experiments and present additional results. 

In our experiments, we used GMM-based approach for source data selection. The selection strategy uses the collection of all benchmarks except the target benchmark as the reference benchmark.

\subsection{Datasets}
\label{app:exp_datasets}
We evaluate \textsc{ProEval}'s performance estimation (\S\ref{ssec:performance_estimation_exp}) and failure case discovery (\S\ref{ssec:failure_mode_discovery_exp}) abilities across three core domains: mathematical reasoning, general world knowledge, and safety alignment. These benchmarks represent a spectrum of evaluation challenges, from objective numeric correctness to nuanced human judgment.  Most datasets are subsampled to accelerate experimentation and reduce inference costs; see \Cref{tab:dataset_sizes} for dataset sizes.

\textbf{Mathematical and logical reasoning.} We use GSM8K \cite{cobbe2021training} and SVAMP \cite{patel-etal-2021-nlp} to assess multi-step chain-of-thought reasoning. GSM8K provides high-quality grade school math problems requiring numeric solutions, while SVAMP introduces linguistic variations to test phrasing robustness. For implicit reasoning, we use StrategyQA \cite{geva2021did}, which requires the model to decompose "Yes/No" questions into latent reasoning steps (e.g., ``Did Aristotle use a laptop?''). We also conducted the experiments with the model's multi-modal reasoning ability using GQA \cite{hudson2019gqa} dataset. 

\textbf{General world knowledge.} We employ the MMLU \cite{hendryckstest2021} benchmark, covering 57 diverse subjects across STEM, the humanities, and social sciences. We use two subsets a) abstract algebra and b) professional law for evaluating more general-purpose capabilities across varying difficulty levels in a multiple-choice format.

\textbf{Safety and human alignment.} To evaluate model safety and rater alignment, we use ToxicChat \cite{lin2023toxicchat} and Google Civil Comments (Jigsaw) \cite{borkan2019nuanced}, which provide real-world user-AI interactions and online comments annotated for toxicity. Furthermore, we include the Google DICES-350 \cite{aroyo2023dices} and the text-to-image DIVE \cite{rastogi2025viewsafetydeepdive} datasets, which contains expert safety ratings across 123 unique raters. Unlike the reasoning tasks, these safety datasets allow us to evaluate models as \textit{reward models}. Here, the score $f(x)$ measures the alignment between model-predicted labels and ground-truth human annotations.

\begin{table}[h]
\centering
\label{tab:dataset_sizes}
\begin{tabular}{llrr}
\toprule
\textbf{Dataset} & \textbf{Modality} & \textbf{Our Size} & \textbf{Standard Size} \\
\midrule
DICES-350  & Text             & 1,500  & $42k$     \\
DIVE       & Text \& Image    & 1,500  & 38,410    \\
GQA        & Text \& Image    & 2,000  & $22.7M$   \\
GSM8K      & Text             & 1,319  & 8,500     \\
Jigsaw     & Text             & 1,500  & 150,000   \\
MMLU       & Text             & 1,534  & 1,534     \\
StrategyQA & Text             & 1,603  & 2,780     \\
SVAMP      & Text             & 700    & 1,000     \\
ToxicChat  & Text             & 1,500  & 10,000    \\
\bottomrule
\end{tabular}
\caption{Comparison of dataset sizes and modalities. We utilize subsets of the original data to facilitate rapid experimentation and minimize inference costs; for MMLU, the \textit{Professional Law} subset is used.}
\end{table}

\subsubsection{Baselines for performance estimation}
\label{app:baselines}

\textbf{Baselines.}
We benchmark our approach against five sampling strategies. 
\textit{Random Sampling} provides a simple baseline by uniformly selecting test cases to estimate global accuracy via the sample mean. 
For active strategies, we adopt the \textit{Active Testing} framework~\cite{kossen2021active}, which uses surrogate models to guide acquisition while correcting for selection bias. 
We employ Logistic Regression (\textit{LR}) and Random Forest (\textit{RF}) surrogates trained on evaluation results of the 11 models to define acquisition probabilities $q(i)$. 
To unbias the estimates, we evaluate two weighting schemes: standard Importance Sampling (\textit{IS}) using weights $w_m = 1/(n \cdot q(i_m))$, and \textit{LURE}, which accounts for sequential sampling without replacement:
$\frac{1}{M} \sum_{m=1}^M \left(1 + \frac{N-M}{N-m}\left(\frac{1}{(N-m+1) \cdot q(i_m)} - 1\right)\right) f_m$.

Combinations of these surrogates and estimators yield four active variants: \textit{LR+IS}, \textit{LR+LURE}, \textit{RF+IS}, and \textit{RF+LURE}.

\subsection{Ablation Studies with Pre-training Data Selection}
\label{app:pretrain-ablation}
The foundational assumption of our transfer learning approach is that while language models vary in capability, their failure modes are rarely orthogonal. To validate this, we analyze the pairwise correlation of model performance across benchmarks. As visualized in \Cref{fig:model_correlation}, we observe uniformly positive correlations (indicated by the dark red blocks) across all model pairs on both StrategyQA and SVAMP. This confirms that hard questions tend to be hard for all models, allowing \textsc{ProEval} to effectively leverage historical data.

\begin{figure}[!htbp]
    \centering
    \begin{subfigure}[b]{0.49\linewidth}
        \includegraphics[width=\linewidth]{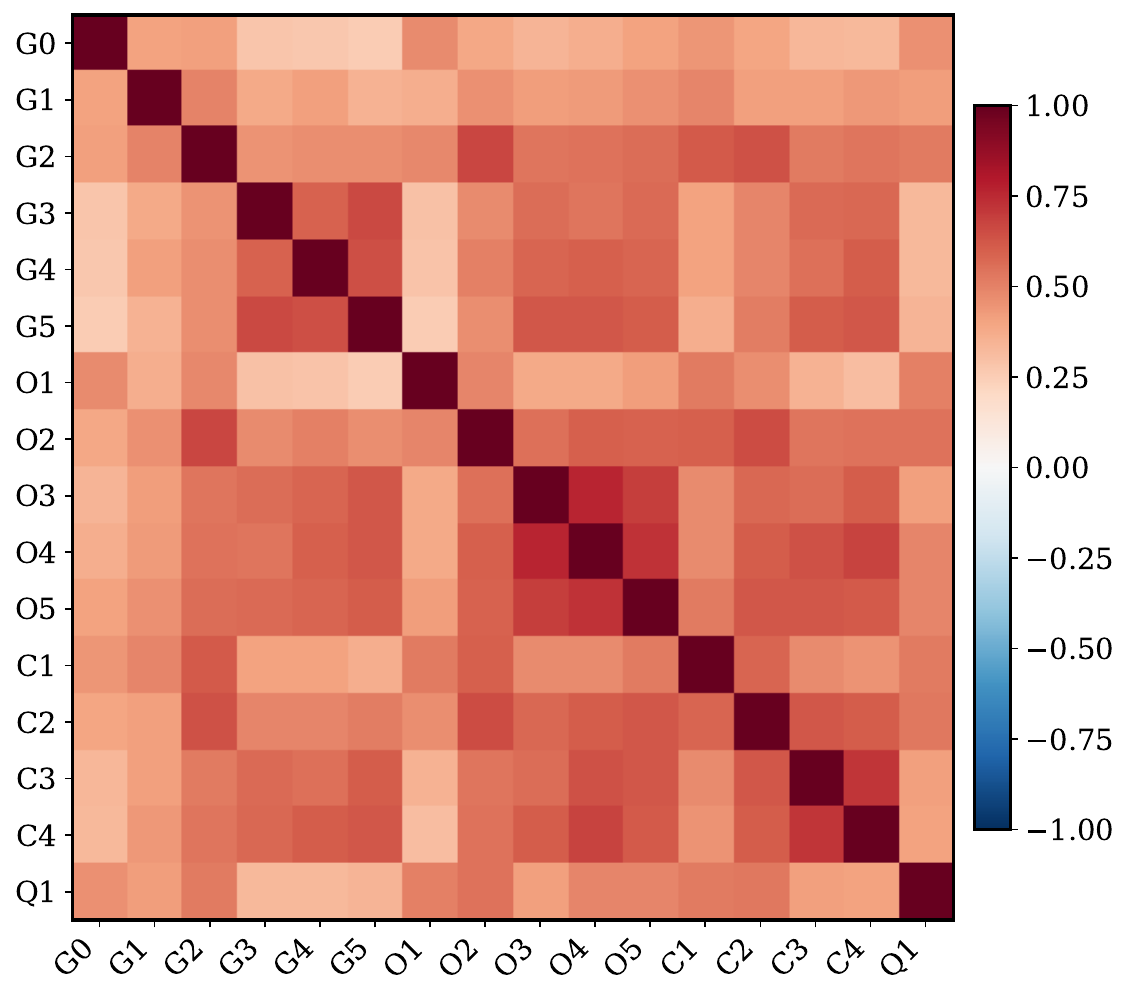}
        \caption{StrategyQA}
        \label{subfig:corr-strategyqa}
    \end{subfigure}
    \hfill
    \begin{subfigure}[b]{0.49\linewidth}
        \includegraphics[width=\linewidth]{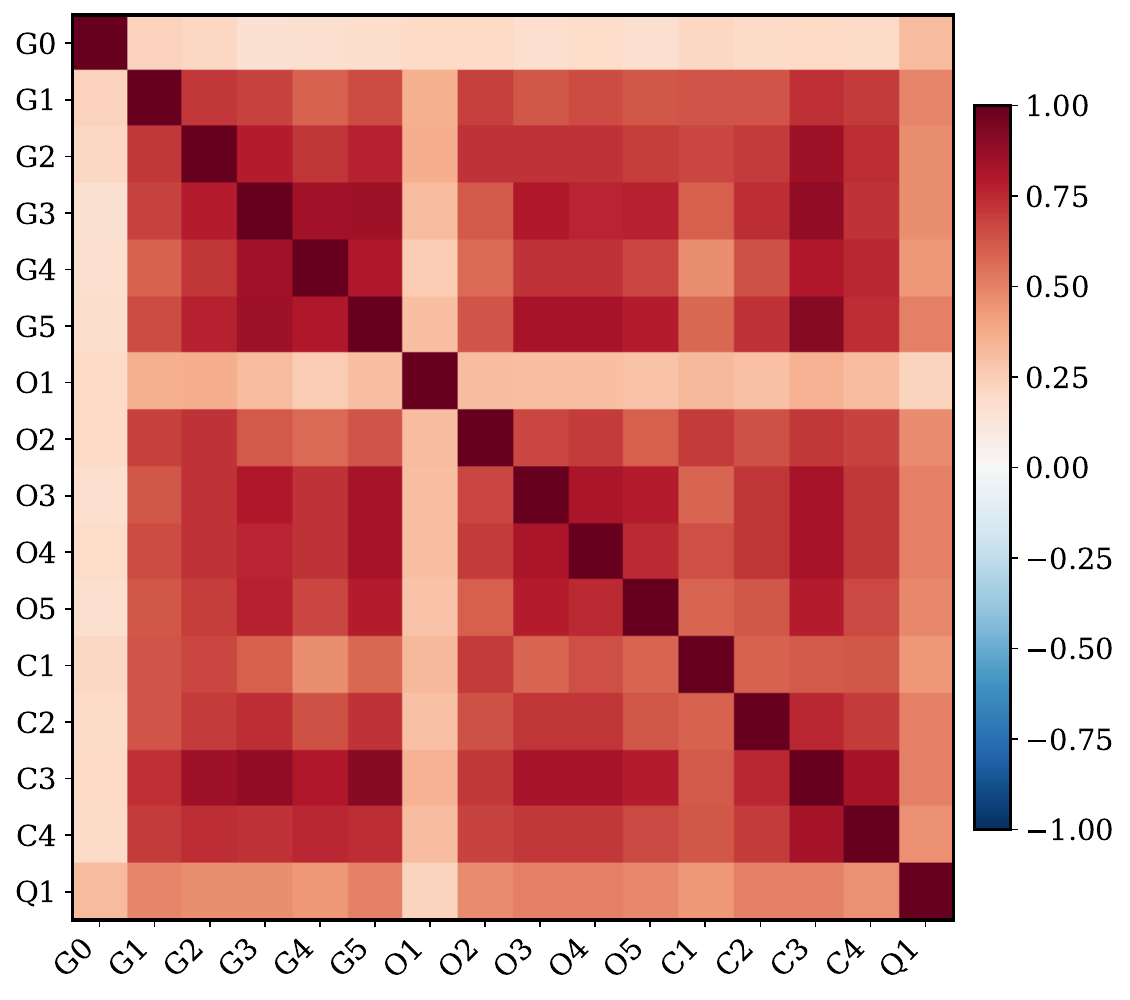}
        \caption{SVAMP}
        \label{subfig:corr-svamp}
    \end{subfigure}
    \caption{Correlation between model performance on different benchmarks. Positive correlations across all model pairs (red blocks) indicate that failure modes are shared rather than idiosyncratic, validating the use of transfer learning.}
    \label{fig:model_correlation}
\end{figure}

Building on this shared structure, \S\ref{ssec:transfer} introduces an approach to select historical datasets for pre-training the GP. \Cref{fig:model_projections_all_pca} visualizes the evaluation data from each model to show the Gaussian clusters. For example, we can use SVAMP as the hold-out benchmark to decide which models' data can be used to pre-train the GP when evaluating target model Gemini 2.5 Flash (G2) on StrategyQA or GSM8K. In this case, Gemma-3-12B (G0) and GPT-3.5 Turbo (O1) fall outside the cluster and are consequently removed from the pre-training data. 

\begin{figure}[!h]
    \centering
    \includegraphics[width=1\linewidth]{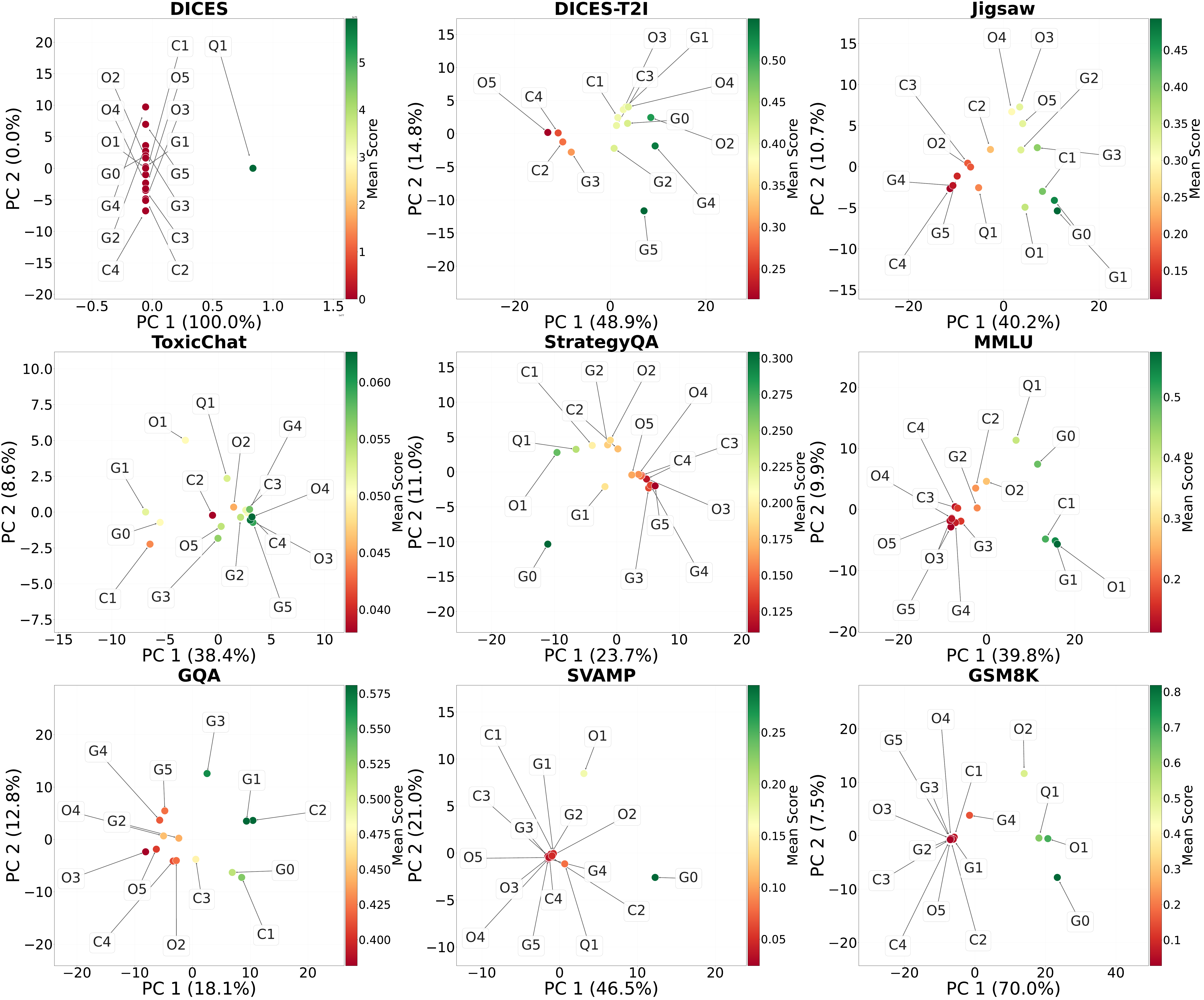}
    \caption{\textbf{Visualization of GenAI Model Performance Profiles across 9 Benchmarks via PCA.}  Each subplot displays a 2D projection of 14 AI models based on their granular, per-question score feature matrices, derived via Principal Component Analysis (PCA) with the percentage of total explained variance indicated on the axes. The color of each marker reflects the model's average score on that specific benchmark, mapping from red (lower performance) to green (higher performance), while model names are abbreviated using short codes (e.g., G0–G5 for Google, O2–O5 for OpenAI, C1–C4 for Anthropic detailed in \S\ref{ssec:exp_setup}) to maximize legibility in dense regions.  Models that cluster together make similar errors on the same questions. } %
    \label{fig:model_projections_all_pca}
\end{figure}

To further demonstrate that this pre-training data selection strategy is necessary, we show that blindly choosing data can significantly degrade BQ performance. \Cref{fig:pretrain-heatmaps} presents an ablation study evaluating all pairwise combinations of pre-training data for two target models on GSM8K. Each cell in the heatmaps shows the final MAE when using that specific pair of models as the pre-training set, where lower values (green) indicate better prior quality. The results clearly show that pre-train selection matters significantly: poor choices of model pairs can yield up to a 100$\times$ higher MAE, and the optimal pair varies by target model.

\begin{figure}[H]
    \centering
    \begin{subfigure}[b]{0.48\textwidth}
        \centering
        \includegraphics[width=\textwidth]{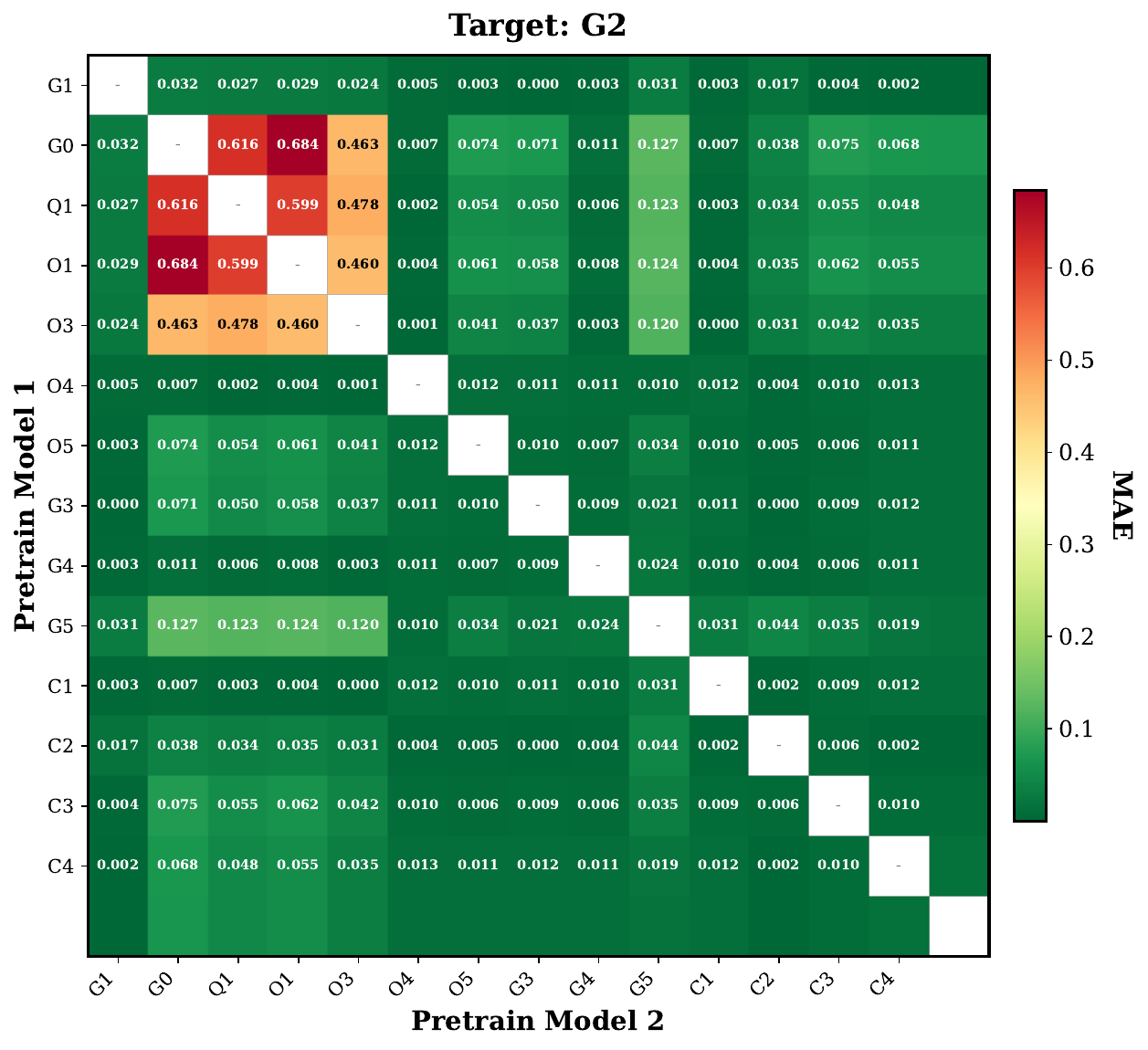}
        \caption{Target: Gemini 2.5 Flash (G2)}
        \label{fig:pretrain-heatmap-gemini}
    \end{subfigure}
    \hfill
    \begin{subfigure}[b]{0.48\textwidth}
        \centering
        \includegraphics[width=\textwidth]{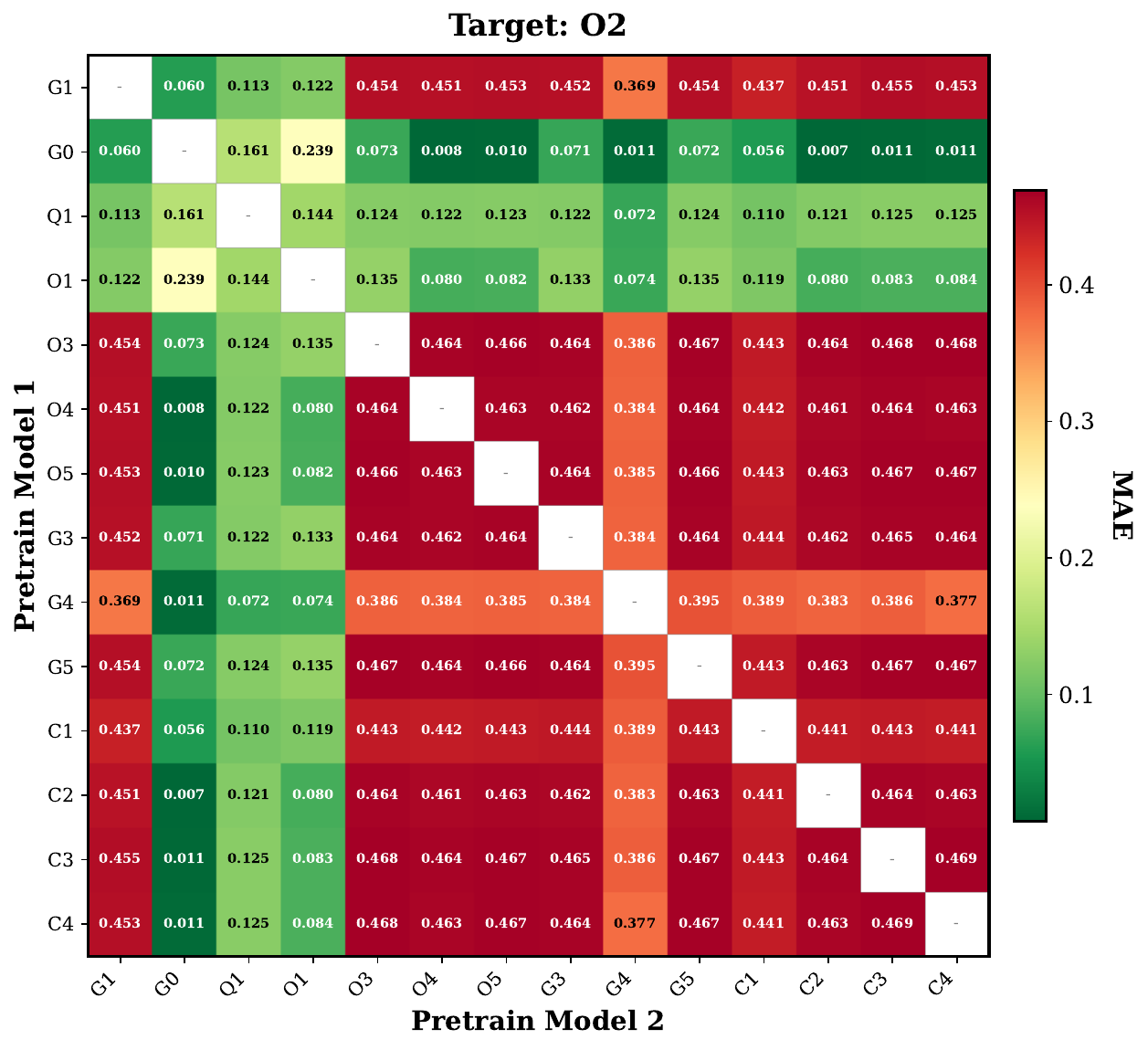}
        \caption{Target: GPT-4o (O2)}
        \label{fig:pretrain-heatmap-gpt4o}
    \end{subfigure}
    \caption{Pretrain model pair ablation on GSM8K. Each cell shows MAE when using that pair of models as the pretrain set. The results show that (1) pretrain selection matters significantly---poor pairs yield 100$\times$ higher MAE, and (2) the optimal pretrain pair varies by target model.}
    \label{fig:pretrain-heatmaps}
\end{figure}

\subsection{Additional Results on Performance Estimation}

\subsubsection{Results on Different Target Models}
\label{app:different_target_models_results}
Apart from the Gemini 2.5 Flash model, we extended our evaluation of \textsc{ProEval} to a comprehensive set of other frontier and legacy generative AI models using the same experimental setting as Table~\ref{tab:mae_1pct_new_pair}. The full results are presented in Tables~\ref{tab:mae_1pct_gpt35turbo} through \ref{tab:mae_1pct_qwen332b}. As shown in these results, the \textit{Active Selection + BQ} methods consistently achieve the lowest MAE in the majority of benchmarks, outperforming both random sampling and strong importance sampling baselines like LURE. This demonstrates that the variance-reduction property of active BQ acquisition generalizes well across different model architectures.

\begin{table*}[t]
    \centering
    \setlength{\tabcolsep}{3pt}
    \resizebox{\textwidth}{!}{
    \begin{sc}
    \begin{tabular}{lccccccccc}
        \toprule
                \textbf{Method} & \textbf{DICES} & \textbf{DIVE} & \textbf{GQA} & \textbf{GSM8K} & \textbf{JigSaw} & \textbf{MMLU} & \textbf{StrategyQA} & \textbf{SVAMP} & \textbf{ToxicChat} \\
        \midrule
        Random Sampling
            & 0.057$\pm$0.031 & --- & --- & 0.076$\pm$0.065 & 0.089$\pm$0.084 & 0.105$\pm$0.058 & 0.088$\pm$0.037 & 0.083$\pm$0.070 & 0.037$\pm$0.016 \\
        RF+IS
            & 0.060$\pm$0.030 & --- & --- & 0.143$\pm$0.064 & 0.102$\pm$0.091 & 0.060$\pm$0.041 & 0.099$\pm$0.043 & 0.128$\pm$0.096 & 0.047$\pm$0.026 \\
        LR+IS
            & 0.072$\pm$0.057 & --- & --- & 0.062$\pm$0.052 & 0.084$\pm$0.053 & 0.052$\pm$0.044 & 0.089$\pm$0.064 & 0.177$\pm$0.145 & 0.023$\pm$0.013 \\
        RF+LURE
            & 0.071$\pm$0.034 & --- & --- & 0.106$\pm$0.091 & 0.075$\pm$0.095 & 0.103$\pm$0.104 & 0.091$\pm$0.059 & 0.121$\pm$0.069 & 0.037$\pm$0.016 \\
        LR+LURE
            & 0.063$\pm$0.055 & --- & --- & 0.119$\pm$0.080 & 0.095$\pm$0.042 & 0.113$\pm$0.044 & 0.100$\pm$0.088 & 0.195$\pm$0.129 & 0.037$\pm$0.016 \\
        \midrule
        \multicolumn{10}{l}{\textit{Random Selection + BQ}} \\
        \midrule
        BQ-RPF Rand
            & 0.065$\pm$0.056 & --- & --- & 0.110$\pm$0.064 & 0.041$\pm$0.049 & 0.123$\pm$0.061 & 0.057$\pm$0.045 & 0.120$\pm$0.097 & 0.031$\pm$0.022 \\
        BQ-TPF Rand
            & \textbf{0.057$\pm$0.025} & --- & --- & 0.040$\pm$0.019 & 0.066$\pm$0.045 & 0.120$\pm$0.082 & 0.065$\pm$0.031 & 0.032$\pm$0.049 & 0.047$\pm$0.024 \\
        BQ-SF Rand
            & 0.108 & --- & --- & 0.206$\pm$0.045 & 0.085$\pm$0.028 & 0.164$\pm$0.027 & 0.083$\pm$0.017 & 0.102$\pm$0.033 & \textbf{0.002$\pm$0.001} \\
        \midrule
        \multicolumn{10}{l}{\textit{Active Selection + BQ}} \\
        \midrule
        BQ-RPF
            & 0.072 & --- & --- & 0.143 & 0.183 & 0.018 & 0.079 & \textbf{0.018} & 0.039 \\
        BQ-RPF Rounded
            & 0.087 & --- & --- & 0.294 & 0.154 & \textbf{0.004} & 0.117 & 0.116 & 0.013 \\
        BQ-TPF
            & 0.061 & --- & --- & \textbf{0.011} & 0.010 & 0.106 & \textbf{0.042} & 0.488 & 0.033 \\
        BQ-TPF Rounded
            & 0.095 & --- & --- & 0.306 & 0.085 & 0.222 & 0.132 & 0.831 & 0.050 \\
        BQ-SF
            & 0.108 & --- & --- & 0.116 & 0.020 & 0.137 & 0.058 & 0.122 & 0.005 \\
        BQ-SF Rounded
            & 0.108 & --- & --- & 0.218 & \textbf{0.007} & 0.085 & 0.083 & 0.136 & 0.012 \\
        \bottomrule
    \end{tabular}
    \end{sc}}

    \vspace{0.5em}
    \raggedright
    \caption{MAE ($\downarrow$) at 1\% budget for \textbf{GPT-3.5 Turbo}. $\pm$ indicates mean $\pm$ std over 5 runs; active BQ is deterministic. Best results are \textbf{bolded}. The model doesn't support image inputs so we don't have DIVE and GQA numbers.}
    \label{tab:mae_1pct_gpt35turbo}
\end{table*}

\begin{table*}[t]
    \centering
    \setlength{\tabcolsep}{3pt}
    \resizebox{\textwidth}{!}{
    \begin{sc}
    \begin{tabular}{lccccccccc}
        \toprule
                \textbf{Method} & \textbf{DICES} & \textbf{DIVE} & \textbf{GQA} & \textbf{GSM8K} & \textbf{JigSaw} & \textbf{MMLU} & \textbf{StrategyQA} & \textbf{SVAMP} & \textbf{ToxicChat} \\
        \midrule
        Random Sampling
            & 0.107$\pm$0.043 & 0.081$\pm$0.049 & 0.074$\pm$0.032 & 0.115$\pm$0.071 & 0.085$\pm$0.030 & 0.126$\pm$0.066 & 0.064$\pm$0.032 & 0.076$\pm$0.022 & 0.049$\pm$0.022 \\
        RF+IS
            & 0.080$\pm$0.078 & 0.074$\pm$0.052 & 0.101$\pm$0.050 & 0.102$\pm$0.047 & 0.066$\pm$0.056 & 0.089$\pm$0.035 & 0.079$\pm$0.089 & 0.049 & 0.047$\pm$0.002 \\
        LR+IS
            & 0.079$\pm$0.052 & 0.118$\pm$0.029 & 0.067$\pm$0.037 & 0.157$\pm$0.083 & 0.101$\pm$0.053 & 0.041$\pm$0.033 & 0.069$\pm$0.043 & 0.095$\pm$0.073 & 0.026$\pm$0.010 \\
        RF+LURE
            & 0.072$\pm$0.051 & 0.091$\pm$0.050 & 0.098$\pm$0.063 & 0.103$\pm$0.083 & 0.083$\pm$0.057 & 0.133$\pm$0.061 & 0.077$\pm$0.026 & 0.076$\pm$0.022 & 0.043$\pm$0.012 \\
        LR+LURE
            & 0.080$\pm$0.046 & 0.124$\pm$0.078 & 0.084$\pm$0.055 & 0.072$\pm$0.063 & 0.078$\pm$0.031 & 0.054$\pm$0.048 & 0.068$\pm$0.021 & 0.067$\pm$0.022 & 0.026$\pm$0.010 \\
        \midrule
        \multicolumn{10}{l}{\textit{Random Selection + BQ}} \\
        \midrule
        BQ-RPF Rand
            & 0.078$\pm$0.037 & 0.165$\pm$0.111 & 0.046$\pm$0.040 & 0.113$\pm$0.059 & 0.058$\pm$0.041 & 0.078$\pm$0.042 & 0.066$\pm$0.014 & 0.010$\pm$0.007 & 0.027$\pm$0.014 \\
        BQ-TPF Rand
            & 0.031$\pm$0.025 & 0.041$\pm$0.045 & 0.066$\pm$0.023 & 0.088$\pm$0.056 & 0.039$\pm$0.029 & 0.104$\pm$0.053 & 0.048$\pm$0.037 & 0.023$\pm$0.002 & 0.028$\pm$0.012 \\
        BQ-SF Rand
            & 0.027$\pm$0.003 & 0.070$\pm$0.012 & 0.055$\pm$0.028 & 0.057$\pm$0.033 & 0.050$\pm$0.032 & 0.040$\pm$0.030 & 0.015$\pm$0.010 & 0.013$\pm$0.009 & 0.008 \\
        \midrule
        \multicolumn{10}{l}{\textit{Active Selection + BQ}} \\
        \midrule
        BQ-RPF
            & 0.078 & 0.096 & 0.027 & 0.009 & 0.038 & \textbf{0.030} & 0.012 & 0.003 & 0.002 \\
        BQ-RPF Rounded
            & 0.061 & 0.099 & 0.021 & \textbf{0.000} & 0.025 & 0.117 & 0.041 & 0.016 & \textbf{0.001} \\
        BQ-TPF
            & 0.072 & \textbf{0.032} & 0.153 & 0.113 & 0.062 & 0.190 & 0.097 & 0.017 & 0.029 \\
        BQ-TPF Rounded
            & 0.151 & 0.262 & 0.421 & 0.420 & 0.022 & 0.269 & 0.185 & 0.049 & 0.045 \\
        BQ-SF
            & \textbf{0.022} & 0.093 & 0.026 & 0.019 & 0.051 & 0.088 & 0.015 & \textbf{0.003} & 0.008 \\
        BQ-SF Rounded
            & 0.058 & 0.079 & \textbf{0.002} & 0.144 & \textbf{0.018} & 0.140 & \textbf{0.004} & 0.016 & \textbf{0.001} \\
        \bottomrule
    \end{tabular}
    \end{sc}}

    \vspace{0.5em}
    \raggedright
    \caption{MAE ($\downarrow$) at 1\% budget for \textbf{GPT-4o}. $\pm$ indicates mean $\pm$ std over 5 runs; active BQ is deterministic. Best results are \textbf{bolded}.}
    \label{tab:mae_1pct_gpt4o}
\end{table*}

\begin{table*}[t]
    \centering
    \setlength{\tabcolsep}{3pt}
    \resizebox{\textwidth}{!}{
    \begin{sc}
    \begin{tabular}{lccccccccc}
        \toprule
                \textbf{Method} & \textbf{DICES} & \textbf{DIVE} & \textbf{GQA} & \textbf{GSM8K} & \textbf{JigSaw} & \textbf{MMLU} & \textbf{StrategyQA} & \textbf{SVAMP} & \textbf{ToxicChat} \\
        \midrule
        Random Sampling
            & 0.125$\pm$0.117 & 0.041$\pm$0.052 & 0.076$\pm$0.028 & 0.032$\pm$0.004 & 0.043$\pm$0.031 & 0.087$\pm$0.051 & 0.064$\pm$0.029 & 0.065$\pm$0.031 & 0.052$\pm$0.024 \\
        RF+IS
            & 0.134$\pm$0.101 & 0.106$\pm$0.023 & 0.097$\pm$0.069 & 0.055$\pm$0.034 & 0.087$\pm$0.011 & 0.050$\pm$0.035 & 0.115$\pm$0.054 & 0.040 & 0.075$\pm$0.020 \\
        LR+IS
            & 0.067$\pm$0.051 & 0.094$\pm$0.098 & 0.101$\pm$0.040 & 0.037$\pm$0.005 & 0.174$\pm$0.088 & 0.113$\pm$0.061 & 0.053$\pm$0.029 & 0.094$\pm$0.080 & 0.018$\pm$0.026 \\
        RF+LURE
            & 0.180$\pm$0.095 & 0.071$\pm$0.057 & 0.086$\pm$0.080 & 0.035$\pm$0.005 & 0.131$\pm$0.059 & 0.072$\pm$0.044 & 0.024$\pm$0.013 & 0.065$\pm$0.031 & 0.037$\pm$0.030 \\
        LR+LURE
            & 0.093$\pm$0.082 & 0.093$\pm$0.053 & 0.041$\pm$0.022 & 0.030 & 0.077$\pm$0.065 & 0.039$\pm$0.030 & 0.065$\pm$0.029 & 0.053$\pm$0.025 & 0.067$\pm$0.043 \\
        \midrule
        \multicolumn{10}{l}{\textit{Random Selection + BQ}} \\
        \midrule
        BQ-RPF Rand
            & 0.028$\pm$0.007 & 0.056$\pm$0.037 & 0.044$\pm$0.022 & 0.026$\pm$0.010 & 0.093$\pm$0.030 & 0.059$\pm$0.035 & 0.079$\pm$0.069 & 0.009$\pm$0.004 & 0.037$\pm$0.017 \\
        BQ-TPF Rand
            & 0.038$\pm$0.032 & 0.032$\pm$0.022 & 0.109$\pm$0.063 & 0.041$\pm$0.031 & 0.121$\pm$0.104 & 0.057$\pm$0.020 & 0.052$\pm$0.022 & 0.015$\pm$0.002 & 0.043$\pm$0.023 \\
        BQ-SF Rand
            & 0.072$\pm$0.015 & 0.022$\pm$0.015 & 0.057$\pm$0.018 & 0.032$\pm$0.010 & 0.046$\pm$0.026 & 0.065$\pm$0.025 & 0.016$\pm$0.010 & 0.016$\pm$0.007 & 0.010 \\
        \midrule
        \multicolumn{10}{l}{\textit{Active Selection + BQ}} \\
        \midrule
        BQ-RPF
            & \textbf{0.012} & 0.227 & \textbf{0.007} & \textbf{0.000} & \textbf{0.003} & 0.052 & 0.075 & \textbf{0.006} & 0.005 \\
        BQ-RPF Rounded
            & 0.073 & 0.237 & 0.016 & 0.004 & 0.031 & 0.022 & 0.011 & 0.007 & 0.011 \\
        BQ-TPF
            & 0.191 & 0.039 & 0.109 & 0.009 & 0.097 & 0.103 & 0.129 & 0.008 & 0.046 \\
        BQ-TPF Rounded
            & 0.620 & 0.331 & 0.382 & 0.030 & 0.174 & 0.123 & 0.145 & 0.040 & 0.062 \\
        BQ-SF
            & 0.023 & 0.038 & 0.051 & 0.029 & 0.013 & 0.037 & \textbf{0.001} & 0.007 & \textbf{0.005} \\
        BQ-SF Rounded
            & 0.012 & \textbf{0.006} & 0.033 & 0.002 & 0.019 & \textbf{0.002} & 0.012 & 0.007 & 0.007 \\
        \bottomrule
    \end{tabular}
    \end{sc}}

    \vspace{0.5em}
    \raggedright
    \caption{MAE ($\downarrow$) at 1\% budget for \textbf{GPT-5}. $\pm$ indicates mean $\pm$ std over 5 runs; active BQ is deterministic. Best results are \textbf{bolded}.}
    \label{tab:mae_1pct_gpt5}
\end{table*}

\begin{table*}[t]
    \centering
    \setlength{\tabcolsep}{3pt}
    \resizebox{\textwidth}{!}{
    \begin{sc}
    \begin{tabular}{lccccccccc}
        \toprule
                \textbf{Method} & \textbf{DICES} & \textbf{DIVE} & \textbf{GQA} & \textbf{GSM8K} & \textbf{JigSaw} & \textbf{MMLU} & \textbf{StrategyQA} & \textbf{SVAMP} & \textbf{ToxicChat} \\
        \midrule
        Random Sampling
            & 0.112$\pm$0.049 & 0.046$\pm$0.030 & 0.050$\pm$0.002 & 0.036 & 0.048$\pm$0.024 & 0.061$\pm$0.034 & 0.044$\pm$0.019 & 0.078$\pm$0.031 & 0.053$\pm$0.024 \\
        RF+IS
            & 0.108$\pm$0.094 & 0.112$\pm$0.034 & 0.086$\pm$0.038 & 0.075$\pm$0.048 & 0.069$\pm$0.046 & 0.062$\pm$0.041 & 0.073$\pm$0.056 & 0.040 & 0.101$\pm$0.047 \\
        LR+IS
            & 0.118$\pm$0.046 & 0.091$\pm$0.072 & 0.070$\pm$0.026 & 0.049$\pm$0.028 & 0.138$\pm$0.072 & 0.084$\pm$0.047 & 0.114$\pm$0.077 & 0.065$\pm$0.031 & 0.030$\pm$0.032 \\
        RF+LURE
            & 0.079$\pm$0.042 & 0.166$\pm$0.151 & 0.120$\pm$0.075 & 0.036 & 0.073$\pm$0.031 & 0.040$\pm$0.048 & 0.122$\pm$0.064 & 0.053$\pm$0.025 & 0.099$\pm$0.137 \\
        LR+LURE
            & 0.076$\pm$0.075 & 0.119$\pm$0.076 & 0.073$\pm$0.050 & 0.050$\pm$0.029 & 0.123$\pm$0.089 & 0.046$\pm$0.051 & 0.026$\pm$0.013 & 0.053$\pm$0.025 & 0.053$\pm$0.025 \\
        \midrule
        \multicolumn{10}{l}{\textit{Random Selection + BQ}} \\
        \midrule
        BQ-RPF Rand
            & 0.027$\pm$0.024 & 0.076$\pm$0.054 & 0.044$\pm$0.014 & 0.032$\pm$0.021 & 0.058$\pm$0.046 & 0.066$\pm$0.029 & 0.033$\pm$0.017 & 0.008$\pm$0.004 & 0.030$\pm$0.014 \\
        BQ-TPF Rand
            & 0.057$\pm$0.038 & 0.048$\pm$0.046 & 0.039$\pm$0.035 & 0.023$\pm$0.007 & 0.042$\pm$0.038 & 0.070$\pm$0.055 & 0.043$\pm$0.024 & 0.016$\pm$0.001 & 0.042$\pm$0.030 \\
        BQ-SF Rand
            & 0.037$\pm$0.011 & \textbf{0.020$\pm$0.012} & 0.023$\pm$0.007 & 0.038$\pm$0.018 & 0.037$\pm$0.030 & 0.044$\pm$0.010 & \textbf{0.010$\pm$0.005} & 0.019$\pm$0.003 & 0.010 \\
        \midrule
        \multicolumn{10}{l}{\textit{Active Selection + BQ}} \\
        \midrule
        BQ-RPF
            & 0.049 & 0.169 & 0.052 & \textbf{0.005} & 0.027 & 0.034 & 0.052 & \textbf{0.006} & 0.056 \\
        BQ-RPF Rounded
            & \textbf{0.013} & 0.190 & 0.031 & 0.009 & 0.040 & 0.037 & 0.034 & 0.007 & 0.007 \\
        BQ-TPF
            & 0.073 & 0.167 & 0.025 & 0.013 & \textbf{0.001} & \textbf{0.004} & 0.070 & 0.008 & 0.048 \\
        BQ-TPF Rounded
            & 0.211 & 0.324 & 0.324 & 0.036 & 0.077 & 0.121 & 0.159 & 0.040 & 0.063 \\
        BQ-SF
            & 0.027 & 0.068 & \textbf{0.013} & 0.023 & 0.014 & 0.024 & 0.024 & 0.007 & \textbf{0.006} \\
        BQ-SF Rounded
            & 0.018 & 0.021 & 0.038 & 0.007 & 0.008 & 0.014 & 0.035 & 0.007 & 0.009 \\
        \bottomrule
    \end{tabular}
    \end{sc}}

    \vspace{0.5em}
    \raggedright
    \caption{MAE ($\downarrow$) at 1\% budget for \textbf{GPT-5.1}. $\pm$ indicates mean $\pm$ std over 5 runs; active BQ is deterministic. Best results are \textbf{bolded}.}
    \label{tab:mae_1pct_gpt51}
\end{table*}

\begin{table*}[t]
    \centering
    \setlength{\tabcolsep}{3pt}
    \resizebox{\textwidth}{!}{
    \begin{sc}
    \begin{tabular}{lccccccccc}
        \toprule
                \textbf{Method} & \textbf{DICES} & \textbf{DIVE} & \textbf{GQA} & \textbf{GSM8K} & \textbf{JigSaw} & \textbf{MMLU} & \textbf{StrategyQA} & \textbf{SVAMP} & \textbf{ToxicChat} \\
        \midrule
        Random Sampling
            & 0.139$\pm$0.055 & 0.092$\pm$0.044 & 0.071$\pm$0.051 & 0.032 & 0.055$\pm$0.046 & 0.061$\pm$0.049 & 0.080$\pm$0.024 & 0.064$\pm$0.035 & 0.051$\pm$0.021 \\
        RF+IS
            & 0.149$\pm$0.059 & 0.098$\pm$0.048 & 0.093$\pm$0.046 & 0.079$\pm$0.060 & 0.094$\pm$0.081 & 0.057$\pm$0.071 & 0.088$\pm$0.023 & 0.036 & 0.082$\pm$0.069 \\
        LR+IS
            & 0.129$\pm$0.052 & 0.075$\pm$0.055 & 0.071$\pm$0.051 & 0.036$\pm$0.004 & 0.105$\pm$0.053 & 0.115$\pm$0.056 & 0.072$\pm$0.042 & 0.093$\pm$0.083 & 0.026$\pm$0.026 \\
        RF+LURE
            & 0.075$\pm$0.045 & 0.079$\pm$0.058 & 0.085$\pm$0.044 & 0.035$\pm$0.004 & 0.100$\pm$0.033 & 0.055$\pm$0.046 & 0.071$\pm$0.031 & 0.050$\pm$0.029 & 0.041$\pm$0.025 \\
        LR+LURE
            & 0.074$\pm$0.052 & 0.036$\pm$0.026 & 0.080$\pm$0.037 & 0.062$\pm$0.061 & 0.129$\pm$0.072 & 0.082$\pm$0.025 & 0.060$\pm$0.037 & 0.050$\pm$0.029 & 0.051$\pm$0.021 \\
        \midrule
        \multicolumn{10}{l}{\textit{Random Selection + BQ}} \\
        \midrule
        BQ-RPF Rand
            & 0.022$\pm$0.017 & 0.114$\pm$0.095 & 0.045$\pm$0.038 & 0.019$\pm$0.012 & 0.050$\pm$0.046 & 0.030$\pm$0.019 & 0.033$\pm$0.014 & 0.034$\pm$0.034 & 0.032$\pm$0.017 \\
        BQ-TPF Rand
            & 0.053$\pm$0.033 & 0.080$\pm$0.035 & 0.050$\pm$0.066 & 0.055$\pm$0.058 & 0.080$\pm$0.050 & 0.058$\pm$0.044 & 0.065$\pm$0.033 & 0.011$\pm$0.002 & 0.060$\pm$0.017 \\
        BQ-SF Rand
            & 0.018$\pm$0.007 & 0.149$\pm$0.030 & 0.053$\pm$0.015 & 0.041$\pm$0.025 & 0.037$\pm$0.033 & 0.077$\pm$0.027 & \textbf{0.010$\pm$0.009} & 0.019$\pm$0.006 & \textbf{0.001} \\
        \midrule
        \multicolumn{10}{l}{\textit{Active Selection + BQ}} \\
        \midrule
        BQ-RPF
            & 0.010 & \textbf{0.019} & 0.202 & 0.030 & 0.031 & 0.013 & 0.051 & 0.011 & 0.057 \\
        BQ-RPF Rounded
            & \textbf{0.001} & 0.057 & 0.105 & \textbf{0.004} & 0.061 & 0.025 & 0.016 & \textbf{0.003} & 0.023 \\
        BQ-TPF
            & 0.069 & 0.049 & 0.016 & 0.009 & 0.015 & 0.056 & 0.045 & 0.003 & 0.038 \\
        BQ-TPF Rounded
            & 0.295 & 0.243 & 0.275 & 0.032 & 0.064 & 0.132 & 0.120 & 0.036 & 0.054 \\
        BQ-SF
            & 0.014 & 0.145 & 0.035 & 0.027 & 0.011 & 0.026 & 0.014 & 0.011 & 0.002 \\
        BQ-SF Rounded
            & 0.058 & 0.059 & \textbf{0.008} & \textbf{0.004} & \textbf{0.001} & \textbf{0.010} & 0.040 & \textbf{0.003} & 0.005 \\
        \bottomrule
    \end{tabular}
    \end{sc}}

    \vspace{0.5em}
    \raggedright
    \caption{MAE ($\downarrow$) at 1\% budget for \textbf{GPT-5.2}. $\pm$ indicates mean $\pm$ std over 5 runs; active BQ is deterministic. Best results are \textbf{bolded}.}
    \label{tab:mae_1pct_gpt52}
\end{table*}

\begin{table*}[t]
    \centering
    \setlength{\tabcolsep}{3pt}
    \resizebox{\textwidth}{!}{
    \begin{sc}
    \begin{tabular}{lccccccccc}
        \toprule
                \textbf{Method} & \textbf{DICES} & \textbf{DIVE} & \textbf{GQA} & \textbf{GSM8K} & \textbf{JigSaw} & \textbf{MMLU} & \textbf{StrategyQA} & \textbf{SVAMP} & \textbf{ToxicChat} \\
        \midrule
        Random Sampling
            & 0.109$\pm$0.085 & 0.127$\pm$0.068 & 0.071$\pm$0.036 & 0.030$\pm$0.027 & 0.090$\pm$0.031 & 0.138$\pm$0.073 & 0.078$\pm$0.038 & 0.075$\pm$0.020 & 0.049$\pm$0.022 \\
        RF+IS
            & 0.056$\pm$0.078 & 0.124$\pm$0.063 & 0.076$\pm$0.037 & 0.100$\pm$0.054 & 0.069$\pm$0.055 & 0.067$\pm$0.030 & 0.058$\pm$0.048 & 0.051 & 0.051$\pm$0.055 \\
        LR+IS
            & 0.056$\pm$0.025 & 0.182$\pm$0.065 & 0.093$\pm$0.069 & 0.044$\pm$0.030 & 0.119$\pm$0.089 & 0.091$\pm$0.065 & 0.084$\pm$0.042 & 0.096$\pm$0.071 & 0.049$\pm$0.022 \\
        RF+LURE
            & 0.085$\pm$0.038 & 0.108$\pm$0.072 & 0.044$\pm$0.046 & 0.041$\pm$0.027 & 0.109$\pm$0.061 & 0.053$\pm$0.055 & 0.047$\pm$0.026 & 0.104$\pm$0.068 & 0.035$\pm$0.010 \\
        LR+LURE
            & 0.152$\pm$0.045 & 0.127$\pm$0.068 & 0.062$\pm$0.048 & 0.052$\pm$0.022 & 0.135$\pm$0.065 & 0.110$\pm$0.046 & 0.106$\pm$0.065 & 0.067$\pm$0.020 & 0.040$\pm$0.009 \\
        \midrule
        \multicolumn{10}{l}{\textit{Random Selection + BQ}} \\
        \midrule
        BQ-RPF Rand
            & 0.069$\pm$0.041 & 0.059$\pm$0.056 & 0.054$\pm$0.036 & 0.044$\pm$0.043 & 0.037$\pm$0.025 & 0.121$\pm$0.071 & 0.034$\pm$0.025 & 0.024$\pm$0.029 & 0.032$\pm$0.022 \\
        BQ-TPF Rand
            & 0.068$\pm$0.074 & 0.059$\pm$0.025 & 0.059$\pm$0.032 & 0.035$\pm$0.022 & 0.093$\pm$0.066 & 0.126$\pm$0.040 & 0.075$\pm$0.045 & 0.027$\pm$0.001 & 0.030$\pm$0.003 \\
        BQ-SF Rand
            & 0.117 & 0.030$\pm$0.015 & 0.042$\pm$0.014 & \textbf{0.020$\pm$0.028} & 0.046$\pm$0.029 & 0.130$\pm$0.057 & \textbf{0.013$\pm$0.007} & 0.010$\pm$0.005 & 0.008$\pm$0.001 \\
        \midrule
        \multicolumn{10}{l}{\textit{Active Selection + BQ}} \\
        \midrule
        BQ-RPF
            & 0.117 & 0.029 & \textbf{0.000} & 0.034 & 0.113 & 0.114 & 0.037 & 0.006 & 0.069 \\
        BQ-RPF Rounded
            & 0.117 & 0.037 & 0.044 & 0.035 & 0.047 & 0.085 & 0.062 & 0.019 & 0.015 \\
        BQ-TPF
            & \textbf{0.014} & 0.076 & 0.007 & 0.032 & \textbf{0.024} & 0.039 & 0.180 & 0.021 & 0.027 \\
        BQ-TPF Rounded
            & 0.211 & 0.367 & 0.183 & 0.063 & 0.066 & 0.149 & 0.198 & 0.051 & 0.044 \\
        BQ-SF
            & 0.117 & \textbf{0.026} & 0.030 & 0.033 & 0.062 & \textbf{0.002} & 0.034 & \textbf{0.006} & \textbf{0.004} \\
        BQ-SF Rounded
            & 0.117 & 0.041 & 0.039 & 0.034 & 0.079 & 0.042 & 0.019 & 0.019 & 0.009 \\
        \bottomrule
    \end{tabular}
    \end{sc}}

    \vspace{0.5em}
    \raggedright
    \caption{MAE ($\downarrow$) at 1\% budget for \textbf{Claude 3.5 Haiku}. $\pm$ indicates mean $\pm$ std over 5 runs; active BQ is deterministic. Best results are \textbf{bolded}.}
    \label{tab:mae_1pct_claude35haiku}
\end{table*}

\begin{table*}[t]
    \centering
    \setlength{\tabcolsep}{3pt}
    \resizebox{\textwidth}{!}{
    \begin{sc}
    \begin{tabular}{lccccccccc}
        \toprule
                \textbf{Method} & \textbf{DICES} & \textbf{DIVE} & \textbf{GQA} & \textbf{GSM8K} & \textbf{JigSaw} & \textbf{MMLU} & \textbf{StrategyQA} & \textbf{SVAMP} & \textbf{ToxicChat} \\
        \midrule
        Random Sampling
            & 0.184$\pm$0.076 & 0.106$\pm$0.032 & 0.091$\pm$0.071 & 0.035$\pm$0.001 & 0.059$\pm$0.030 & 0.122$\pm$0.037 & 0.059$\pm$0.036 & 0.125$\pm$0.090 & 0.034$\pm$0.005 \\
        RF+IS
            & 0.128$\pm$0.111 & 0.079$\pm$0.054 & 0.091$\pm$0.084 & 0.051$\pm$0.031 & 0.088$\pm$0.087 & 0.084$\pm$0.028 & 0.041$\pm$0.024 & 0.051 & 0.066$\pm$0.041 \\
        LR+IS
            & 0.115$\pm$0.140 & 0.081$\pm$0.027 & 0.116$\pm$0.054 & 0.035$\pm$0.001 & 0.128$\pm$0.036 & 0.090$\pm$0.071 & 0.071$\pm$0.044 & 0.067$\pm$0.020 & 0.032$\pm$0.005 \\
        RF+LURE
            & 0.111$\pm$0.090 & 0.148$\pm$0.088 & 0.090$\pm$0.062 & 0.036$\pm$0.001 & 0.130$\pm$0.077 & 0.094$\pm$0.053 & 0.067$\pm$0.033 & 0.067$\pm$0.020 & 0.051$\pm$0.026 \\
        LR+LURE
            & 0.083$\pm$0.067 & 0.068$\pm$0.057 & \textbf{0.032$\pm$0.027} & 0.051$\pm$0.027 & 0.105$\pm$0.042 & 0.097$\pm$0.069 & 0.094$\pm$0.061 & 0.067$\pm$0.020 & 0.046$\pm$0.025 \\
        \midrule
        \multicolumn{10}{l}{\textit{Random Selection + BQ}} \\
        \midrule
        BQ-RPF Rand
            & 0.091$\pm$0.093 & 0.093$\pm$0.040 & 0.076$\pm$0.037 & 0.036$\pm$0.021 & 0.060$\pm$0.040 & 0.077$\pm$0.084 & 0.042$\pm$0.025 & 0.031$\pm$0.036 & 0.019$\pm$0.015 \\
        BQ-TPF Rand
            & 0.071$\pm$0.057 & 0.066$\pm$0.061 & 0.051$\pm$0.041 & 0.039$\pm$0.030 & 0.056$\pm$0.029 & 0.055$\pm$0.043 & 0.044$\pm$0.039 & 0.026$\pm$0.001 & 0.022$\pm$0.006 \\
        BQ-SF Rand
            & 0.073$\pm$0.012 & 0.074$\pm$0.041 & 0.088$\pm$0.022 & 0.027$\pm$0.008 & 0.031$\pm$0.011 & 0.039$\pm$0.019 & 0.009$\pm$0.006 & 0.006$\pm$0.003 & 0.016$\pm$0.001 \\
        \midrule
        \multicolumn{10}{l}{\textit{Active Selection + BQ}} \\
        \midrule
        BQ-RPF
            & \textbf{0.011} & 0.242 & 0.036 & \textbf{0.007} & 0.050 & 0.068 & 0.039 & 0.006 & 0.085 \\
        BQ-RPF Rounded
            & 0.045 & 0.271 & 0.096 & 0.010 & 0.035 & \textbf{0.005} & 0.026 & 0.019 & 0.017 \\
        BQ-TPF
            & 0.112 & \textbf{0.025} & 0.153 & 0.015 & 0.051 & 0.018 & 0.089 & 0.020 & 0.021 \\
        BQ-TPF Rounded
            & 0.100 & 0.332 & 0.432 & 0.037 & 0.004 & 0.231 & 0.175 & 0.051 & 0.038 \\
        BQ-SF
            & 0.078 & 0.051 & 0.055 & 0.022 & \textbf{0.001} & 0.036 & \textbf{0.002} & \textbf{0.006} & 0.012 \\
        BQ-SF Rounded
            & 0.027 & 0.067 & 0.069 & 0.009 & 0.035 & 0.037 & 0.018 & 0.019 & \textbf{0.001} \\
        \bottomrule
    \end{tabular}
    \end{sc}}

    \vspace{0.5em}
    \raggedright
    \caption{MAE ($\downarrow$) at 1\% budget for \textbf{Claude 3.7 Sonnet}. $\pm$ indicates mean $\pm$ std over 5 runs; active BQ is deterministic. Best results are \textbf{bolded}.}
    \label{tab:mae_1pct_claude37sonnet}
\end{table*}

\begin{table*}[t]
    \centering
    \setlength{\tabcolsep}{3pt}
    \resizebox{\textwidth}{!}{
    \begin{sc}
    \begin{tabular}{lccccccccc}
        \toprule
                \textbf{Method} & \textbf{DICES} & \textbf{DIVE} & \textbf{GQA} & \textbf{GSM8K} & \textbf{JigSaw} & \textbf{MMLU} & \textbf{StrategyQA} & \textbf{SVAMP} & \textbf{ToxicChat} \\
        \midrule
        Random Sampling
            & 0.134$\pm$0.060 & 0.087$\pm$0.049 & 0.055$\pm$0.039 & 0.029 & 0.071$\pm$0.032 & 0.101$\pm$0.061 & 0.060$\pm$0.032 & 0.064$\pm$0.038 & 0.050$\pm$0.021 \\
        RF+IS
            & 0.167$\pm$0.086 & 0.092$\pm$0.090 & 0.110$\pm$0.070 & 0.052$\pm$0.037 & 0.114$\pm$0.047 & 0.081$\pm$0.040 & 0.071$\pm$0.066 & 0.033 & 0.065$\pm$0.043 \\
        LR+IS
            & 0.106$\pm$0.080 & 0.111$\pm$0.107 & 0.053$\pm$0.025 & 0.034$\pm$0.007 & 0.058$\pm$0.036 & 0.120$\pm$0.087 & 0.064$\pm$0.029 & 0.064$\pm$0.038 & 0.023$\pm$0.014 \\
        RF+LURE
            & 0.086$\pm$0.073 & 0.072$\pm$0.061 & 0.031$\pm$0.021 & 0.061$\pm$0.039 & 0.081$\pm$0.038 & 0.079$\pm$0.071 & 0.062$\pm$0.084 & 0.048$\pm$0.031 & 0.077$\pm$0.044 \\
        LR+LURE
            & 0.080$\pm$0.048 & 0.114$\pm$0.078 & 0.056$\pm$0.039 & 0.049$\pm$0.033 & 0.076$\pm$0.051 & 0.070$\pm$0.039 & 0.083$\pm$0.032 & 0.048$\pm$0.031 & 0.043$\pm$0.025 \\
        \midrule
        \multicolumn{10}{l}{\textit{Random Selection + BQ}} \\
        \midrule
        BQ-RPF Rand
            & 0.069$\pm$0.036 & 0.044$\pm$0.028 & 0.072$\pm$0.048 & 0.036$\pm$0.039 & 0.079$\pm$0.032 & 0.034$\pm$0.026 & 0.041$\pm$0.039 & 0.008$\pm$0.006 & 0.015$\pm$0.014 \\
        BQ-TPF Rand
            & 0.050$\pm$0.070 & 0.081$\pm$0.056 & 0.056$\pm$0.050 & 0.042$\pm$0.032 & 0.039$\pm$0.019 & 0.073$\pm$0.058 & 0.038$\pm$0.030 & 0.008$\pm$0.002 & 0.043$\pm$0.026 \\
        BQ-SF Rand
            & 0.062$\pm$0.032 & 0.048$\pm$0.051 & 0.024$\pm$0.009 & 0.034$\pm$0.006 & 0.049$\pm$0.043 & 0.026$\pm$0.021 & 0.014$\pm$0.008 & 0.025$\pm$0.003 & \textbf{0.002} \\
        \midrule
        \multicolumn{10}{l}{\textit{Active Selection + BQ}} \\
        \midrule
        BQ-RPF
            & \textbf{0.009} & 0.045 & 0.040 & 0.002 & 0.094 & 0.052 & 0.040 & 0.014 & 0.014 \\
        BQ-RPF Rounded
            & 0.058 & 0.098 & \textbf{0.005} & \textbf{0.001} & 0.095 & 0.058 & 0.012 & \textbf{0.000} & 0.005 \\
        BQ-TPF
            & 0.038 & \textbf{0.002} & 0.054 & 0.007 & 0.061 & 0.090 & 0.006 & 0.001 & 0.035 \\
        BQ-TPF Rounded
            & 0.335 & 0.363 & 0.198 & 0.029 & 0.013 & 0.165 & 0.099 & 0.033 & 0.051 \\
        BQ-SF
            & 0.030 & 0.003 & 0.019 & 0.031 & \textbf{0.011} & \textbf{0.010} & 0.018 & 0.014 & 0.003 \\
        BQ-SF Rounded
            & 0.087 & 0.035 & 0.045 & \textbf{0.001} & 0.029 & 0.046 & \textbf{0.003} & \textbf{0.000} & 0.003 \\
        \bottomrule
    \end{tabular}
    \end{sc}}

    \vspace{0.5em}
    \raggedright
    \caption{MAE ($\downarrow$) at 1\% budget for \textbf{Claude 4.5 Sonnet}. $\pm$ indicates mean $\pm$ std over 5 runs; active BQ is deterministic. Best results are \textbf{bolded}.}
    \label{tab:mae_1pct_claude45sonnet}
\end{table*}

\begin{table*}[t]
    \centering
    \setlength{\tabcolsep}{3pt}
    \resizebox{\textwidth}{!}{
    \begin{sc}
    \begin{tabular}{lccccccccc}
        \toprule
                \textbf{Method} & \textbf{DICES} & \textbf{DIVE} & \textbf{GQA} & \textbf{GSM8K} & \textbf{JigSaw} & \textbf{MMLU} & \textbf{StrategyQA} & \textbf{SVAMP} & \textbf{ToxicChat} \\
        \midrule
        Random Sampling
            & 0.132$\pm$0.060 & 0.055$\pm$0.070 & 0.022$\pm$0.014 & 0.027 & 0.062$\pm$0.033 & 0.064$\pm$0.065 & 0.059$\pm$0.036 & 0.063$\pm$0.042 & 0.057$\pm$0.013 \\
        RF+IS
            & 0.117$\pm$0.080 & 0.140$\pm$0.086 & 0.131$\pm$0.040 & 0.087$\pm$0.053 & 0.054$\pm$0.035 & 0.059$\pm$0.012 & 0.108$\pm$0.067 & 0.029 & 0.092$\pm$0.080 \\
        LR+IS
            & 0.070$\pm$0.043 & 0.081$\pm$0.032 & 0.069$\pm$0.035 & 0.034$\pm$0.008 & 0.027$\pm$0.009 & 0.042$\pm$0.048 & 0.060$\pm$0.038 & 0.063$\pm$0.042 & 0.029$\pm$0.027 \\
        RF+LURE
            & 0.099$\pm$0.077 & 0.062$\pm$0.051 & 0.060$\pm$0.040 & 0.034$\pm$0.008 & 0.064$\pm$0.035 & 0.112$\pm$0.125 & 0.056$\pm$0.028 & 0.063$\pm$0.042 & 0.061$\pm$0.023 \\
        LR+LURE
            & 0.055$\pm$0.074 & 0.126$\pm$0.107 & 0.116$\pm$0.068 & 0.048$\pm$0.034 & 0.084$\pm$0.047 & 0.053$\pm$0.024 & 0.024$\pm$0.028 & 0.046$\pm$0.034 & 0.064$\pm$0.046 \\
        \midrule
        \multicolumn{10}{l}{\textit{Random Selection + BQ}} \\
        \midrule
        BQ-RPF Rand
            & 0.079$\pm$0.042 & 0.130$\pm$0.086 & 0.038$\pm$0.019 & 0.016$\pm$0.012 & 0.047$\pm$0.026 & 0.040$\pm$0.044 & 0.019$\pm$0.022 & 0.015$\pm$0.009 & 0.017$\pm$0.013 \\
        BQ-TPF Rand
            & 0.123$\pm$0.076 & 0.061$\pm$0.068 & 0.087$\pm$0.057 & 0.023$\pm$0.015 & 0.060$\pm$0.056 & 0.049$\pm$0.027 & 0.059$\pm$0.043 & 0.004$\pm$0.002 & 0.051$\pm$0.025 \\
        BQ-SF Rand
            & 0.181$\pm$0.032 & 0.063$\pm$0.016 & 0.055$\pm$0.014 & 0.041$\pm$0.011 & 0.091$\pm$0.020 & 0.063$\pm$0.021 & 0.037$\pm$0.013 & 0.030$\pm$0.004 & 0.003 \\
        \midrule
        \multicolumn{10}{l}{\textit{Active Selection + BQ}} \\
        \midrule
        BQ-RPF
            & 0.072 & 0.102 & 0.117 & 0.004 & 0.102 & 0.085 & 0.016 & 0.019 & \textbf{0.002} \\
        BQ-RPF Rounded
            & 0.049 & 0.140 & 0.064 & \textbf{0.000} & 0.122 & 0.050 & \textbf{0.004} & 0.004 & 0.009 \\
        BQ-TPF
            & \textbf{0.039} & 0.038 & 0.191 & 0.005 & \textbf{0.026} & 0.109 & 0.048 & \textbf{0.002} & 0.035 \\
        BQ-TPF Rounded
            & 0.125 & 0.324 & 0.411 & 0.027 & 0.111 & 0.130 & 0.120 & 0.029 & 0.051 \\
        BQ-SF
            & 0.181 & \textbf{0.017} & 0.023 & 0.032 & 0.071 & 0.029 & 0.023 & 0.019 & 0.006 \\
        BQ-SF Rounded
            & 0.127 & 0.043 & \textbf{0.004} & 0.002 & 0.052 & \textbf{0.008} & 0.012 & 0.004 & 0.007 \\
        \bottomrule
    \end{tabular}
    \end{sc}}

    \vspace{0.5em}
    \raggedright
    \caption{MAE ($\downarrow$) at 1\% budget for \textbf{Claude 4.5 Opus}. $\pm$ indicates mean $\pm$ std over 5 runs; active BQ is deterministic. Best results are \textbf{bolded}.}
    \label{tab:mae_1pct_claude45opus}
\end{table*}

\begin{table*}[t]
    \centering
    \setlength{\tabcolsep}{3pt}
    \resizebox{\textwidth}{!}{
    \begin{sc}
    \begin{tabular}{lccccccccc}
        \toprule
                \textbf{Method} & \textbf{DICES} & \textbf{DIVE} & \textbf{GQA} & \textbf{GSM8K} & \textbf{JigSaw} & \textbf{MMLU} & \textbf{StrategyQA} & \textbf{SVAMP} & \textbf{ToxicChat} \\
        \midrule
        Random Sampling
            & 0.151$\pm$0.052 & 0.049$\pm$0.027 & \textbf{0.047$\pm$0.045} & 0.035 & 0.068$\pm$0.038 & 0.039$\pm$0.024 & 0.075$\pm$0.036 & 0.064$\pm$0.036 & 0.060$\pm$0.009 \\
        RF+IS
            & 0.160$\pm$0.088 & 0.086$\pm$0.052 & 0.093$\pm$0.054 & 0.055$\pm$0.028 & 0.129$\pm$0.052 & 0.131$\pm$0.056 & 0.112$\pm$0.100 & 0.034 & 0.046$\pm$0.022 \\
        LR+IS
            & 0.087$\pm$0.049 & 0.076$\pm$0.052 & 0.098$\pm$0.031 & 0.036$\pm$0.001 & 0.134$\pm$0.081 & 0.114$\pm$0.077 & 0.078$\pm$0.048 & 0.064$\pm$0.036 & 0.037$\pm$0.032 \\
        RF+LURE
            & 0.100$\pm$0.058 & 0.112$\pm$0.074 & 0.135$\pm$0.071 & 0.036$\pm$0.001 & 0.101$\pm$0.042 & 0.109$\pm$0.069 & 0.041$\pm$0.017 & 0.049$\pm$0.030 & 0.038$\pm$0.022 \\
        LR+LURE
            & 0.087$\pm$0.077 & 0.101$\pm$0.061 & 0.096$\pm$0.062 & 0.050$\pm$0.029 & 0.128$\pm$0.063 & 0.067$\pm$0.036 & 0.038$\pm$0.032 & 0.049$\pm$0.030 & 0.056$\pm$0.050 \\
        \midrule
        \multicolumn{10}{l}{\textit{Random Selection + BQ}} \\
        \midrule
        BQ-RPF Rand
            & 0.068$\pm$0.033 & 0.154$\pm$0.082 & 0.083$\pm$0.017 & 0.049$\pm$0.052 & 0.031$\pm$0.023 & 0.024$\pm$0.019 & 0.032$\pm$0.009 & 0.008$\pm$0.005 & 0.032$\pm$0.023 \\
        BQ-TPF Rand
            & 0.099$\pm$0.066 & 0.070$\pm$0.044 & 0.055$\pm$0.023 & 0.027$\pm$0.012 & 0.105$\pm$0.044 & 0.055$\pm$0.024 & 0.037$\pm$0.022 & 0.010$\pm$0.002 & 0.041$\pm$0.027 \\
        BQ-SF Rand
            & 0.056$\pm$0.011 & 0.041$\pm$0.024 & 0.094$\pm$0.018 & 0.037$\pm$0.014 & 0.056$\pm$0.018 & 0.028$\pm$0.028 & 0.009$\pm$0.006 & 0.021$\pm$0.006 & \textbf{0.003$\pm$0.002} \\
        \midrule
        \multicolumn{10}{l}{\textit{Active Selection + BQ}} \\
        \midrule
        BQ-RPF
            & \textbf{0.026} & 0.083 & 0.095 & 0.031 & 0.011 & 0.047 & 0.011 & 0.012 & 0.011 \\
        BQ-RPF Rounded
            & 0.073 & 0.115 & 0.126 & \textbf{0.006} & 0.081 & 0.061 & 0.005 & \textbf{0.001} & 0.004 \\
        BQ-TPF
            & 0.203 & 0.063 & 0.243 & 0.013 & 0.095 & 0.015 & 0.120 & 0.002 & 0.040 \\
        BQ-TPF Rounded
            & 0.205 & 0.306 & 0.566 & 0.035 & 0.007 & 0.164 & 0.137 & 0.034 & 0.056 \\
        BQ-SF
            & 0.041 & 0.049 & 0.120 & 0.024 & \textbf{0.006} & \textbf{0.013} & 0.040 & 0.013 & 0.003 \\
        BQ-SF Rounded
            & 0.044 & \textbf{0.002} & 0.146 & \textbf{0.006} & 0.019 & 0.042 & \textbf{0.001} & \textbf{0.001} & 0.007 \\
        \bottomrule
    \end{tabular}
    \end{sc}}

    \vspace{0.5em}
    \raggedright
    \caption{MAE ($\downarrow$) at 1\% budget for \textbf{Gemini 2.5 Pro}. $\pm$ indicates mean $\pm$ std over 5 runs; active BQ is deterministic. Best results are \textbf{bolded}.}
    \label{tab:mae_1pct_gemini25pro}
\end{table*}

\begin{table*}[t]
    \centering
    \setlength{\tabcolsep}{3pt}
    \resizebox{\textwidth}{!}{
    \begin{sc}
    \begin{tabular}{lccccccccc}
        \toprule
                \textbf{Method} & \textbf{DICES} & \textbf{DIVE} & \textbf{GQA} & \textbf{GSM8K} & \textbf{JigSaw} & \textbf{MMLU} & \textbf{StrategyQA} & \textbf{SVAMP} & \textbf{ToxicChat} \\
        \midrule
        Random Sampling
            & 0.060$\pm$0.049 & 0.100$\pm$0.060 & 0.107$\pm$0.072 & 0.049$\pm$0.032 & 0.029$\pm$0.031 & 0.084$\pm$0.060 & 0.043$\pm$0.029 & 0.062$\pm$0.046 & 0.059$\pm$0.049 \\
        RF+IS
            & 0.085$\pm$0.069 & 0.087$\pm$0.033 & 0.054$\pm$0.045 & 0.075$\pm$0.048 & 0.117$\pm$0.060 & 0.058$\pm$0.050 & 0.070$\pm$0.053 & 0.024 & 0.066$\pm$0.029 \\
        LR+IS
            & 0.083$\pm$0.058 & 0.112$\pm$0.050 & 0.064$\pm$0.056 & \textbf{0.039$\pm$0.035} & 0.044$\pm$0.032 & 0.088$\pm$0.010 & 0.040$\pm$0.021 & 0.062$\pm$0.046 & 0.023$\pm$0.027 \\
        RF+LURE
            & 0.072$\pm$0.016 & 0.128$\pm$0.079 & 0.072$\pm$0.052 & 0.045$\pm$0.032 & 0.063$\pm$0.045 & 0.071$\pm$0.027 & 0.044$\pm$0.048 & 0.024 & 0.049$\pm$0.022 \\
        LR+LURE
            & 0.057$\pm$0.047 & 0.060$\pm$0.053 & 0.057$\pm$0.042 & 0.054$\pm$0.035 & 0.055$\pm$0.024 & 0.073$\pm$0.014 & 0.101$\pm$0.073 & 0.043$\pm$0.038 & 0.065$\pm$0.044 \\
        \midrule
        \multicolumn{10}{l}{\textit{Random Selection + BQ}} \\
        \midrule
        BQ-RPF Rand
            & \textbf{0.051$\pm$0.021} & 0.135$\pm$0.088 & 0.026$\pm$0.025 & 0.058$\pm$0.038 & 0.063$\pm$0.039 & 0.058$\pm$0.034 & 0.036$\pm$0.032 & 0.020$\pm$0.007 & 0.017$\pm$0.008 \\
        BQ-TPF Rand
            & 0.123$\pm$0.051 & 0.034$\pm$0.030 & 0.039$\pm$0.026 & 0.059$\pm$0.047 & 0.038$\pm$0.015 & 0.054$\pm$0.045 & 0.079$\pm$0.052 & \textbf{0.001$\pm$0.001} & 0.046$\pm$0.027 \\
        BQ-SF Rand
            & 0.172$\pm$0.025 & 0.058$\pm$0.026 & 0.038$\pm$0.018 & 0.054$\pm$0.038 & 0.078$\pm$0.010 & 0.039$\pm$0.018 & 0.025$\pm$0.007 & 0.032$\pm$0.005 & 0.005 \\
        \midrule
        \multicolumn{10}{l}{\textit{Active Selection + BQ}} \\
        \midrule
        BQ-RPF
            & 0.096 & 0.040 & 0.058 & 0.052 & 0.139 & 0.096 & 0.027 & 0.023 & 0.027 \\
        BQ-RPF Rounded
            & 0.134 & 0.069 & 0.024 & 0.086 & 0.137 & 0.057 & 0.010 & 0.009 & \textbf{0.001} \\
        BQ-TPF
            & 0.059 & \textbf{0.005} & 0.113 & 0.059 & 0.096 & 0.054 & \textbf{0.000} & 0.008 & 0.042 \\
        BQ-TPF Rounded
            & 0.145 & 0.299 & 0.152 & 0.154 & \textbf{0.025} & 0.143 & 0.084 & 0.024 & 0.057 \\
        BQ-SF
            & 0.184 & 0.052 & 0.025 & 0.057 & 0.051 & 0.036 & 0.004 & 0.023 & 0.002 \\
        BQ-SF Rounded
            & 0.121 & 0.034 & \textbf{0.002} & 0.124 & 0.030 & \textbf{0.022} & 0.011 & 0.009 & 0.006 \\
        \bottomrule
    \end{tabular}
    \end{sc}}

    \vspace{0.5em}
    \raggedright
    \caption{MAE ($\downarrow$) at 1\% budget for \textbf{Gemini 3 Flash}. $\pm$ indicates mean $\pm$ std over 5 runs; active BQ is deterministic. Best results are \textbf{bolded}.}
    \label{tab:mae_1pct_gemini3flash}
\end{table*}

\begin{table*}[t]
    \centering
    \setlength{\tabcolsep}{3pt}
    \resizebox{\textwidth}{!}{
    \begin{sc}
    \begin{tabular}{lccccccccc}
        \toprule
                \textbf{Method} & \textbf{DICES} & \textbf{DIVE} & \textbf{GQA} & \textbf{GSM8K} & \textbf{JigSaw} & \textbf{MMLU} & \textbf{StrategyQA} & \textbf{SVAMP} & \textbf{ToxicChat} \\
        \midrule
        Random Sampling
            & 0.133$\pm$0.060 & 0.168$\pm$0.084 & 0.075$\pm$0.056 & 0.033$\pm$0.004 & 0.095$\pm$0.024 & 0.058$\pm$0.047 & 0.060$\pm$0.033 & 0.064$\pm$0.038 & 0.041$\pm$0.028 \\
        RF+IS
            & 0.129$\pm$0.084 & 0.085$\pm$0.074 & 0.123$\pm$0.058 & 0.083$\pm$0.062 & 0.051$\pm$0.045 & 0.036$\pm$0.039 & 0.088$\pm$0.070 & 0.033 & 0.042$\pm$0.022 \\
        LR+IS
            & 0.066$\pm$0.041 & 0.152$\pm$0.050 & 0.034$\pm$0.018 & 0.038$\pm$0.004 & 0.055$\pm$0.071 & 0.058$\pm$0.039 & 0.049$\pm$0.043 & 0.064$\pm$0.038 & 0.021$\pm$0.027 \\
        RF+LURE
            & 0.080$\pm$0.035 & 0.133$\pm$0.087 & 0.056$\pm$0.038 & 0.045$\pm$0.018 & 0.064$\pm$0.047 & 0.080$\pm$0.028 & 0.046$\pm$0.012 & 0.048$\pm$0.031 & 0.053$\pm$0.028 \\
        LR+LURE
            & 0.053$\pm$0.045 & 0.074$\pm$0.056 & 0.071$\pm$0.063 & 0.033$\pm$0.004 & 0.095$\pm$0.048 & 0.048$\pm$0.036 & 0.057$\pm$0.007 & 0.048$\pm$0.031 & 0.055$\pm$0.050 \\
        \midrule
        \multicolumn{10}{l}{\textit{Random Selection + BQ}} \\
        \midrule
        BQ-RPF Rand
            & 0.038$\pm$0.041 & 0.070$\pm$0.047 & 0.062$\pm$0.038 & 0.028$\pm$0.011 & 0.048$\pm$0.041 & 0.053$\pm$0.044 & 0.038$\pm$0.023 & 0.008$\pm$0.006 & 0.027$\pm$0.016 \\
        BQ-TPF Rand
            & 0.068$\pm$0.044 & 0.127$\pm$0.037 & 0.074$\pm$0.049 & 0.035$\pm$0.018 & 0.077$\pm$0.030 & 0.035$\pm$0.027 & 0.052$\pm$0.031 & 0.008$\pm$0.002 & 0.045$\pm$0.030 \\
        BQ-SF Rand
            & 0.053$\pm$0.032 & 0.066$\pm$0.043 & 0.022$\pm$0.016 & 0.026$\pm$0.003 & 0.060$\pm$0.015 & 0.070$\pm$0.011 & 0.045$\pm$0.006 & 0.025$\pm$0.003 & 0.007 \\
        \midrule
        \multicolumn{10}{l}{\textit{Active Selection + BQ}} \\
        \midrule
        BQ-RPF
            & 0.050 & 0.239 & 0.042 & \textbf{0.000} & 0.164 & 0.066 & 0.056 & 0.014 & 0.012 \\
        BQ-RPF Rounded
            & \textbf{0.000} & 0.227 & 0.021 & 0.005 & 0.149 & 0.026 & 0.033 & \textbf{0.000} & 0.004 \\
        BQ-TPF
            & 0.169 & 0.132 & 0.100 & 0.009 & 0.069 & 0.092 & 0.033 & 0.001 & 0.043 \\
        BQ-TPF Rounded
            & 0.070 & 0.158 & 0.422 & 0.031 & 0.055 & 0.112 & 0.110 & 0.033 & 0.059 \\
        BQ-SF
            & 0.008 & 0.090 & 0.024 & 0.028 & 0.035 & 0.049 & 0.035 & 0.014 & \textbf{0.002} \\
        BQ-SF Rounded
            & 0.038 & \textbf{0.005} & \textbf{0.002} & 0.003 & \textbf{0.014} & \textbf{0.010} & \textbf{0.020} & \textbf{0.000} & 0.003 \\
        \bottomrule
    \end{tabular}
    \end{sc}}

    \vspace{0.5em}
    \raggedright
    \caption{MAE ($\downarrow$) at 1\% budget for \textbf{Gemini 3 Pro}. $\pm$ indicates mean $\pm$ std over 5 runs; active BQ is deterministic. Best results are \textbf{bolded}.}
    \label{tab:mae_1pct_gemini3pro}
\end{table*}

\begin{table*}[t]
    \centering
    \setlength{\tabcolsep}{3pt}
    \resizebox{\textwidth}{!}{
    \begin{sc}
    \begin{tabular}{lccccccccc}
        \toprule
                \textbf{Method} & \textbf{DICES} & \textbf{DIVE} & \textbf{GQA} & \textbf{GSM8K} & \textbf{JigSaw} & \textbf{MMLU} & \textbf{StrategyQA} & \textbf{SVAMP} & \textbf{ToxicChat} \\
        \midrule
        Random Sampling
            & 0.121$\pm$0.049 & 0.059$\pm$0.035 & 0.128$\pm$0.099 & 0.063$\pm$0.057 & 0.094$\pm$0.032 & 0.104$\pm$0.064 & 0.120$\pm$0.032 & 0.198$\pm$0.206 & 0.050 \\
        RF+IS
            & 0.117$\pm$0.071 & 0.111$\pm$0.042 & 0.108$\pm$0.073 & 0.092$\pm$0.030 & 0.068$\pm$0.047 & 0.094$\pm$0.056 & 0.123$\pm$0.073 & 0.149$\pm$0.085 & 0.067$\pm$0.072 \\
        LR+IS
            & 0.093$\pm$0.068 & 0.126$\pm$0.089 & 0.082$\pm$0.066 & 0.063$\pm$0.035 & 0.121$\pm$0.114 & 0.103$\pm$0.052 & 0.034$\pm$0.030 & 0.170$\pm$0.105 & 0.050$\pm$0.030 \\
        RF+LURE
            & 0.072$\pm$0.016 & 0.215$\pm$0.182 & 0.070$\pm$0.026 & 0.078$\pm$0.058 & 0.153$\pm$0.080 & 0.110$\pm$0.112 & 0.075$\pm$0.043 & 0.153$\pm$0.087 & 0.037$\pm$0.016 \\
        LR+LURE
            & 0.081$\pm$0.076 & 0.086$\pm$0.049 & 0.070$\pm$0.050 & 0.096$\pm$0.051 & 0.109$\pm$0.067 & 0.081$\pm$0.024 & 0.045$\pm$0.035 & 0.111$\pm$0.051 & 0.017 \\
        \midrule
        \multicolumn{10}{l}{\textit{Random Selection + BQ}} \\
        \midrule
        BQ-RPF Rand
            & 0.059$\pm$0.034 & 0.057$\pm$0.037 & 0.069$\pm$0.052 & 0.149$\pm$0.071 & 0.048$\pm$0.046 & \textbf{0.043$\pm$0.027} & 0.059$\pm$0.059 & 0.118$\pm$0.083 & 0.032$\pm$0.018 \\
        BQ-TPF Rand
            & 0.120$\pm$0.027 & 0.096$\pm$0.040 & 0.093$\pm$0.066 & 0.088$\pm$0.067 & 0.090$\pm$0.074 & 0.105$\pm$0.048 & 0.075$\pm$0.027 & 0.094$\pm$0.062 & 0.049$\pm$0.026 \\
        BQ-SF Rand
            & 0.070$\pm$0.014 & \textbf{0.019$\pm$0.011} & 0.023$\pm$0.009 & 0.323$\pm$0.022 & 0.135$\pm$0.039 & 0.104$\pm$0.048 & 0.116$\pm$0.013 & 0.236$\pm$0.019 & \textbf{0.001$\pm$0.001} \\
        \midrule
        \multicolumn{10}{l}{\textit{Active Selection + BQ}} \\
        \midrule
        BQ-RPF
            & 0.031 & 0.117 & 0.090 & \textbf{0.014} & 0.070 & 0.109 & 0.112 & 0.114 & 0.027 \\
        BQ-RPF Rounded
            & 0.067 & 0.157 & 0.099 & 0.149 & 0.139 & 0.149 & \textbf{0.015} & 0.234 & 0.008 \\
        BQ-TPF
            & \textbf{0.022} & 0.205 & 0.089 & 0.014 & 0.087 & 0.266 & 0.164 & \textbf{0.047} & 0.034 \\
        BQ-TPF Rounded
            & 0.203 & 0.517 & 0.395 & 0.181 & \textbf{0.002} & 0.478 & 0.304 & 0.296 & 0.050 \\
        BQ-SF
            & 0.073 & 0.020 & \textbf{0.020} & 0.298 & 0.100 & 0.120 & 0.114 & 0.180 & 0.003 \\
        BQ-SF Rounded
            & 0.107 & 0.039 & 0.039 & 0.343 & 0.109 & 0.195 & 0.140 & 0.127 & 0.011 \\
        \bottomrule
    \end{tabular}
    \end{sc}}

    \vspace{0.5em}
    \raggedright
    \caption{MAE ($\downarrow$) at 1\% budget for \textbf{Gemma 3 12B}. $\pm$ indicates mean $\pm$ std over 5 runs; active BQ is deterministic. Best results are \textbf{bolded}.}
    \label{tab:mae_1pct_gemma312b}
\end{table*}

\begin{table*}[t]
    \centering
    \setlength{\tabcolsep}{3pt}
    \resizebox{\textwidth}{!}{
    \begin{sc}
    \begin{tabular}{lccccccccc}
        \toprule
                \textbf{Method} & \textbf{DICES} & \textbf{DIVE} & \textbf{GQA} & \textbf{GSM8K} & \textbf{JigSaw} & \textbf{MMLU} & \textbf{StrategyQA} & \textbf{SVAMP} & \textbf{ToxicChat} \\
        \midrule
        Random Sampling
            & 0.056$\pm$0.031 & 0.105$\pm$0.032 & 0.056$\pm$0.026 & 0.054$\pm$0.023 & 0.051$\pm$0.029 & 0.154$\pm$0.114 & 0.097$\pm$0.058 & 0.095$\pm$0.075 & 0.050$\pm$0.021 \\
        RF+IS
            & 0.063$\pm$0.027 & 0.140$\pm$0.029 & 0.065$\pm$0.051 & 0.110$\pm$0.061 & 0.082$\pm$0.040 & 0.129$\pm$0.077 & 0.080$\pm$0.034 & 0.046 & 0.048$\pm$0.034 \\
        LR+IS
            & 0.061$\pm$0.032 & 0.118$\pm$0.082 & 0.086$\pm$0.074 & 0.054$\pm$0.023 & 0.218$\pm$0.083 & 0.058$\pm$0.033 & 0.050$\pm$0.058 & 0.066$\pm$0.025 & 0.037$\pm$0.018 \\
        RF+LURE
            & 0.059$\pm$0.034 & 0.090$\pm$0.065 & 0.115$\pm$0.073 & 0.040$\pm$0.020 & 0.111$\pm$0.091 & 0.072$\pm$0.039 & 0.070$\pm$0.037 & 0.127$\pm$0.138 & 0.037$\pm$0.018 \\
        LR+LURE
            & 0.059$\pm$0.034 & 0.111$\pm$0.104 & 0.069$\pm$0.020 & 0.060$\pm$0.026 & 0.138$\pm$0.085 & 0.126$\pm$0.082 & 0.114$\pm$0.088 & 0.066$\pm$0.025 & 0.052 \\
        \midrule
        \multicolumn{10}{l}{\textit{Random Selection + BQ}} \\
        \midrule
        BQ-RPF Rand
            & 0.064$\pm$0.074 & 0.075$\pm$0.064 & 0.059$\pm$0.046 & 0.044$\pm$0.027 & 0.083$\pm$0.037 & 0.083$\pm$0.065 & 0.046$\pm$0.040 & 0.011$\pm$0.007 & 0.026$\pm$0.017 \\
        BQ-TPF Rand
            & \textbf{0.025$\pm$0.015} & 0.049$\pm$0.016 & 0.089$\pm$0.087 & 0.032$\pm$0.017 & 0.103$\pm$0.062 & 0.104$\pm$0.090 & 0.083$\pm$0.058 & 0.020$\pm$0.002 & 0.045$\pm$0.026 \\
        BQ-SF Rand
            & 0.117 & 0.020$\pm$0.010 & 0.076$\pm$0.025 & 0.006$\pm$0.005 & 0.125$\pm$0.022 & 0.094$\pm$0.051 & 0.018$\pm$0.011 & 0.012$\pm$0.004 & 0.001$\pm$0.001 \\
        \midrule
        \multicolumn{10}{l}{\textit{Active Selection + BQ}} \\
        \midrule
        BQ-RPF
            & 0.070 & 0.020 & 0.095 & 0.030 & \textbf{0.043} & 0.055 & 0.024 & \textbf{0.000} & 0.014 \\
        BQ-RPF Rounded
            & 0.108 & 0.030 & 0.011 & 0.029 & 0.142 & \textbf{0.037} & 0.034 & 0.013 & \textbf{0.000} \\
        BQ-TPF
            & 0.133 & 0.241 & \textbf{0.003} & 0.034 & 0.139 & 0.054 & 0.016 & 0.142 & 0.034 \\
        BQ-TPF Rounded
            & 0.094 & 0.325 & 0.370 & 0.056 & 0.443 & 0.431 & 0.148 & 0.046 & 0.052 \\
        BQ-SF
            & 0.117 & 0.023 & 0.051 & \textbf{0.002} & 0.142 & 0.097 & \textbf{0.010} & 0.000 & 0.007 \\
        BQ-SF Rounded
            & 0.117 & \textbf{0.012} & 0.059 & 0.028 & 0.162 & 0.150 & 0.026 & 0.013 & 0.018 \\
        \bottomrule
    \end{tabular}
    \end{sc}}

    \vspace{0.5em}
    \raggedright
    \caption{MAE ($\downarrow$) at 1\% budget for \textbf{Gemma 3 27B}. $\pm$ indicates mean $\pm$ std over 5 runs; active BQ is deterministic. Best results are \textbf{bolded}.}
    \label{tab:mae_1pct_gemma327b}
\end{table*}

\begin{table*}[t]
    \centering
    \setlength{\tabcolsep}{3pt}
    \resizebox{\textwidth}{!}{
    \begin{sc}
    \begin{tabular}{lccccccccc}
        \toprule
                \textbf{Method} & \textbf{DICES} & \textbf{DIVE} & \textbf{GQA} & \textbf{GSM8K} & \textbf{JigSaw} & \textbf{MMLU} & \textbf{StrategyQA} & \textbf{SVAMP} & \textbf{ToxicChat} \\
        \midrule
        Random Sampling
            & 0.063$\pm$0.044 & --- & --- & 0.073$\pm$0.061 & 0.079$\pm$0.026 & 0.119$\pm$0.050 & 0.061$\pm$0.072 & 0.130$\pm$0.109 & 0.029$\pm$0.019 \\
        RF+IS
            & 0.067$\pm$0.043 & --- & --- & 0.132$\pm$0.150 & 0.075$\pm$0.051 & 0.092$\pm$0.063 & 0.076$\pm$0.069 & 0.070$\pm$0.008 & 0.111$\pm$0.071 \\
        LR+IS
            & 0.069$\pm$0.040 & --- & --- & 0.166$\pm$0.089 & 0.042$\pm$0.032 & 0.081$\pm$0.028 & 0.049$\pm$0.043 & 0.102$\pm$0.052 & 0.027$\pm$0.027 \\
        RF+LURE
            & 0.082$\pm$0.066 & --- & --- & 0.135$\pm$0.093 & 0.073$\pm$0.057 & 0.069$\pm$0.063 & 0.088$\pm$0.056 & 0.098$\pm$0.054 & 0.055$\pm$0.051 \\
        LR+LURE
            & 0.090$\pm$0.072 & --- & --- & 0.100$\pm$0.011 & 0.017$\pm$0.027 & 0.056$\pm$0.051 & 0.103$\pm$0.069 & 0.070$\pm$0.008 & 0.042$\pm$0.024 \\
        \midrule
        \multicolumn{10}{l}{\textit{Random Selection + BQ}} \\
        \midrule
        BQ-RPF Rand
            & 0.111$\pm$0.067 & --- & --- & 0.106$\pm$0.046 & 0.077$\pm$0.053 & 0.056$\pm$0.040 & 0.058$\pm$0.036 & 0.053$\pm$0.036 & 0.010$\pm$0.004 \\
        BQ-TPF Rand
            & 0.084$\pm$0.047 & --- & --- & 0.142$\pm$0.056 & 0.059$\pm$0.040 & 0.094$\pm$0.052 & 0.061$\pm$0.057 & 0.058$\pm$0.023 & 0.042$\pm$0.018 \\
        BQ-SF Rand
            & 0.041 & --- & --- & 0.141$\pm$0.076 & 0.045$\pm$0.037 & 0.070$\pm$0.038 & 0.032$\pm$0.016 & 0.021$\pm$0.005 & \textbf{0.001$\pm$0.001} \\
        \midrule
        \multicolumn{10}{l}{\textit{Active Selection + BQ}} \\
        \midrule
        BQ-RPF
            & \textbf{0.031} & --- & --- & \textbf{0.009} & 0.016 & 0.068 & 0.069 & 0.034 & 0.018 \\
        BQ-RPF Rounded
            & 0.037 & --- & --- & 0.122 & 0.030 & 0.107 & 0.093 & 0.047 & 0.003 \\
        BQ-TPF
            & 0.177 & --- & --- & 0.084 & 0.006 & 0.319 & 0.079 & 0.048 & 0.036 \\
        BQ-TPF Rounded
            & 0.115 & --- & --- & 0.143 & 0.138 & 0.406 & 0.073 & 0.080 & 0.053 \\
        BQ-SF
            & 0.041 & --- & --- & 0.012 & \textbf{0.003} & \textbf{0.019} & \textbf{0.022} & \textbf{0.012} & 0.003 \\
        BQ-SF Rounded
            & 0.041 & --- & --- & 0.072 & 0.038 & 0.038 & 0.057 & 0.046 & 0.013 \\
        \bottomrule
    \end{tabular}
    \end{sc}}

    \vspace{0.5em}
    \raggedright
    \caption{MAE ($\downarrow$) at 1\% budget for \textbf{Qwen 3 32B}. $\pm$ indicates mean $\pm$ std over 5 runs; active BQ is deterministic. Best results are \textbf{bolded}. The model doesn't support image inputs so we don't have DIVE and GQA numbers.}
    \label{tab:mae_1pct_qwen332b}
\end{table*}

\clearpage
\subsubsection{Ablation Study on Different Embedding Models}
\label{app:ablation_embedding_model}

The prompt features in \textsc{ProEval} uses existing embedding models. To further our understanding, we conducted a preliminary study with four embedding models (Google \texttt{gemini\_embedding\_001}, OpenAI \texttt{text\_embedding\_3\_small}, OpenAI \texttt{text\_embedding\_3\_large}, and \texttt{all\_minilm\_l6\_v2}), using a sampling budget of 50. We evaluated Gemini 2.5 Flash on StrategyQA using BQ-RPF, \Cref{tab:embedding_comparison} shows a clear trend: models with larger embedding dimensions tend to yield lower MAEs.

\begin{table}[h]
    \centering
    \label{tab:embedding_comparison}
    \begin{tabular}{lrcl}
        \toprule
        \textbf{Embedding Model} & \textbf{Dimension} & \textbf{Tier} & \textbf{BQ MAE} \\
        \midrule
        \texttt{gemini\_embedding\_001} & 3072 & Strong & 0.0425 \\
        \texttt{text\_embedding\_3\_large} & 3072 & Strong & 0.0654 \\
        \texttt{text\_embedding\_3\_small} & 1536 & Medium & 0.0834 \\
        \texttt{all\_minilm\_l6\_v2} & 384 & Weak & 0.0900 \\
        \bottomrule
    \end{tabular}
    \caption{A preliminary experiment shows that \textsc{ProEval} using embedding models with larger embedding dimensions tend to yield lower MAEs.}

\end{table}

\clearpage
\subsection{Additional Results on Failure Case Discovery}
\label{app:examples_failure_gen}

To provide a qualitative understanding of the differences between the discovery strategies, \Cref{tab:generated_examples_strategyqa,tab:generated_examples_gsm8k} present side-by-side (SxS) examples of questions generated by the baseline \textit{Random Generation} and our \textit{Active Generation} (TSS) strategies. While random generation often produces simpler, direct questions that target models easily solve (Score 0.0), the active strategy consistently discovers complex failure cases (Score 1.0). 

\begin{table}[!ht]
\centering
\small
\begin{tabular}{p{0.45\linewidth}p{0.45\linewidth}}
\toprule
\textbf{Random Generation} & \textbf{Active Generation} \\
\midrule
\rowcolor{gray!10}
\textit{Score: 0.0 (Solved)} & \textit{Score: 1.0 (Failure)} \\
Is dopamine snorted nasally by drug users? & Would the animal representing the first year of the 21st century be able to physically complete the race mentioned in the Great Race of the Zodiac myth? \\
\midrule
\rowcolor{gray!10}
\textit{Score: 0.0 (Solved)} & \textit{Score: 1.0 (Failure)} \\
Could the Austrian casualties from Seven Years' War fit in Indianapolis Motor Speedway? & Can a person use a mechanical typewriter to write a document while the room is completely dark? \\
\midrule
\rowcolor{gray!10}
\textit{Score: 0.0 (Solved)} & \textit{Score: 1.0 (Failure)} \\
Was a person sold a Creative Commons License for Boticelli's The Birth of Venus ripped off? & Can a person complete a physical jigsaw puzzle in total darkness if they have already memorized the image? \\
\midrule
\rowcolor{gray!10}
\textit{Score: 0.0 (Solved)} & \textit{Score: 1.0 (Failure)} \\
Do Republicans reject all forms of welfare? & Can a person perform work on a physical object if the object is being moved solely by an electric motor powered by a fuel cell? \\
\midrule
\rowcolor{gray!10}
\textit{Score: 0.0 (Solved)} & \textit{Score: 1.0 (Failure)} \\
Would a giant panda find the primary ingredient of a traditional Neapolitan pizza suitable for its diet? & If a gourmet chef serves a dish containing 'bee bread' to a strict vegetarian, has the chef violated the diner's dietary restrictions? \\
\midrule
\rowcolor{gray!10}
\textit{Score: 0.0 (Solved)} & \textit{Score: 1.0 (Failure)} \\
Would an expensive tailor use adhesive to create a shorter hem on slacks? & Can a diesel-powered generator produce electricity if it is completely out of fuel? \\
\midrule
\rowcolor{gray!10}
\textit{Score: 0.0 (Solved)} & \textit{Score: 1.0 (Failure)} \\
Do Youtube viewers get unsolicited audiobook advice often? & Would a proofreader for the world's most popular video game magazine in 1993 have needed to review coverage of the Commodore 64? \\
\midrule
\rowcolor{gray!10}
\textit{Score: 0.0 (Solved)} & \textit{Score: 1.0 (Failure)} \\
Did England win any Olympic gold medals in 1800? & Could a proofreader for the world's most popular magazine in 1964 have reviewed an article about the premiere of the film 'The Sound of Music'? \\
\midrule
\rowcolor{gray!10}
\textit{Score: 0.0 (Solved)} & \textit{Score: 1.0 (Failure)} \\
Did J. D. Salinger ever ask his father for a quinceañera? & Did the author of the poem 'The Children's Hour' also write a collection of stories about a fictional village in the Acadian region of Canada? \\
\end{tabular}
\caption{Example StrategyQA-like reasoning problems generated by Random vs.\ Active strategies. Active sampling strategy consistently synthesizes "multi-hop" constraints that require bridging disparate knowledge domains.}
\label{tab:generated_examples_strategyqa}
\end{table}

\begin{table}[!ht]
\centering
\small
\begin{tabular}{p{0.45\linewidth}p{0.45\linewidth}}
\toprule
\textbf{Random Generation} & \textbf{Active Generation} \\
\midrule
\rowcolor{gray!10}
\textit{Score: 0.0 (Solved)} & \textit{Score: 1.0 (Failure)} \\
A whirligig spins at five times the speed of a thingamabob. A whatchamacallit spins eleven times faster than a thingamabob. A whatchamacallit spins at 121 meters per second. How fast does a whirligig spin? & A metal rod is heated at a constant rate of 15 degrees per minute. However, the rod loses heat to the environment at a rate proportional to its current temperature relative to the room: for every 10 degrees it is above the starting temperature of 70 degrees, it loses an additional 2 degrees of heating efficiency per minute. If the rod is heated for exactly 4 minutes, but the cooling loss only triggers at the start of each new minute based on the temperature at the end of the previous minute, what is the final temperature of the rod? \\
\midrule
\rowcolor{gray!10}
\textit{Score: 0.0 (Solved)} & \textit{Score: 1.0 (Failure)} \\
Zoey and Sydney are having a watermelon seed spitting contest. Whoever spits their seeds the most total distance wins. They each get one watermelon. Zoey's has 40 seeds and she spits each one 10 feet. Sydney's has 35 she spits each one 12 feet. What is the average total distance spat? & A custom rug designer is creating a large rectangular rug that is 12 feet wide and 18 feet long. The rug features a solid border around the entire edge that is 2 feet wide. Inside that border, the remaining space is divided into four identical square panels. If the rest of the interior space not covered by these four squares is filled with decorative fringe, how many square feet of decorative fringe are there? \\
\midrule
\rowcolor{gray!10}
\textit{Score: 0.0 (Solved)} & \textit{Score: 1.0 (Failure)} \\
Sarah has a rope that is 20 meters long. Her friend wants to buy the rope for \$2 a meter. Sarah plans to use the profit to buy a new rope, which at the store costs \$1.5 a meter. How much money will she have left over after she buys the new rope? & In five years, Peter will be twice as old as Quinton was three years ago. If Quinton is currently half as old as Peter was when Quinton was born, and Peter is 40 years old now, how old is Quinton? \\
\midrule
\rowcolor{gray!10}
\textit{Score: 0.0 (Solved)} & \textit{Score: 1.0 (Failure)} \\
Farmer Ben has 5 apple trees. Each tree produces 40 apples. He decides to keep 25 apples for his family and sell the rest at the local market. If he sells the apples in bags of 5 and charges \$3 per bag, how much money will he make? & A trivia contest has 20 questions. For every correct answer, a player earns 10 points. For every incorrect answer, they lose 5 points. For every question left unanswered, they lose 2 points. Sarah answered 15 questions in total. If she had 3 times as many correct answers as incorrect answers, but the score multiplier of 1.5x only applies to the total points earned from correct answers before deductions, what is her final score? \\
\midrule
\rowcolor{gray!10}
\textit{Score: 0.0 (Solved)} & \textit{Score: 1.0 (Failure)} \\
Farmer Leo has a rectangular garden that is 12 feet long and 8 feet wide. He wants to put a wooden fence around the entire perimeter, but he needs to leave a 4-foot gap for a gate. If the fencing material costs \$5 per foot, how much will it cost Leo to buy the fencing he needs? & Oliver is making custom jelly bean jars for a party. He has a large bag containing 300 jelly beans. He fills 8 small jars with 25 jelly beans each. He then eats 15 jelly beans while cleaning up. If he wants to split the remaining jelly beans equally between 2 large bowls, how many jelly beans will be in each bowl? \\
\bottomrule
\end{tabular}
\caption{Example GSM8K-like math problems generated by Random vs.\ Active strategies. Active sampling strategy consistently discovers harder problems requiring multi-step reasoning, system of equations, or layered constraints.}
\label{tab:generated_examples_gsm8k}
\end{table}

\clearpage
\section{Generation Prompts for Failure Discovery}
\label{appendix:prompts}

This appendix details the prompts used by our test case generators. All methods share a common base structure, with differences in whether they include: (1) topic constraints, (2) anchor examples from previously identified hard problems and (3) using tuned prompt features.

\begin{table}[!ht]
\centering
\resizebox{\textwidth}{!}{
\begin{tabular}{lcccc}
\toprule
\textbf{Method} & \textbf{is Generation} & \textbf{Topic} & \textbf{Anchor Examples} & \textbf{Tuned Prompt Features} \\
\midrule
Rand       & $\times$       & $\times$       & $\times$        & $\times$ \\
SS-RPF     & $\times$       & $\times$       & $\checkmark$    & $\times$ \\
SS-TPF     & $\times$       & $\times$       & $\checkmark$    & $\checkmark$ \\
\midrule
Rand-Gen   & $\checkmark$   & $\times$       & $\times$        & $\times$ \\
Rand-T-Gen & $\checkmark$   & $\checkmark$   & $\times$        & $\times$ \\
Rand-Anchor-Gen & $\checkmark$   & $\times$   & $\checkmark$       & $\times$ \\
SS-Gen-RPF & $\checkmark$   & $\times$       & $\checkmark$    & $\times$ \\
SS-Gen-TPF & $\checkmark$   & $\times$       & $\checkmark$    & $\checkmark$ \\
TSS-RPF    & $\checkmark$   & $\checkmark$   & $\checkmark$    & $\times$ \\
TSS-TPF    & $\checkmark$   & $\checkmark$   & $\checkmark$    & $\checkmark$ \\
\bottomrule
\end{tabular}
}
\caption{Comparison of failure discovery methods across different components.}
\label{tab:method-components}
\end{table}

\subsection{GSM8K Generator Prompts}

\subsubsection{Rand-Gen}
\begin{tcolorbox}[colback=gray!5,colframe=gray!50,title={Rand-Gen (No Topic, No Anchors)}]
\small
\begin{verbatim}
Generate a creative grade-school math word problem (GSM8K-style).

Requirements:
- Problem must require 2-3 steps of arithmetic to solve
- Answer must be a specific number
- Make the problem clear and unambiguous

IMPORTANT: You MUST solve the problem step-by-step yourself 
BEFORE providing the answer. Show your work in the "solution" 
field to verify the ground_truth is correct.

Output JSON format: {"question": "...", "solution": "Step 1: ... 
Step 2: ... Therefore the answer is X", "ground_truth": <number>}
\end{verbatim}
\end{tcolorbox}

\subsubsection{Rand-T-Gen}
\begin{tcolorbox}[colback=gray!5,colframe=gray!50,title={Rand-T-Gen (With Topic, No Anchors)}]
\small
\begin{verbatim}
Generate a creative grade-school math word problem (GSM8K-style).

TOPIC TO USE: {selected_topic_label}

Requirements:
- Problem must require 2-3 steps of arithmetic to solve
- Problem should be related to the topic above
- Answer must be a specific number
- Make the problem clear and unambiguous

IMPORTANT: You MUST solve the problem step-by-step yourself 
BEFORE providing the answer. Show your work in the "solution" 
field to verify the ground_truth is correct.

Output JSON format: {"question": "...", "solution": "Step 1: ... 
Step 2: ... Therefore the answer is X", "ground_truth": <number>}
\end{verbatim}
\end{tcolorbox}

\subsubsection{Rand-Anchor-Gen}
\begin{tcolorbox}[colback=gray!5,colframe=gray!50,title={Rand-Anchor-Gen (Random Anchors, No Topic)}]
\small
\begin{verbatim}
Generate a creative grade-school math word problem (GSM8K-style) 
that is similar to the following example problems.

=== HARD EXAMPLE PROBLEMS (AI models failed on these) ===
{anchor_context}

Requirements:
- Problem must require 2-3 steps of arithmetic to solve
- Answer must be a specific number
- Make the problem clear and unambiguous

IMPORTANT: You MUST solve the problem step-by-step yourself 
BEFORE providing the answer. Show your work in the "solution" 
field to verify the ground_truth is correct.

Output JSON format: {"question": "...", "solution": "Step 1: ... 
Step 2: ... Therefore the answer is X", "ground_truth": <number>}
\end{verbatim}
\end{tcolorbox}

\subsubsection{SS-Gen}
\label{app:ssgen_prompt}
\begin{tcolorbox}[colback=gray!5,colframe=gray!50,title={SS-Gen (No Topic, With Anchors)}]
\small
\begin{verbatim}
Generate a creative grade-school math word problem (GSM8K-style).

=== HARD EXAMPLE PROBLEMS (AI models failed on these) ===
{anchor_context}

Requirements:
- Problem must require 2-3 steps of arithmetic to solve
- MIMIC THE REASONING PATTERN of the hard examples above
- Answer must be a specific number
- Make the problem clear and unambiguous

IMPORTANT: You MUST solve the problem step-by-step yourself 
BEFORE providing the answer. Show your work in the "solution" 
field to verify the ground_truth is correct.

Output JSON format: {"question": "...", "solution": "Step 1: ... 
Step 2: ... Therefore the answer is X", "ground_truth": <number>}
\end{verbatim}
\end{tcolorbox}

\subsubsection{TSS}
\label{app:tssgenprompt}
\begin{tcolorbox}[colback=gray!5,colframe=gray!50,title={TSS (With Topic, With Anchors)}]
\small
\begin{verbatim}
Generate a creative grade-school math word problem (GSM8K-style).

TOPIC TO USE: {selected_topic_label}

=== HARD EXAMPLE PROBLEMS (AI models failed on these) ===
{anchor_context}

Requirements:
- Problem must require 2-3 steps of arithmetic to solve
- Problem should be related to the topic above
- MIMIC THE REASONING PATTERN of the hard examples above
- Answer must be a specific number
- Make the problem clear and unambiguous

IMPORTANT: You MUST solve the problem step-by-step yourself 
BEFORE providing the answer. Show your work in the "solution" 
field to verify the ground_truth is correct.

Output JSON format: {"question": "...", "solution": "Step 1: ... 
Step 2: ... Therefore the answer is X", "ground_truth": <number>}
\end{verbatim}
\end{tcolorbox}

\subsection{StrategyQA Generator Prompts}

\subsubsection{Rand-Gen}
\begin{tcolorbox}[colback=gray!5,colframe=gray!50,title={Rand-Gen (No Topic, No Anchors)}]
\small
\begin{verbatim}
Generate a creative yes/no (StrategyQA-style) reasoning question.

Requirements:
- Question must require 2-3 steps of multi-hop reasoning
- Answer must be YES or NO only
- Make the question clear and unambiguous

IMPORTANT: You MUST reason through the answer step-by-step 
yourself BEFORE providing it. Show your reasoning in the 
"reasoning" field to verify the ground_truth is correct.

Output JSON format: {"question": "...", "reasoning": "Step 1: ... 
Step 2: ... Therefore the answer is YES/NO", 
"ground_truth": "yes" or "no"}
\end{verbatim}
\end{tcolorbox}

\subsubsection{Rand-T-Gen}
\begin{tcolorbox}[colback=gray!5,colframe=gray!50,title={Rand-T-Gen (With Topic, No Anchors)}]
\small
\begin{verbatim}
Generate a creative yes/no (StrategyQA-style) reasoning question.

TOPIC TO USE: {selected_topic_label}

Requirements:
- Question must require 2-3 steps of multi-hop reasoning
- Question should be related to the topic above
- Answer must be YES or NO only
- Make the question clear and unambiguous

IMPORTANT: You MUST reason through the answer step-by-step 
yourself BEFORE providing it. Show your reasoning in the 
"reasoning" field to verify the ground_truth is correct.

Output JSON format: {"question": "...", "reasoning": "Step 1: ... 
Step 2: ... Therefore the answer is YES/NO", 
"ground_truth": "yes" or "no"}
\end{verbatim}
\end{tcolorbox}

\subsubsection{Rand-Anchor-Gen}
\begin{tcolorbox}[colback=gray!5,colframe=gray!50,title={Rand-Anchor-Gen (Random Anchors, No Topic)}]
\small
\begin{verbatim}
Generate a creative yes/no (StrategyQA-style) reasoning question.
that is similar to the following example problems.

=== HARD EXAMPLE PROBLEMS (AI models failed on these) ===
{anchor_context}

Requirements:
- Question must require 2-3 steps of multi-hop reasoning
- MIMIC THE REASONING PATTERN of the hard examples above
- Answer must be YES or NO only
- Make the question clear and unambiguous

IMPORTANT: You MUST reason through the answer step-by-step 
yourself BEFORE providing it. Show your reasoning in the 
"reasoning" field to verify the ground_truth is correct.

Output JSON format: {"question": "...", "reasoning": "Step 1: ... 
Step 2: ... Therefore the answer is YES/NO", 
"ground_truth": "yes" or "no"}
\end{verbatim}
\end{tcolorbox}

\subsubsection{SS-Gen}
\begin{tcolorbox}[colback=gray!5,colframe=gray!50,title={SS-Gen (No Topic, With Anchors)}]
\small
\begin{verbatim}
Generate a creative yes/no (StrategyQA-style) reasoning question.

=== HARD EXAMPLE QUESTIONS (AI models failed on these) ===
{anchor_context}

Requirements:
- Question must require 2-3 steps of multi-hop reasoning
- MIMIC THE REASONING PATTERN of the hard examples above
- Answer must be YES or NO only
- Make the question clear and unambiguous

IMPORTANT: You MUST reason through the answer step-by-step 
yourself BEFORE providing it. Show your reasoning in the 
"reasoning" field to verify the ground_truth is correct.

Output JSON format: {"question": "...", "reasoning": "Step 1: ... 
Step 2: ... Therefore the answer is YES/NO", 
"ground_truth": "yes" or "no"}
\end{verbatim}
\end{tcolorbox}

\subsubsection{TSS-Gen}
\begin{tcolorbox}[colback=gray!5,colframe=gray!50,title={TSS-Gen (With Topic, With Anchors)}]
\small
\begin{verbatim}
Generate a creative yes/no (StrategyQA-style) reasoning question.

TOPIC TO USE: {selected_topic_label}

=== HARD EXAMPLE QUESTIONS (AI models failed on these) ===
{anchor_context}

Requirements:
- Question must require 2-3 steps of multi-hop reasoning
- Question should be related to the topic above
- MIMIC THE REASONING PATTERN of the hard examples above
- Answer must be YES or NO only
- Make the question clear and unambiguous

IMPORTANT: You MUST reason through the answer step-by-step 
yourself BEFORE providing it. Show your reasoning in the 
"reasoning" field to verify the ground_truth is correct.

Output JSON format: {"question": "...", "reasoning": "Step 1: ... 
Step 2: ... Therefore the answer is YES/NO", 
"ground_truth": "yes" or "no"}
\end{verbatim}
\end{tcolorbox}

\subsection{Variable Descriptions}
\begin{itemize}
    \item \texttt{\{anchor\_context\}}: A formatted list of 3-5 example problems that the target model previously answered incorrectly, selected via our Superlevel Set acquisition strategy.
    \item \texttt{\{selected\_topic\_label\}}: A topic label extracted via BERTopic clustering (e.g., ``shopping, prices, discounts'' for GSM8K or ``history, war, presidents'' for StrategyQA).
\end{itemize}

\clearpage
\section{Source Data Selection Methods for Bayesian Quadrature}
\label{app:source_data_selection}

\subsection{Problem Setting}
Under Assumption 1 and 2, we have access to historical datasets $\mathcal{D} = \{D_i\}_{i=1}^N$ where each $D_i = \{(x_{j}, y_{ij})\}_{j=1}^{M_i}$ contains evaluation scores $y_{ij} \sim \mathcal{N}(f_i(x_{j}), \sigma^2)$ from model $i$ on question $x_j$. The key insight is that $\vy_i = [y_{ij}]_{j=1}^M$ forms a sample from a multivariate Gaussian $\mathcal{N}(\vu, \Sigma)$.

\textbf{The Source Selection Problem:} Given a new target model $f_*$ and a limited evaluation budget, which subset of source models $\mathcal{S} \subseteq \{1, \ldots, N\}$ should we use to estimate the prior $(\hat{\vu}, \hat{\Sigma})$?

\textbf{Why This Matters:}
\begin{itemize}
    \item Using all models assumes all $\vy_i$ come from the same GP prior, which is violated when the target is out-of-distribution (OOD).
    \item Using irrelevant models corrupts the prior estimate, leading to poor BQ predictions.
    \item Optimal selection balances having enough data against including dissimilar models.
\end{itemize}

\subsection{Source Selection Methods}

\subsubsection{Leave-One-Out Prior (Baseline)}
\textbf{Goal:} Use all available historical models \emph{excluding the target} to estimate the GP prior, without further selection.

\textbf{Mathematical Formulation:} Given $N$ historical models with the target as model $*$, the source set includes all except the target: 
\begin{equation}
\mathcal{S} = \{1, \ldots, N\} \setminus \{*\}
\end{equation}
The prior estimates become:
\begin{equation}
\hat{\vu} = \frac{1}{|\mathcal{S}|}\sum_{i \in \mathcal{S}} \vy_i, \quad \hat{\Sigma} = \frac{1}{|\mathcal{S}|-1}\sum_{i \in \mathcal{S}}(\vy_i - \hat{\vu})(\vy_i - \hat{\vu})^\top
\end{equation}

\begin{tcolorbox}[colback=blue!5,colframe=blue!50!black,title={Important: Target Always Excluded}]
The target model is never included in the source set. This prevents data leakage and ensures the prior is estimated from independent data only.
\end{tcolorbox}

\textbf{Intuition:} This is the simplest approach—no selection, just use everything except the target. It works well when all models share similar behavior, but may include outliers that corrupt the prior.

\textbf{Abstention Rule:} Can combine with Spearman correlation-based abstention:
\begin{equation}
\rho_{\max} = \max_{i \in \mathcal{S}} \;\text{spearman}(\vy_*, \vy_i)
\end{equation}
where $\vy_*$ and $\vy_i$ are per-question prediction vectors. If $\rho_{\max} < \tau$, abstain.

\subsubsection{GMM Clustering}
\textbf{Goal:} Identify models with similar ``behavior profiles'' by clustering in feature space.

\textbf{Mathematical Formulation:} Let $\Phi = [\phi_1, \ldots, \phi_N]^\top \in \mathbb{R}^{N \times d}$ be the feature matrix where $\phi_i$ represents model $i$'s behavior (e.g., PCA-reduced per-question predictions on reference benchmarks). Fit a Gaussian Mixture Model with $K$ components:
\begin{equation}
p(\phi) = \sum_{k=1}^K \pi_k \cdot \mathcal{N}(\phi \mid \mu_k, \Sigma_k)
\end{equation}
Select $K$ via BIC minimization:
\begin{equation}
\text{BIC}(K) = -2\log p(\Phi \mid \theta_K) + d_K \log N
\end{equation}
where $d_K$ is the number of free parameters. The source selection becomes:
\begin{equation}
\mathcal{S} = \{i : z_i = z_*\} \setminus \{*\}
\end{equation}
where $z_i = \arg\max_k p(z_i = k \mid \phi_i)$ is the cluster assignment.

\textbf{Intuition:} Models in the same cluster have similar prediction patterns on reference benchmarks, suggesting they sample from a common GP prior. 

\textbf{Abstention Rule:} Abstain if the GMM selects fewer than \texttt{min\_sources} models (default=3). In 78 experiments, there were 17 abstentions (22\%) when the cluster had $<3$ models. This dramatically improves reliability—mean MAE drops from 0.0394 $\rightarrow$ 0.0274.

\subsubsection{Correlation-Based Selection}
\textbf{Goal:} Select models whose per-question predictions are highly correlated with the target.

\textbf{Mathematical Formulation:} Use Spearman rank correlation for robustness to outliers: $\rho_{i*} = \text{spearman}(\vy_i, \vy_*)$. Select top-$k$ models by Spearman correlation with the target:
\begin{equation}
\mathcal{S} = \text{top-}k \{i : \rho_{i*}\}
\end{equation}

\subsubsection{Mahalanobis Distance Selection}
\textbf{Goal:} Select models geometrically closest to the target in the feature distribution.

\textbf{Mathematical Formulation:} Given feature vectors $\phi_i$ and empirical covariance $\hat{\Sigma}_\Phi$:
\begin{equation}
d_{\text{Mah}}(\phi_i, \phi_*) = \sqrt{(\phi_i - \phi_*)^\top \hat{\Sigma}_\Phi^{-1} (\phi_i - \phi_*)}
\end{equation}
Select top-$k$ models with the smallest distance. 

\textbf{OOD Detection Variant:} Test if target is OOD via 
$d_{\text{Mah}}^2(\phi_*, \hat{\mu}_\Phi) \sim \chi^2_d \implies$ abstain if $$d_{\text{Mah}}^2(\phi_*, \hat{\mu}_\Phi) > \chi^2_{d, 1-\alpha}.$$

\subsubsection{Leave-One-Out Likelihood Selection}
\textbf{Goal:} Select models that are ``typical'' under the overall model distribution.

\textbf{Mathematical Formulation:} For each candidate model $i$, fit a Gaussian on all other models:
\begin{equation}
\mu_{-i} = \frac{1}{N-1}\sum_{j \neq i} \phi_j, \quad \Sigma_{-i} = \frac{1}{N-2}\sum_{j \neq i}(\phi_j - \mu_{-i})(\phi_j - \mu_{-i})^\top
\end{equation}
Compute log-likelihood $\ell_i = \log \mathcal{N}(\phi_i \mid \mu_{-i}, \Sigma_{-i})$ and select the top-$k$ models with the highest $\ell_i$.

\subsubsection{Hypothesis Test Selection}
\textbf{Goal:} Select models that statistically cannot be distinguished from the target.

\textbf{Mathematical Formulation:} For target $\phi_*$ and candidate $\phi_i$, test if they come from the same distribution using Hotelling's $T^2$:
\begin{equation}
T^2 = (\phi_* - \phi_i)^\top \hat{\Sigma}_{\text{combined}}^{-1} (\phi_* - \phi_i) \sim \chi^2_d
\end{equation}
Source Selection: $\mathcal{S} = \{i : p\text{-value}(T^2_i) > \alpha\}$. 
\emph{Note:} In our experiments, Hotelling's $T^2$ had no power with $d=12$ features and $n=2$ samples, producing identical results across candidates.

\subsubsection{Mardia's Multivariate Normality Test}
\textbf{Goal:} Select models such that combined data satisfies the GP assumption of joint normality.

\textbf{Mathematical Formulation:} For combined data $X = [\phi_*, \phi_i]^\top$, compute Mardia's skewness and kurtosis. Select top-$k$ models by combined $p$-value (Fisher's method):
\begin{equation}
\chi^2_{\text{combined}} = -2(\log p_{\text{skew}} + \log p_{\text{kurt}}) \sim \chi^2_4
\end{equation}
\emph{Note:} The combined normality assumption rarely holds in practice—this method proved too strict for practical abstention.

\clearpage
\subsection{Experimental Results}

\textbf{Experimental Setup:} 
We evaluated across 6 benchmarks (\texttt{gsm8k}, \texttt{svamp}, \texttt{strategyqa}, \texttt{mmlu}, \texttt{jigsaw}, \texttt{toxicchat}) using 13-15 LLMs per benchmark. Features were PCA-reduced per-question predictions retaining 95\% variance ($\sim$12 components). The metric is the Mean Absolute Error (MAE) between the BQ-SF estimate and true accuracy at 20 evaluation iterations.

\begin{table}[h]
\centering
\caption{Overall Performance (78 Experiments, MAE at Iteration 20). Sorted by median MAE (lower is better).}
\label{tab:source_selection_overall}

\begin{tabular}{lccccc}
\toprule
\textbf{Method} & \textbf{N} & \textbf{Abstain} & \textbf{Mean MAE} & \textbf{Median MAE} & \textbf{Std MAE} \\
\midrule
GMM + min$\geq$3 & 61 & 17 & 0.0274 & \textbf{0.0109} & 0.0421 \\
GMM (no abstention) & 78 & 0 & 0.0394 & 0.0125 & 0.0584 \\
LOO Prior + spearman$\geq$0.7 & 36 & 42 & 0.0170 & 0.0120 & 0.0175 \\
LOO Prior & 78 & 0 & 0.0277 & 0.0131 & 0.0386 \\
LOO Prior + spearman$\geq$0.5 & 52 & 26 & 0.0235 & 0.0134 & 0.0301 \\
Mardia & 78 & 0 & 0.0406 & 0.0145 & 0.0610 \\
Hypothesis Test & 78 & 0 & 0.0284 & 0.0147 & 0.0363 \\
Mahalanobis & 78 & 0 & 0.0526 & 0.0152 & 0.1071 \\
LOO Likelihood & 78 & 0 & 0.0660 & 0.0158 & 0.1315 \\
Correlation & 78 & 0 & 0.0482 & 0.0162 & 0.0973 \\
Random & 78 & 0 & 0.0408 & 0.0316 & 0.0355 \\
\bottomrule
\end{tabular}
\end{table}

\begin{table}[h]
\centering
\caption{Pearson vs Spearman Correlation Comparison.}
\label{tab:pearson_vs_spearman}
\begin{tabular}{llccccc}
\toprule
\textbf{Correlation} & $\tau$ & \textbf{N} & \textbf{Abstain \%} & \textbf{Mean MAE} & \textbf{Median MAE} & \textbf{Std MAE} \\
\midrule
Pearson & 0.3 & 70 & 10\% & 0.0269 & 0.0131 & 0.0375 \\
Spearman & 0.3 & 68 & 13\% & 0.0281 & 0.0131 & 0.0383 \\
Pearson & 0.5 & 57 & 27\% & 0.0287 & 0.0152 & 0.0371 \\
Spearman & 0.5 & 52 & 33\% & 0.0235 & 0.0134 & 0.0301 \\
Pearson & 0.7 & 42 & 46\% & 0.0239 & 0.0134 & 0.0321 \\
Spearman $\star$ & 0.7 & 36 & 54\% & 0.0170 & 0.0120 & 0.0175 \\
Pearson & 0.9 & 27 & 65\% & 0.0137 & 0.0085 & 0.0141 \\
Spearman & 0.9 & 22 & 72\% & 0.0131 & 0.0099 & 0.0112 \\
\bottomrule
\end{tabular}
\end{table}

\textbf{Key Observations:}
\begin{enumerate}
    \item \textbf{GMM with abstention achieves best overall performance:} GMM + min$\geq$3 yields the best median MAE (0.0109). It abstains when the cluster has fewer than 3 models (22\% of cases).
    \item \textbf{Abstention dramatically improves GMM reliability:} Key insight is that low-selection cases (1-2 models) are failure modes. Abstention drops the mean MAE by 30\% and Std by 28\%.
    \item \textbf{Spearman-based abstention for LOO Prior achieves lowest variance:} Spearman is more robust to outliers in per-question predictions than Pearson. At $\tau \geq 0.7$, it achieves the lowest standard deviation (0.0175), though with a 54\% abstention rate.
    \item \textbf{Advanced selection methods underperform:} Methods selecting fewer models (top-$k=5$) have higher variance than using all models. Over-selectivity means that when the selection is wrong, the limited sources provide insufficient data to form a stable prior.
\end{enumerate}

\clearpage
\subsection{Recommendations}

\begin{table}[h]
\centering
\caption{Method Comparison Summary.}
\begin{tabular}{lcccl}
\toprule
\textbf{Method} & \textbf{Median MAE} & \textbf{Std} & \textbf{Abstain} & \textbf{Recommendation} \\
\midrule
GMM + min$\geq$3 & \textbf{0.0109} & 0.0421 & 22\% & Best overall choice \\
GMM (no abstention) & 0.0125 & 0.0584 & 0\% & High risk/reward \\
LOO Prior & 0.0131 & 0.0386 & 0\% & Safe default \\
LOO Prior + corr$\geq$0.7 & 0.0134 & \textbf{0.0321} & 46\% & Lowest variance \\
Hypothesis Test & 0.0147 & 0.0363 & 0\% & Best among advanced \\
Random & 0.0316 & 0.0355 & 0\% & Baseline only \\
\bottomrule
\end{tabular}
\end{table}

Based on the verified results, \textbf{GMM + min$\geq$3} abstention is the optimal choice for combining good selection with explicit failure detection. If abstention is not acceptable for the pipeline, the \textbf{LOO Prior} baseline serves as a safe default. 

\begin{verbatim}
# Recommended Pipeline Implementation
from auto_data_selection import auto_select_with_abstention

source_models, should_abstain = auto_select_with_abstention(
    reference_benchmarks, target_model, data_dir, min_sources=3
)

if should_abstain:
    # Fall back to a default estimate or skip prediction
    pass
else:
    # Run BQ with selected source models
    pass
\end{verbatim}

\textbf{Future Directions:} Future improvements could include learning adaptive abstention thresholds per benchmark category, ensembling multiple selection methods, online adaptation as evaluations arrive, and deriving theoretical regret bounds for these selection strategies.

\end{document}